\declaretheorem[name=Lemma,numberwithin=section]{lemma}
\declaretheorem[name=Proposition,numberwithin=section]{prop}
\setlist[itemize]{noitemsep, topsep=0pt}
\setlist[enumerate]{itemsep=5pt, topsep=5pt, leftmargin=25pt}
\newtheorem{theorem}{Theorem}
\definecolor{verylightblue}{rgb}{0.7,0.8,1}
  {\begin{mdframed}[backgroundcolor=verylightblue]\begin{theorem}}%
  {\end{theorem}\end{mdframed}}
\definecolor{verylightgray}{gray}{0.95}
  {\begin{mdframed}[backgroundcolor=verylightgray]\begin{proof}}%
  {\end{proof}\end{mdframed}}
\definecolor{verylightred}{rgb}{1,0.8,0.8}
  {\begin{mdframed}[backgroundcolor=verylightred]\begin{lemma}}%
  {\end{lemma}\end{mdframed}}
\newtheorem{proposition}{Proposition}
  {\begin{mdframed}[backgroundcolor=verylightblue]\begin{proposition}}%
  {\end{proposition}\end{mdframed}}
\theoremstyle{definition}
\theoremstyle{remark}
\definecolor{DarkRed}{rgb}{0.5,0.1,0.1}
\definecolor{DarkBlue}{rgb}{0.1,0.1,0.5}
\DeclareMathOperator*{\argmax}{arg\,max}
\DeclareMathOperator*{\argmin}{arg\,min}
\newcommand{\E}{\mathbb{E}}
\def\R{\mathbb{R}}
\renewcommand{\S}{\mathcal{S}}
\newcommand{\ignore}[1]{}
\newcommand{\hDelta}{\widehat{\Delta}}
\newcommand{\ihat}{\hat{\imath}}
\newcommand{\thetaswap}{\theta^{\mathrm{swap}}}
\newcommand{\thedate}{\today}
\newcommand{\theauthor}{Tijana Zrnic$^1$ \qquad \qquad William Fithian$^2$\\
$^1$Stanford Data Science and Department of Statistics, Stanford University\\
$^2$Department of Statistics, University of California, Berkeley
}
\newcommand{\thetitle}{A Flexible Defense Against the Winner's Curse}
\date{\thedate}
\author{\theauthor}
\title{\thetitle}
\def\X{X}
\def\I{\mathcal{I}}
\def\tvec{\mathbf{t}}
\def\rvec{\mathbf{r}}
\newcommand{\dotfrac}[2]{
\mathchoice
{\ooalign{$\genfrac{}{}{0pt}{0}{#1}{#2}$\cr\leavevmode\cleaders\hb@xt@ .22em{\hss $\displaystyle\cdot$\hss}\hfill\kern\z@\cr}}
{\ooalign{$\genfrac{}{}{0pt}{1}{#1}{#2}$\cr\leavevmode\cleaders\hb@xt@ .22em{\hss $\textstyle\cdot$\hss}\hfill\kern\z@\cr}}
{\ooalign{$\genfrac{}{}{0pt}{2}{#1}{#2}$\cr\leavevmode\cleaders\hb@xt@ .22em{\hss $\scriptstyle\cdot$\hss}\hfill\kern\z@\cr}}
{\ooalign{$\genfrac{}{}{0pt}{3}{#1}{#2}$\cr\leavevmode\cleaders\hb@xt@ .22em{\hss $\scriptscriptstyle\cdot$\hss}\hfill\kern\z@\cr}}
}
\long\def\@makecaption#1#2{
        \vskip 0.8ex
        \setbox\@tempboxa\hbox{\small {\bf #1:} #2}
        \parindent 1.5em  
        \dimen0=\hsize
        \advance\dimen0 by -3em
        \ifdim \wd\@tempboxa >\dimen0
                \hbox to \hsize{
                        \parindent 0em
                        \hfil 
                        \parbox{\dimen0}{\def\baselinestretch{0.96}\small
                                {\bf #1.} #2
                                } 
                        \hfil}
        \else \hbox to \hsize{\hfil \box\@tempboxa \hfil}
        \fi
        }
\begin{document}

\maketitle

\begin{abstract}
Across science and policy, decision-makers often need to draw conclusions about the best candidate among competing alternatives. For instance, researchers may seek to infer the effectiveness of the most successful treatment or determine which demographic group benefits most from a specific treatment. Similarly, in machine learning, practitioners are often interested in the population performance of the model that performs best empirically. However, cherry-picking the best candidate leads to the winner's curse: the observed performance for the winner is biased upwards, rendering conclusions based on standard measures of uncertainty invalid. We introduce the \emph{zoom correction}, a novel approach for valid inference on the winner. Our method is flexible: it can be employed in both parametric and nonparametric settings, can handle arbitrary dependencies between candidates, and automatically adapts to the level of selection bias. The method easily extends to important related problems, such as inference on the top $k$ winners, inference on the value and identity of the population winner, and inference on ``near-winners.''
\end{abstract}

\section{Introduction}

Data science and machine learning frequently grapple with selection bias. Whether it is reporting the largest treatment effect, choosing the model hyperparameters that perform best empirically, or identifying the most successful intervention, choosing ``the winner'' tends to result in overly optimistic conclusions. For example, the largest observed treatment effect is likely to overestimate the true effect; excessive tuning of model hyperparameters may result in poor population performance. To ensure trustworthy conclusions, it is essential to quantify and correct for the bias introduced by the selection.

Mathematically, the problem can be framed as follows: given noisy estimates $X_1,\dots,X_m$, for $m$ different candidates, infer the mean of estimate $\ihat = \argmax_{i\in[m]} X_i$. For example, $X_i$ may represent the observed effect of treatment $i$ or the empirical performance of model $i$. Inference on the winner's mean must account for the {\em winner's curse}: that $X_{\ihat}$ tends to be systematically biased upwards, especially when there are a large number of close competitors for the title of winner.   

Although the winner's curse is well-known, the statistical problem of correcting for it remains an active area of research, with significant progress made in recent years  \cite{andrews2019inference, andrews2022inference, benjamini2019confidence,fuentes2018confidence, zrnic2022locally}. Moreover, there is still room for improvement. Designing methods that avoid strong distributional assumptions, such as independence or Gaussianity of $X_i$, and that adapt to the selection bias in the actual data rather than assuming the worst case, has proven challenging.

We seek to address these challenges by introducing the \emph{zoom correction}, a method for valid inference on the winner that is flexible in its applicability and adaptive to the data at hand. Our method can exploit knowledge of the error distribution if it is known, but can also operate under nonparametric assumptions with unknown, arbitrary dependence between the $X_i$. It is adaptive in the sense that it adjusts for the selection bias in the problem at hand, rather than a hypothetical worst case, by making  a more modest adjustment when there are fewer competitive candidates. In particular, as the gap between the winner and all other candidates increases, the method approaches standard, uncorrected inference.

To briefly sketch the method, it first forms a simultaneous confidence region for all $m$ candidates and then simply projects this set along the winning coordinate $\ihat$ to obtain a confidence interval for the winner. The key innovation lies in a careful design of the simultaneous confidence region, such that the subsequent projection is both computationally tractable and tight. We form the confidence region by inverting what we call the \emph{zoom test}, a hypothesis test that ``zooms in''---i.e., focuses its statistical power---around the best candidates.

Given its robust foundations, our method can be easily generalized and extended to several problems related to inference on the winner. These include inference on the top $k$ winners, inference on the value and identity of the population winner, and inference on ``near-winners.''

\subsection{Problem setup}

We introduce the problem formally. Let $\X = (X_1,\dots,X_m)$ be an $m$-dimensional random vector of outcomes, and let $\theta = (\theta_1,\dots,\theta_m)= \E[X]$ denote its mean. Our goal is to infer $\theta_{\ihat}$, where $\ihat$ is the index of the \emph{winner}:
\begin{equation}
\label{eq:main_problem}
\ihat = \argmax_{i\in[m]} \X_i.
\end{equation}
That is, we want a confidence interval $\widehat C_{\ihat}^\alpha$ such that $P(\theta_{\ihat} \in \widehat C_{\ihat}^\alpha)\geq 1-\alpha$ for a target level $1-\alpha$.

Our main working assumption is a known tail bound on the errors $\xi = X-\theta$;
specifically, we assume that for any vector $v\in\R_+^m$, we have a continuous, coordinate-wise decreasing function $\mathbf{S}(v)$ such that
\[P\left(\exists i \in[m]:|\xi_i|> v_i\right) \leq \mathbf{S}(v).\]
If the error distribution is known, $\mathbf{S}(v)$ can be calculated exactly; otherwise it can be obtained by applying a union bound to marginal error bounds: $\mathbf{S}(v) = \sum_{i=1}^m S_i(v_i)$, where $S_i(v_i) \geq P(|\xi_i|> v_i)$. In the second construction, we need make no assumptions about the dependence between the errors $\xi_j$.

To keep the exposition clear, the problem \eqref{eq:main_problem} will be the focus of our main technical sections. However, our tools are also applicable to related problems. In Section \ref{sec:extensions} we provide several generalizations and extensions of our results, including to inference on the top $k$ winners, inference on the value $\theta^* = \max_{i\in[m]} \theta_i$ and identity $i^* = \argmax_{i\in[m]} \theta_i$ of the \emph{population} winner, and inference on ``near-winners.''

\subsection{Simultaneous inference}
The key methodological framework underpinning our method is \emph{simultaneous inference}~\cite{miller1981simultaneous, dickhaus2014simultaneous}. This is a general criterion that \emph{all} possible inferential targets be covered simultaneously; in our case, this corresponds to covering all coordinates of $\theta$. Formally, a set $\widehat C^\alpha$ is a simultaneous confidence region for $\theta\in\R^m$ at level $1-\alpha$ if
\[P(\theta \in \widehat C^\alpha)\geq 1-\alpha.\]
One standard way of obtaining $\widehat C^\alpha$ is to perform a Bonferroni correction, a.k.a. a union bound: $\widehat C^\alpha = \widehat C_{1}^{\alpha/m} \times \widehat C_{2}^{\alpha/m} \times \dots \times \widehat C_m^{\alpha/m}$, where $\widehat C_i^q$ denotes any confidence interval for $\theta_i$ valid marginally at level~$1-q$.

Given any such confidence region, we can perform inference on the winner \eqref{eq:main_problem} simply by returning the projection of $\widehat C^\alpha$ along coordinate $\ihat$: $\widehat C_{\ihat}^\alpha = \{\theta_{\ihat}: \theta \in \widehat C^\alpha\}$. By implication, we know 
\[P(\theta_{\ihat} \in \widehat C^\alpha_{\ihat}) \geq P(\theta \in \widehat C^\alpha) \geq 1-\alpha.\]
This is a major appeal of simultaneous inference: no matter what property $g(\theta)$ we want to infer, we know that $\widehat C^\alpha_g = \{g(\theta) : \theta\in \widehat C^\alpha\}$ covers the target of interest---even if $g$ is data-dependent. This flexibility is what allows our approach to be applicable to a number of related problems, including inference on the top $k$ winners, inference on the value and index of the population winner, and so on.

Another appealing aspect of simultaneous inference is that it can be performed in very general, nonparametric settings. For example, as long as we know how to construct valid marginal intervals, the Bonferroni correction is valid under no further assumptions.

\subsection{Related work}

While the other problems in Section \ref{sec:extensions} are not as well studied, there have been a number of prior approaches to inference on the winner and the top $k$ winners, which we review below.

\paragraph{Solutions based on ``focusing'' simultaneous inference.}
There are several works that study inference on winners by ``focusing'' simultaneous inference \cite{venter1988confidence, fuentes2018confidence, benjamini2019confidence, zrnic2022locally}. These works can be interpreted as providing simultaneously valid intervals for all coordinates of $\theta$, such that the intervals are tight for the selected coordinates and infinite for the non-selected coordinates. Our work similarly focuses power around the winner, however it comes with the option of returning finite intervals for non-winners as well. The intervals are reasonably tight and informative for ``near-winners.''

Most related to our work is \emph{locally simultaneous inference} (LSI) \cite{zrnic2022locally}. Like our approach, LSI provides simultaneous inference for $\theta$ while focusing power around the selected candidate $\ihat$. However, our approach is carefully tailored to the problem of inference on the winner and is thus more powerful than LSI, which is a more general paradigm. Moreover, LSI relies on choosing an error budget splitting parameter, and because of this error split it does not recover uncorrected inference when the winner stands out. Our approach comes with no tuning parameters; it automatically adapts to the difficulty of the problem and recovers uncorrected inference in the extreme case.

The other approaches \cite{venter1988confidence,fuentes2018confidence,benjamini2019confidence}, designed specifically for inference on the top $k$ winners, require independent observations and their confidence interval widths are not adaptive to the data at hand. For example, they do not depend on how obvious the winner is, and they do not approach uncorrected inference when the non-winners are clearly suboptimal.

\paragraph{Conditional solutions.}
Another family of solutions is based on \emph{conditional inference} \cite{fithian2014optimal}. These approaches rely on characterizing the distribution of $X$ conditional on the event $\{\ihat = i\}$ for any given $i$. Since this characterization is non-trivial in general, conditional approaches focus on parametric exponential families. The event $\{\ihat = i\}$ can be formulated as a polyhedral constraint on $X$. Therefore, in the Gaussian case one can apply conditional inference based on the polyhedral lemma \cite{lee2016exact}. However, the intervals obtained through this approach are often very large, especially when there are several competitive candidates; in fact, \citet{kivaranovic2021length} showed that these intervals have infinite expected length.
\citet{andrews2019inference} introduced a refinement of the polyhedral approach, called \emph{hybrid inference}. The hybrid method begins by constructing simultaneous intervals for all candidates. Then, it implements a correction conditional on both the selected target and the event that the intervals constructed in the first step cover the target. Hybrid inference leads to significant power gains over the polyhedral approach, though it still relies on Gaussianity with a known (or estimable) covariance. Like LSI, the hybrid approach also comes with an error budget splitting parameter, which prevents it from recovering uncorrected inference when the winner stands out.

\paragraph{Randomized approaches.} Finally, one can apply randomized approaches to perform inference on the winner, including data splitting and noise addition \cite{tian2018selective,zrnic2020post, rasines2023splitting, leiner2023data, neufeld2024data}. These approaches allow trading off the selection quality for an increase in statistical power. Instead of picking the exact winner, these methods select a ``noisy winner.'' Of course, randomization diminishes the quality of the selected candidate. We focus on exact, non-randomized selection of the winner.

\section{Zoom test}
\label{sec:test}

At a high level, our approach consists of the following steps:
\begin{enumerate}
\item Define a point hypothesis test for the mean of $X$, $H_0(\theta):\E[X]=\theta$, for each $\theta \in \R^m$.
\item Invert the test to obtain the simultaneous confidence region $\widehat C^\alpha(X) = \{\theta \in \R^m:\; H_0(\theta) \text{ not rejected}\}$.
\item Project $\widehat C^\alpha$ along the winning coordinate: $\widehat C_{\ihat}^\alpha = \{\theta_{\ihat}:\theta \in \widehat C^\alpha\}$.
\end{enumerate}
In this section we define the hypothesis test, which we call the \emph{zoom test}. In the following section we will show how to invert it and project the set along coordinate $\ihat$.

The zoom test at a fixed $\theta\in\R^m$ is a test for the null $H_0(\theta): \E[X] = \theta$. 
We denote by $A_\alpha(\theta)$ the acceptance region of the test at level $1-\alpha$.
When $\theta$ is clear from the context, we will keep the dependence on $\theta$ implicit and simply write $H_0$ and $A_\alpha$. We will do the same for the other quantities defined in this section: in the first mention we will expose their dependence on $\theta$, after which we will silently drop $\theta$ from the notation.

We construct $A_\alpha$ so as to guarantee that $P(\X \in A_\alpha) \geq 1-\alpha$.
To define $A_\alpha$, let $\Delta_j(\theta) = \theta^* - \theta_j$ be the {\em suboptimality} of $\theta_j$, where $\theta^* = \max_{i\in[m]} \theta_i$. Note that $\min_j \Delta_j = 0$, since the population winner has suboptimality zero.
Intuitively, the idea behind $A_\alpha$ is to focus the statistical power around the most promising candidates. This is done by forcing $A_\alpha$ to be large around coordinates $j$ with large $\Delta_j$, allowing us to allocate the error budget only to the coordinates $j$ with small $\Delta_j$. In other words, the method seeks to ignore the suboptimal coordinates of $\theta$, effectively reducing the multiplicity of the problem.

For a carefully chosen value $r_\alpha(\theta)$, we define the region
\begin{align*}
A_\alpha &= \left[\theta \pm \left(r_\alpha\vee\frac \Delta 2\right)\right]\\
&= \left[\theta_1 - \left(r_\alpha \vee \frac{\Delta_1}{2}\right), \theta_1 + \left(r_\alpha \vee \frac{\Delta_1}{2}\right)\right] \times \cdots \times \left[\theta_m - \left(r_\alpha \vee \frac{\Delta_m}{2}\right), \theta_m + \left(r_\alpha \vee \frac{\Delta_m}{2}\right)\right],
\end{align*}
where the radius in coordinate $j$ is the \emph{maximum} of $r_\alpha$ and the suboptimality: $(r_\alpha\vee\frac{\Delta_j}{2}) = \max\{r_\alpha, \frac{\Delta_j}{2}\}$. We choose $r_\alpha$ to ensure that $A_\alpha$ is indeed a $1-\alpha$ level acceptance region:
\begin{equation}
\label{eq:active-radius}
r_\alpha = \min \left\{r:\; \mathbf{S}\left(r\vee\frac \Delta 2\right) \leq \alpha \right\}.
\end{equation}
We call $r_\alpha$ the {\em active radius}.
Thus, $A_\alpha$ is a rectangle centered at $\theta$ whose width in its $j$th coordinate is forced to be no less than the suboptimality of candidate $j$. Define the {\em active indices} $\I_\alpha(\theta)$ as the set of indices whose width is $r_\alpha$:
\[
\I_\alpha = \{j:\; \Delta_j \leq 2r_\alpha\}.
\]
We let $\I_\alpha^c = [m]\setminus \I_\alpha$ denote the inactive indices.  It should be clear from the definition that $r_\alpha$, which depends on $\theta$ only through $\Delta$, is non-increasing in $\alpha$ and each $\Delta_j$. At one extreme, where $\Delta \equiv 0$ (that is, if $\theta_1=\cdots=\theta_m$), $r_\alpha$ coincides with the radius of the fully simultaneous confidence intervals for $\theta_1,\cdots,\theta_m$. At the other extreme, where there is a single maximum $\theta_k> \max_{j\neq k}\theta_j$ and $\min_{j\neq k} \Delta_j \to \infty$, $r_\alpha$ converges to the radius of a marginal confidence interval for $\theta_k$.
By construction, we have the following.

\begin{prop}[Zoom test]\label{prop:test}
The test that accepts when $X\in A_\alpha(\theta)$ and rejects otherwise is a valid test for the null $H_0:\E[X]=\theta$ at level $1-\alpha$.
\end{prop}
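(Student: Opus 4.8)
The plan is to verify directly that, under the null $H_0:\E[X]=\theta$, the rectangle $A_\alpha(\theta)$ captures $X$ with probability at least $1-\alpha$. First I would rewrite the failure event. Since $A_\alpha = [\theta\pm(r_\alpha\vee\frac{\Delta}{2})]$ is a product of closed intervals centered at $\theta$, the complement $\{X\notin A_\alpha\}$ is precisely $\{\exists\, i\in[m]:\ |X_i-\theta_i|>r_\alpha\vee\frac{\Delta_i}{2}\}$. Under $H_0$ we have $X-\theta=\xi$, so this event equals $\{\exists\, i:\ |\xi_i|>(r_\alpha\vee\frac{\Delta}{2})_i\}$. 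The vector $v:=r_\alpha\vee\frac{\Delta}{2}$ has nonnegative coordinates, hence lies in $\R_+^m$, so the working tail-bound assumption applies and gives
\[
P\!\left(X\notin A_\alpha(\theta)\right)\;=\;P\!\left(\exists\, i:\ |\xi_i|>v_i\right)\;\le\;\mathbf{S}(v)\;=\;\mathbf{S}\!\left(r_\alpha\vee\tfrac{\Delta}{2}\right).
\]

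It then remains to show $\mathbf{S}(r_\alpha\vee\frac{\Delta}{2})\le\alpha$, i.e.\ that the minimum defining $r_\alpha$ in \eqref{eq:active-radius} is genuinely attained. For this I would note that the scalar map $r\mapsto\mathbf{S}(r\vee\frac{\Delta}{2})$ is continuous (a composition of the continuous map $r\mapsto r\vee\frac{\Delta}{2}$ with the continuous $\mathbf{S}$) and non-increasing in $r$ (each coordinate $r\vee\frac{\Delta_j}{2}$ is non-decreasing in $r$, and $\mathbf{S}$ is coordinate-wise decreasing). Consequently $\{r\ge0:\ \mathbf{S}(r\vee\frac{\Delta}{2})\le\alpha\}$ is a closed half-line, so, provided it is nonempty, its infimum $r_\alpha$ belongs to it; nonemptiness holds because $\mathbf{S}(v)\to0$ as all coordinates of $v$ tend to infinity, which is automatic for the union-bound construction $\mathbf{S}(v)=\sum_i S_i(v_i)$ with $S_i(v_i)\to0$ and holds for any meaningful error bound. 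Hence $r_\alpha$ is finite and $\mathbf{S}(r_\alpha\vee\frac{\Delta}{2})\le\alpha$, and combining with the display yields $P(X\in A_\alpha(\theta))\ge1-\alpha$, which is exactly the claimed validity.

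The only delicate point is this last step: confirming that the minimum in \eqref{eq:active-radius} is attained rather than merely approached, since a sup/inf mismatch there would cost an arbitrarily small amount in the level. Everything else is a one-line rewriting of the acceptance event in terms of the errors followed by a direct application of the assumed tail bound $\mathbf{S}$. (If one prefers not to posit $\mathbf{S}(v)\to 0$, one can read \eqref{eq:active-radius} as an infimum, use continuity to pass to the limit, and note that the statement is vacuous when no finite $r$ makes $\mathbf{S}(r\vee\frac{\Delta}{2})\le\alpha$.)
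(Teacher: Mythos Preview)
Your proof is correct and matches the paper's approach: the paper treats this proposition as immediate ``by construction,'' and you have simply unpacked what that means---rewriting $\{X\notin A_\alpha(\theta)\}$ as an event on the errors $\xi$ and applying the defining property of $\mathbf{S}$ together with the choice of $r_\alpha$ in \eqref{eq:active-radius}. Your additional care about the minimum in \eqref{eq:active-radius} being attained (via continuity and monotonicity of $r\mapsto\mathbf{S}(r\vee\frac{\Delta}{2})$) is more detail than the paper gives, but it is a reasonable point to make explicit.
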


To provide further intuition, we depict the acceptance region $A_\alpha$ in Figure \ref{fig:A_alpha}. One can think of $A_\alpha$ as starting from a large radius $r$ and gradually shrinking it until the coverage equals $1-\alpha$. When the radius becomes equal to $\frac{\Delta_j}{2}$ for some $j$, that coordinate becomes inactive and its interval no longer shrinks. The remaining active intervals keep shrinking. The red intervals in the figure correspond to inactive coordinates; the blue intervals correspond to active coordinates. Notice that a coordinate $j$ is inactive if and only if its confidence interval does not overlap with the interval around the population winner, i.e. $\theta_j + \frac{\Delta_j}{2} < \theta^* - r_\alpha$. Intuitively, this suggests that it is unlikely for $X_j$ to win. Moreover, the inactive intervals increase in width with the suboptimality of the coordinate. This is why we call the test the zoom test: it ``zooms in'' around the largest coordinates. In the extreme case where $k$ coordinates are active and the remaining $m-k$ are inactive and highly suboptimal, $A_\alpha$ devotes a negligible portion of the error budget to those coordinates and essentially splits the entire error budget equally among the $k$ active candidates.

When the noise distribution is known exactly, meaning $\mathbf{S}(v) = P(\exists i\in[m] : |\xi_i|>v_i)$, a helpful characterization of $r_\alpha$ can be obtained by defining the random variable
\[
M(\xi,\Delta) \;=\; \max_{j\in[m]} |\xi_j|\,\cdot\,1\left\{|\xi_j|> \frac{\Delta_j}{2}\right\}.
\]

\begin{restatable}{prop}{teststat}
\label{prop:test-stat}
The active radius $r_{\alpha}$ is equal to the $1-\alpha$ quantile of $M(\xi,\Delta)$.
\end{restatable}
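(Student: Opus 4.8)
The plan is to unpack the definition of the active radius \eqref{eq:active-radius} and recognize the quantity $\mathbf{S}(r\vee\frac{\Delta}{2})$ as exactly the tail probability of $M(\xi,\Delta)$ exceeding $r$. First I would observe that, since the noise distribution is known exactly, $\mathbf{S}(v) = P(\exists i\in[m]:|\xi_i|>v_i)$. Specializing to the vector $v = r\vee\frac{\Delta}{2}$, whose $j$th coordinate is $\max\{r,\Delta_j/2\}$, gives
\[
\mathbf{S}\left(r\vee\tfrac{\Delta}{2}\right) \;=\; P\left(\exists j\in[m]:\; |\xi_j| > \max\{r,\tfrac{\Delta_j}{2}\}\right).
\]
The key step is then the pointwise identity, valid for every $r\ge 0$ and every realization of $\xi$,
\[
\left\{\exists j:\; |\xi_j| > \max\{r,\tfrac{\Delta_j}{2}\}\right\} \;=\; \left\{M(\xi,\Delta) > r\right\}.
\]
This holds because $|\xi_j| > \max\{r,\Delta_j/2\}$ is equivalent to $|\xi_j|>r$ \emph{and} $|\xi_j|>\Delta_j/2$; the latter makes the indicator $1\{|\xi_j|>\Delta_j/2\}$ equal to $1$, so $|\xi_j|\cdot 1\{|\xi_j|>\Delta_j/2\} = |\xi_j| > r$, which contributes to the max defining $M$. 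Conversely, if $M(\xi,\Delta) > r$ then the maximizing index $j$ has $|\xi_j|\cdot 1\{|\xi_j|>\Delta_j/2\} > r \ge 0$, forcing the indicator to be $1$ (so $|\xi_j|>\Delta_j/2$) and $|\xi_j| > r$, hence $|\xi_j| > \max\{r,\Delta_j/2\}$.

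Combining these two displays, $\mathbf{S}(r\vee\frac{\Delta}{2}) = P(M(\xi,\Delta) > r)$, so the definition \eqref{eq:active-radius} becomes
\[
r_\alpha \;=\; \min\left\{r:\; P(M(\xi,\Delta) > r) \le \alpha\right\},
\]
which is precisely the standard definition of the (smallest) $1-\alpha$ quantile of the random variable $M(\xi,\Delta)$. To be fully careful about the ``$\min$'' versus ``$\inf$'': one should note that $r\mapsto \mathbf{S}(r\vee\frac{\Delta}{2})$ is continuous (inherited from the assumed continuity of $\mathbf{S}$ and continuity of $r\mapsto r\vee\frac{\Delta}{2}$) and non-increasing, so the set $\{r: \mathbf{S}(r\vee\frac{\Delta}{2})\le\alpha\}$ is a closed half-line and the infimum is attained, justifying the use of $\min$ and matching the usual right-continuity convention for quantiles.

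The only mild obstacle is bookkeeping around the indicator at the threshold: one must check that the equivalence above is not spoiled by boundary cases such as $|\xi_j| = \Delta_j/2$ or $r = 0$. These are handled by the strict inequalities in the definition of $M$ and in the tail event, exactly as in the argument above; since we only ever compare against $r\ge 0$ and the contribution of an index with $|\xi_j|\le\Delta_j/2$ to $M$ is $0$, no index can be ``active'' in $M$ without also satisfying the strict threshold condition. With that checked, the proposition follows immediately.
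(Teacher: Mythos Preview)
Your proof is correct and follows the same approach as the paper: rewrite the defining condition $\mathbf{S}(r\vee\Delta/2)\le\alpha$ as $P(M(\xi,\Delta)\le r)\ge 1-\alpha$ via the pointwise equivalence of the underlying events, then read off the quantile. The paper's version is a terse three-line chain of equalities, whereas you spell out the event equivalence and the min-versus-inf point more carefully, but the content is identical.
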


\noindent Therefore, when the noise distribution is known, we can evaluate $r_{\alpha}$ by simply simulating $M(\xi,\Delta)$ via repeated sampling from the noise distribution. When it is known, using the exact joint distribution of $\xi$ can improve significantly on the Bonferroni approach, especially if the errors are highly correlated.

\begin{figure}
\centering
\includegraphics[width=0.32\textwidth]{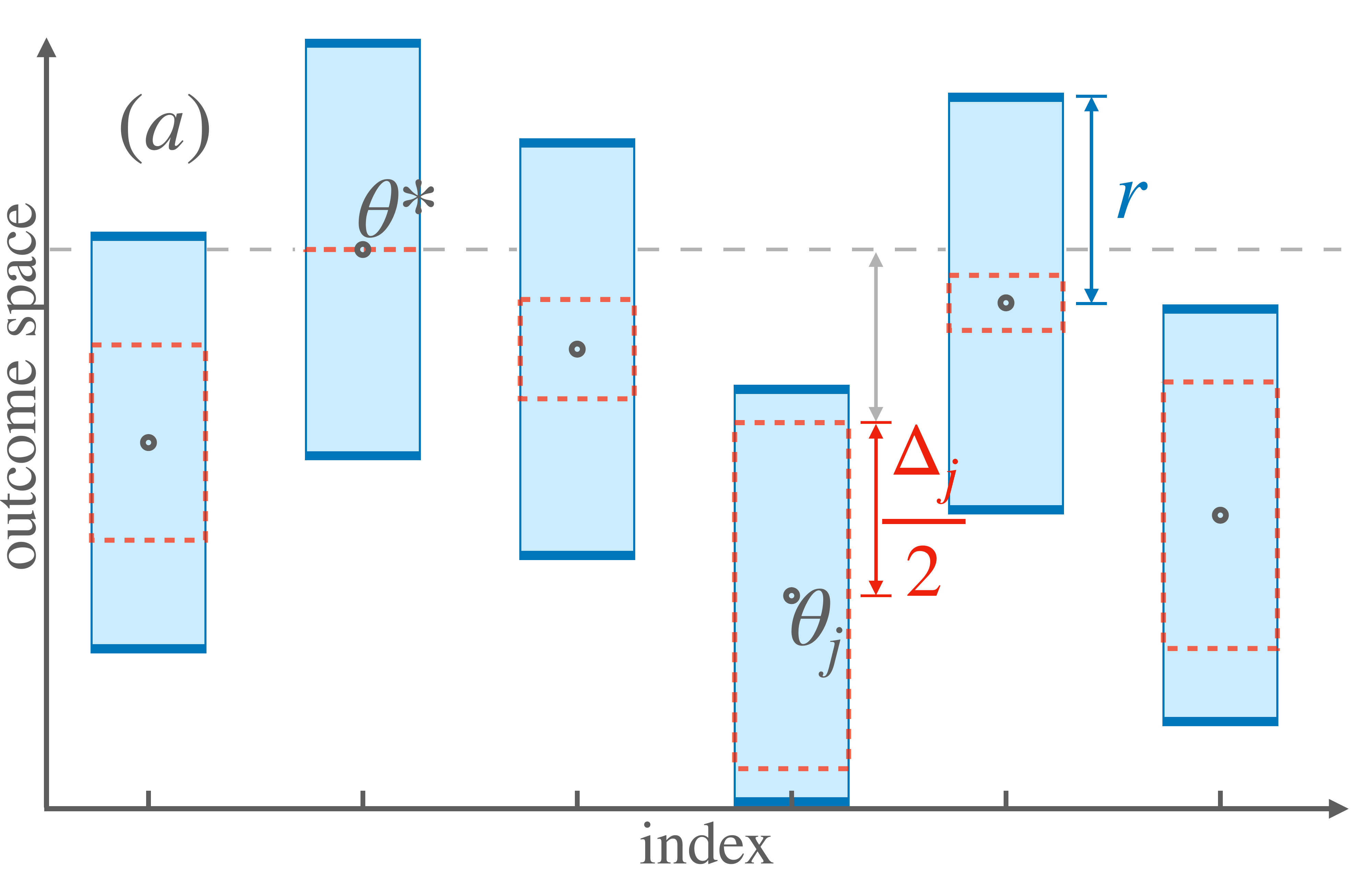}
\includegraphics[width=0.32\textwidth]{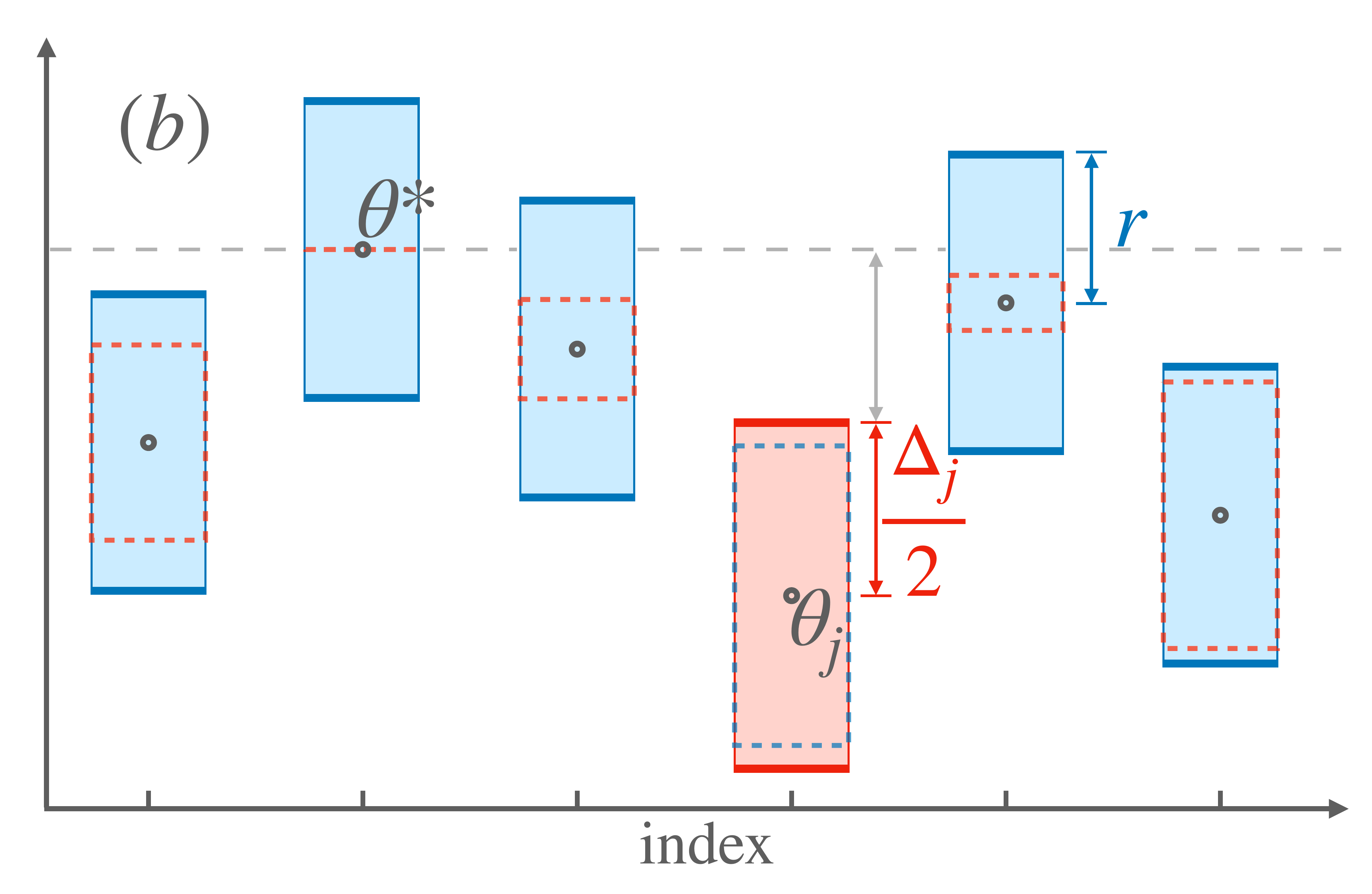}
\includegraphics[width=0.32\textwidth]{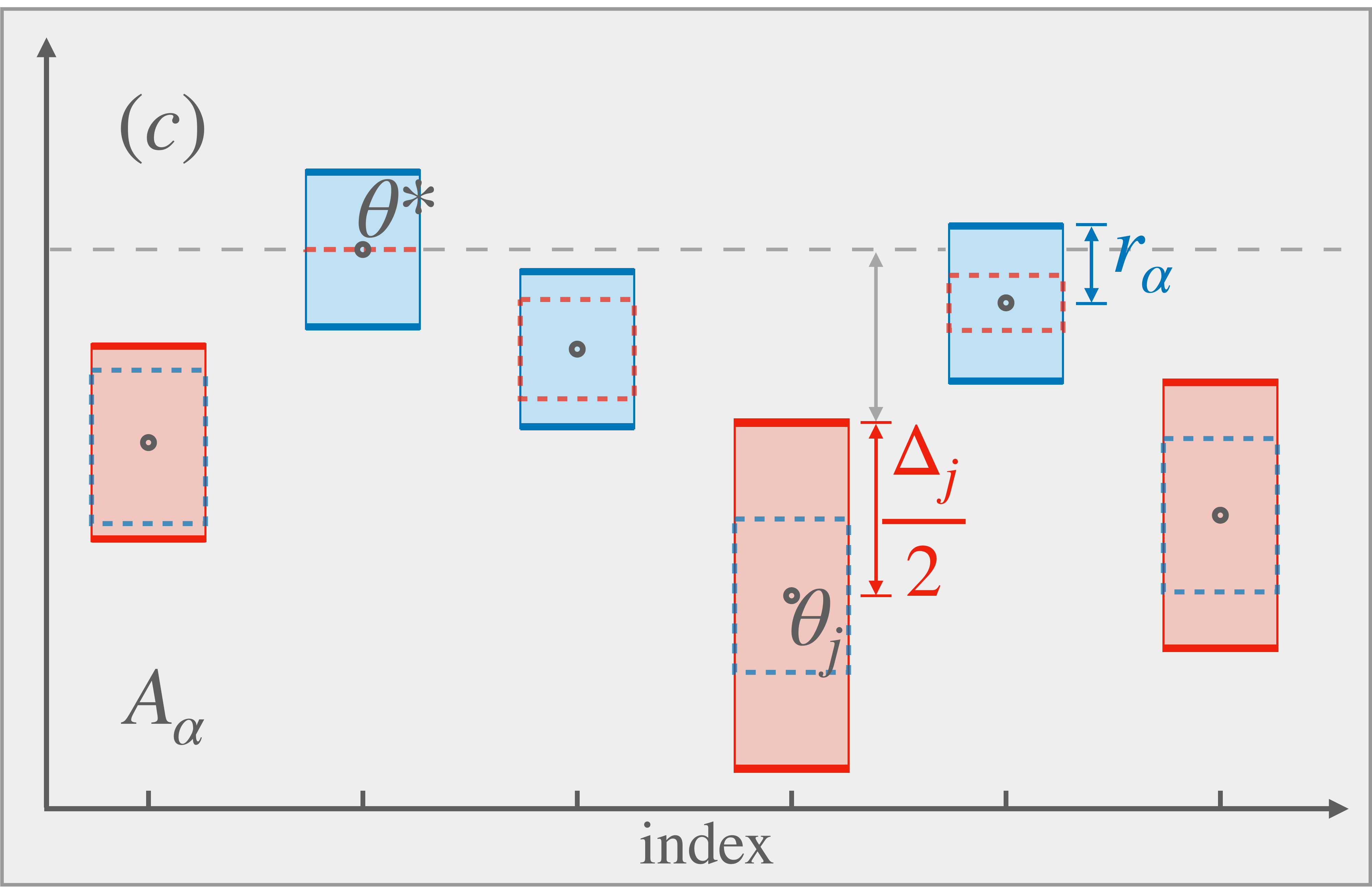}
\caption{\textbf{Definition of the acceptance region $A_\alpha$.} One can think of the acceptance region as starting from a large radius $r$ and shrinking it until the coverage equals $1-\alpha$. (a) We form boxes of radius $\frac{\Delta_j}{2}$ around each candidate $\theta_j$. The radius of $A_\alpha$ around $\theta_j$ can never be smaller than that box. At the beginning, $r$ is larger than all boxes. (b)
Once $r$ becomes equal to $\frac{\Delta_j}{2}$ for some $j$, that coordinate becomes inactive and its interval no longer shrinks. The remaining active intervals keep shrinking. (c) Once the coverage of the region becomes equal to $1-\alpha$, we stop shrinking the radius. The final radius is equal to $r_\alpha$, and $A_\alpha$ is the final region.}
\label{fig:A_alpha}
\end{figure}

\section{Confidence interval for the winner}
\label{sec:conf_int_winner}

We next construct a confidence interval for the winner by inverting the zoom test and projecting the resulting set along coordinate $\ihat$. We refer to this approach as the \emph{zoom correction}. 
Formally, define the confidence region obtained by inverting $A_\alpha$ to be
\[
\widehat C^\alpha = \{\theta : \X \in A_\alpha(\theta)\}.
\]
By the validity of the zoom test, we know that the true mean $\theta$ is covered by this set: $P(\theta \in \widehat C^\alpha) \geq 1-\alpha$.
If \emph{all} of $\theta$ is covered by $\widehat C^\alpha$, this means that, in particular, the winner $\theta_{\ihat}$ is covered by $\widehat C^\alpha_{\ihat}$, the projection of $\widehat C^\alpha$ along the winning coordinate:
$$P(\theta_{\ihat} \in \widehat C^\alpha_{\ihat}) \geq P(\theta \in \widehat C^\alpha) \geq 1-\alpha, \text{ where } \widehat C^\alpha_{i} = \left\{\theta_{i}:\; \theta \in \widehat C^\alpha \right\} = \left\{\theta_{i}:\; \X \in A_\alpha(\theta)\right\}.$$

At first blush, it may seem like finding $\widehat C_{\ihat}^\alpha$ would require searching through all $\theta\in \R^m$ and checking if $\theta\in \widehat C^\alpha$. Our key observation is that there exists a computational shortcut: to check whether a given value $t\in\R$ is in $\widehat C_{\ihat}^\alpha$, it suffices to look at the ``worst-case'' vector $\theta^t$, defined by
\begin{equation}
\label{eq:worst-case-theta}
\theta_j^t = \begin{cases} t & \text{if } j = \ihat; \\ \min\left\{\frac{2}{3}\X_j + \frac{1}{3}t, t \right\} & \text{if } j \neq \ihat.\end{cases}
\end{equation}

The idea behind $\theta^t$ is that it gives the ``most favorable'' conditions for $t$ to be included in the projected confidence interval $\widehat C_{\ihat}^\alpha$. That is, if {\em any} $\theta$ with $\theta_{\ihat} = t$ is included in the joint confidence region $\widehat C^\alpha$, then $\theta^t$ will be included. Consequently, we can verify whether $t$ is in our confidence interval by performing our zoom test at the parameter vector $\theta^t$.
Formally, we have the following key technical lemma.

\begin{restatable}{lemma}{mainlemma}
\label{lemma:mainlemma}
Fix $t\in\R$ and let $\theta^t$ be as in Eq. \eqref{eq:worst-case-theta}. Then, $t \in \widehat C_{\ihat}^\alpha$ if and only if $|\X_{\ihat}-t| \leq r_\alpha(\theta^t)$.
\end{restatable}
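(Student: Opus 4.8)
The plan is to prove the two implications separately. Throughout I will use three facts following from the definitions in \secref{test}: the active radius depends on its argument only through the suboptimality vector $\Delta$; it is coordinate-wise non-increasing in $\Delta$; and (by continuity of $\mathbf{S}$, away from the trivial case $r_\alpha=0$) it satisfies $\mathbf{S}\bigl(r_\alpha(\theta)\vee\tfrac{\Delta(\theta)}{2}\bigr)=\alpha$. The first thing I would record is the shape of $\theta^t$: since $\theta^t_j\le t$ for every $j\neq\ihat$ while $\theta^t_{\ihat}=t$, the largest coordinate of $\theta^t$ equals $t$ and is attained at $\ihat$; hence $\Delta_{\ihat}(\theta^t)=0$ and $\Delta_j(\theta^t)=\tfrac23(t-\X_j)^+$ for $j\neq\ihat$.

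For the ``if'' direction, assume $|\X_{\ihat}-t|\le r_\alpha(\theta^t)$. It suffices to show $\X\in A_\alpha(\theta^t)$, since then $\theta^t\in\widehat C^\alpha$ and $t=\theta^t_{\ihat}\in\widehat C^\alpha_{\ihat}$. I would verify the defining inequality $|\X_j-\theta^t_j|\le r_\alpha(\theta^t)\vee\tfrac{\Delta_j(\theta^t)}{2}$ coordinate by coordinate: for $j=\ihat$ it is precisely the hypothesis; for $j\neq\ihat$ with $\X_j<t$ one computes $|\X_j-\theta^t_j|=\tfrac13(t-\X_j)=\tfrac{\Delta_j(\theta^t)}{2}$, so the bound holds through the suboptimality term for any value of $r_\alpha(\theta^t)$; and for $j\neq\ihat$ with $\X_j\ge t$ one has $\theta^t_j=t$, so the bound reads $\X_j-t\le r_\alpha(\theta^t)$, which follows from $\X_j\le\X_{\ihat}$ (using that $\ihat$ is the empirical winner) together with the hypothesis.

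For the ``only if'' direction, fix a witness $\theta$ with $\theta_{\ihat}=t$ and $\X\in A_\alpha(\theta)$, and set $s=\max_k\theta_k$ and $r=r_\alpha(\theta)$. The heart of the argument, and the step I expect to be the main obstacle, is the inequality $r\ge\tfrac{s-t}{2}$, which I would prove by contradiction. If $\tfrac{s-t}{2}>r$ then $s>t$, so some $k\neq\ihat$ attains $\theta_k=s$; that coordinate has zero suboptimality, so its radius in $A_\alpha(\theta)$ is exactly $r$, forcing $\X_k\ge s-r>\tfrac{s+t}{2}>t$. Since $\ihat$ is the empirical winner, $\X_{\ihat}\ge\X_k$ and hence $\X_{\ihat}-t\ge s-r-t$; but the $\ihat$-coordinate of $A_\alpha(\theta)$ has radius $r\vee\tfrac{s-t}{2}=\tfrac{s-t}{2}$, so $\X_{\ihat}-t\le\tfrac{s-t}{2}$. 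Combining, $s-r-t\le\tfrac{s-t}{2}$, which rearranges to $\tfrac{s-t}{2}\le r$ --- a contradiction. Consequently the $\ihat$-coordinate of $A_\alpha(\theta)$ has radius $r$, and $\X\in A_\alpha(\theta)$ then gives $|\X_{\ihat}-t|\le r$.

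It remains to show $r\le r_\alpha(\theta^t)$, after which $|\X_{\ihat}-t|\le r\le r_\alpha(\theta^t)$ completes the proof. Writing $w=r\vee\tfrac{\Delta(\theta)}{2}$ for the radius vector of $A_\alpha(\theta)$, I would check that $r\vee\tfrac{\Delta(\theta^t)}{2}\le w$ coordinate-wise: in coordinate $\ihat$ both sides equal $r$ (using $r\ge\tfrac{s-t}{2}$ from the previous step); and for $j\neq\ihat$ we have $w_j\ge r$ and $w_j\ge\tfrac{s-\theta_j}{2}\ge\tfrac{t-\theta_j}{2}$, so the constraint $|\X_j-\theta_j|\le w_j$ yields $t-\X_j\le t-\theta_j+w_j\le 3w_j$, i.e.\ $\tfrac{\Delta_j(\theta^t)}{2}=\tfrac13(t-\X_j)^+\le w_j$. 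Coordinate-wise monotonicity of $\mathbf{S}$ together with $\mathbf{S}(w)=\alpha$ then gives $\mathbf{S}\bigl(r\vee\tfrac{\Delta(\theta^t)}{2}\bigr)\ge\alpha$, whence $r_\alpha(\theta^t)\ge r$ (a short continuity argument handling the boundary case). The ``if'' direction and this last comparison are essentially bookkeeping; what does the real work is the inequality $r\ge\tfrac{s-t}{2}$ --- it says a witness can never push its own population winner more than an active radius above $t$, and this is exactly what makes the ``worst-case'' vector $\theta^t$, which sets each losing coordinate's suboptimality to the smallest feasible value $\tfrac23(t-\X_j)^+$, genuinely extremal.
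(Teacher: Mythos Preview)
Your proof is correct, and the ``if'' direction matches the paper's argument exactly. In the ``only if'' direction you also begin by showing that the empirical winner must be active at any witness $\theta$ (your inequality $r\ge\tfrac{s-t}{2}$), which is precisely the content of the paper's Lemma~\ref{lemma:winner_active}; but from there you take a genuinely different and more direct route to $r_\alpha(\theta^t)\ge r_\alpha(\theta)$. The paper proceeds via two further auxiliary lemmas (Lemmas~\ref{lemma:r_only_inactive} and~\ref{lemma:increasing_coord}): it argues by contradiction that every \emph{inactive} gap of $\theta^t$ is dominated by the corresponding gap of the witness $\theta$, and then invokes a monotonicity property of $r_\alpha$ in the inactive gaps. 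You instead compare the full radius vectors at the single value $r=r_\alpha(\theta)$, establishing $r\vee\tfrac{\Delta(\theta^t)}{2}\le r\vee\tfrac{\Delta(\theta)}{2}=w$ coordinate-wise via the clean bound $t-X_j\le(t-\theta_j)+w_j\le 3w_j$, and conclude from $\mathbf{S}(w)=\alpha$. Your route is shorter and entirely self-contained; the paper's modular lemmas, by contrast, are reused verbatim in the top-$k$ and variance-adaptive extensions. One small caveat: your final implication --- from $\mathbf{S}\bigl(r\vee\tfrac{\Delta(\theta^t)}{2}\bigr)\ge\alpha$ to $r_\alpha(\theta^t)\ge r$ --- implicitly uses that $r'\mapsto\mathbf{S}\bigl(r'\vee\tfrac{\Delta(\theta^t)}{2}\bigr)$ is \emph{strictly} decreasing near $r$ (since $\Delta_{\ihat}(\theta^t)=0$, this holds whenever $\mathbf{S}$ is strictly decreasing in its $\ihat$-th argument); the paper's lemma-based comparison sidesteps this, but under the paper's standing assumptions the point is harmless.
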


\begin{figure}[t]
\centering
\includegraphics[width=0.45\textwidth]{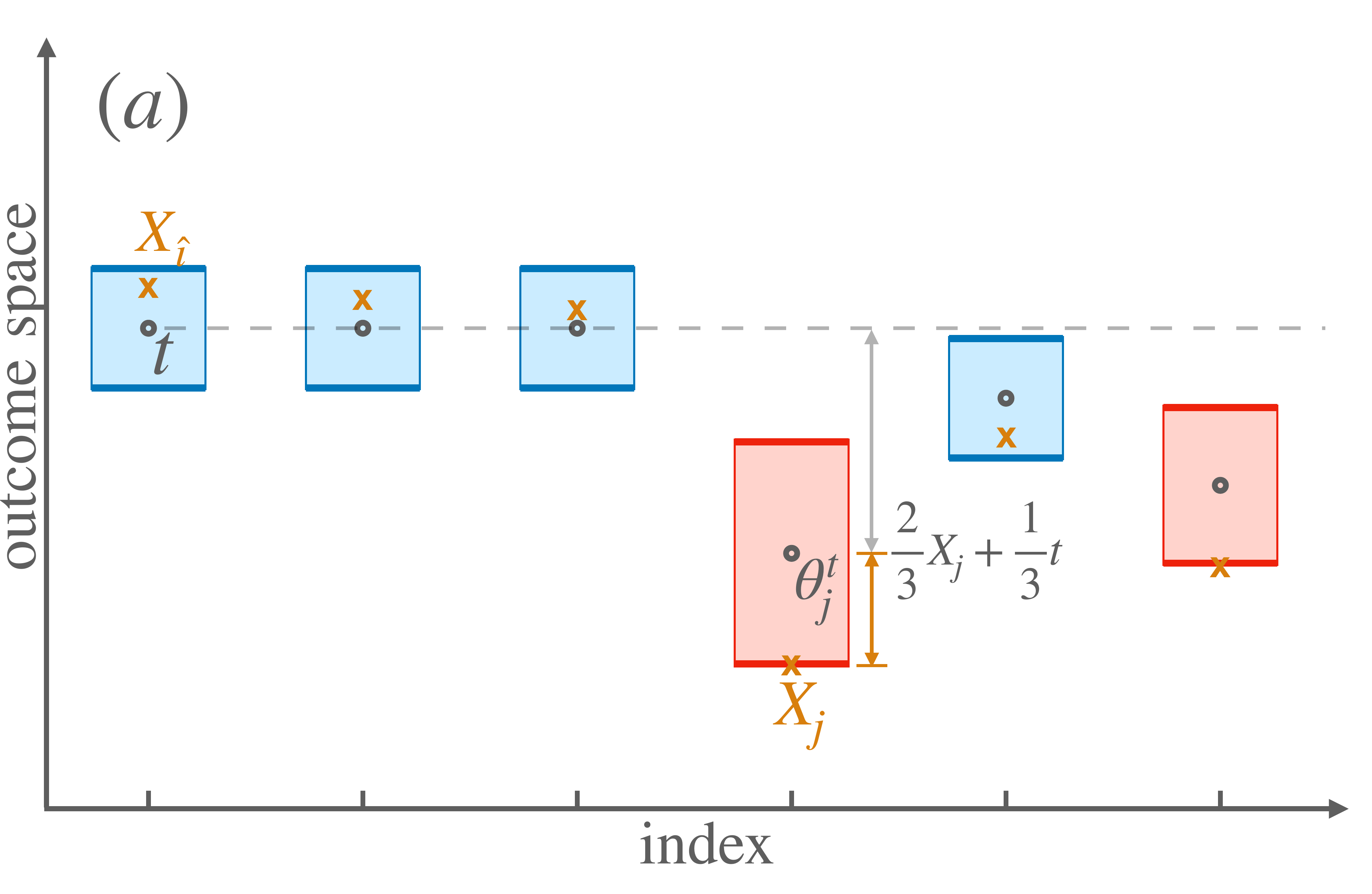}
\hspace{0.2cm}
\includegraphics[width=0.45\textwidth]{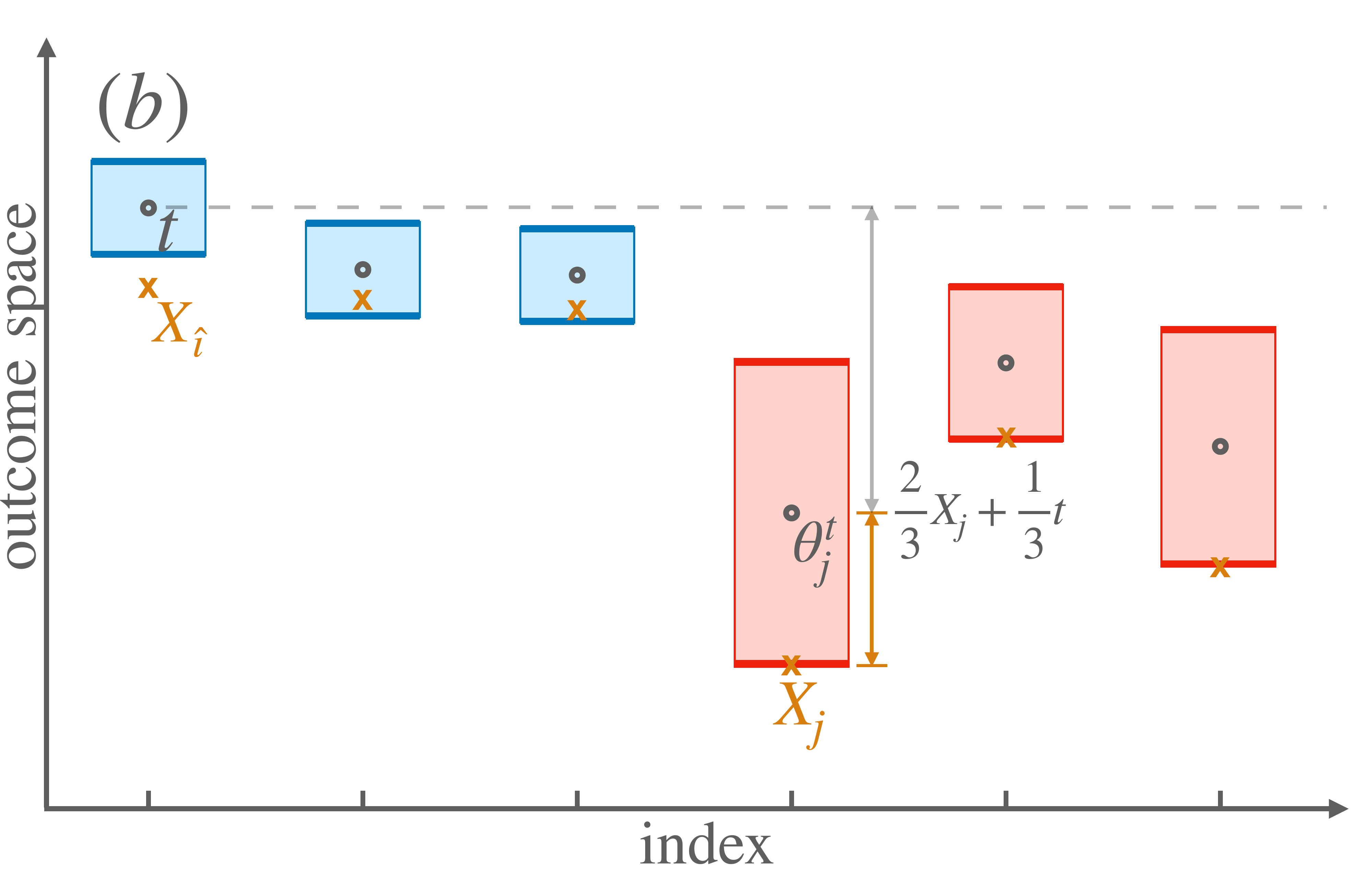}
\caption{\textbf{Data $X$ (brown crosses) and $\theta^t$ (gray circles) for two candidate values of $t$.} The intervals illustrate the acceptance region $A_\alpha(\theta^t)$. (a) We consider a value $t$ less than $X_{\ihat}$ and conclude that $t\in\widehat C^\alpha_{\ihat}$ because $X_{\ihat}$ is within the active radius $r_\alpha(\theta^t)$ around $t$. (b) We consider a value $t$ greater than $X_{\ihat}$ and conclude that $t\not\in\widehat C^\alpha_{\ihat}$ because $X_{\ihat}$ is \emph{not} within the active radius $r_\alpha(\theta^t)$ around $t$.}
\label{fig:theta_t}
\end{figure}

Figure \ref{fig:theta_t} depicts the data $X$ (brown crosses) and $\theta^t$ (gray circles) for two candidate values of $t$, one below $X_{\ihat}$ (left) and one above (right). The intervals illustrate the region $A_\alpha(\theta^t)$. We observe that, for $j\neq\ihat$, the values of $\theta_j^t$ are pushed as far up as possible, so that if $j$ becomes inactive $X_j$ is barely covered by the acceptance region. This is the reason why $\theta_j^t$ is the most favorable vector $\theta$ for $t$ to be included in $\widehat C_{\ihat}^\alpha$: the suboptimality gaps are as small as possible, making the active radius as large as possible, but they are not too small to violate the constraint that $X\in A_\alpha(\theta)$. Notice also that more coordinates are inactive for larger $t$, implying that the active radius $r_\alpha(\theta^t)$ will be smaller for larger $t$. This, in turn, implies that $\widehat C_{\ihat}^\alpha$ will be shorter on the upper than on the lower end of $X_{\ihat}$.

Lemma \ref{lemma:mainlemma} immediately implies the following confidence interval for the winner. 

\begin{restatable}[Zoom correction]{thm}{winner_int}
\label{thm:winner_int}
Let $\widehat C^\alpha_{\ihat} = \{t: |X_{\ihat}-t|\leq r_\alpha(\theta^t)\}$, and define  $t_l = \min \widehat C^\alpha_{\ihat}, t_u = \max \widehat C^\alpha_{\ihat}$. Then,
\[P(\theta_{\ihat} \in [t_l, t_u]) \geq 1-\alpha.\]
\end{restatable}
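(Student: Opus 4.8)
The plan is to deduce \thmref{winner_int} directly from \lemref{mainlemma} together with the simultaneous validity of the zoom test (\propref{test}). The main content is essentially already packaged: \lemref{mainlemma} gives us the exact description $\widehat C^\alpha_{\ihat} = \{t: |X_{\ihat}-t|\leq r_\alpha(\theta^t)\}$, and the coverage guarantee $P(\theta \in \widehat C^\alpha)\geq 1-\alpha$ follows from inverting a valid level-$(1-\alpha)$ test. So the only real work is to (i) confirm that $\widehat C^\alpha_{\ihat}$, as defined via $\theta^t$, genuinely equals the projection $\{\theta_{\ihat}: \theta\in \widehat C^\alpha\}$ of the joint region, and (ii) argue that replacing $\widehat C^\alpha_{\ihat}$ by the interval hull $[t_l,t_u]$ only enlarges the set, so coverage is preserved.

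First I would invoke \lemref{mainlemma}: for each fixed $t$, we have $t\in \widehat C^\alpha_{\ihat}$ (the true projection $\{\theta_{\ihat}:\X\in A_\alpha(\theta)\}$) if and only if $|X_{\ihat}-t|\leq r_\alpha(\theta^t)$. Hence the set in the theorem statement, $\{t: |X_{\ihat}-t|\leq r_\alpha(\theta^t)\}$, is exactly the projection of the joint confidence region along coordinate $\ihat$. Next, by \propref{test} and the inversion argument already spelled out in \secref{conf_int_winner}, $P(\theta \in \widehat C^\alpha)\geq 1-\alpha$, and since the event $\{\theta\in\widehat C^\alpha\}$ implies $\{\theta_{\ihat}\in\widehat C^\alpha_{\ihat}\}$, we get $P(\theta_{\ihat}\in\widehat C^\alpha_{\ihat})\geq 1-\alpha$. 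Finally, $\widehat C^\alpha_{\ihat}\subseteq [t_l,t_u]$ by definition of $t_l,t_u$ as the infimum and supremum of $\widehat C^\alpha_{\ihat}$, so $P(\theta_{\ihat}\in[t_l,t_u])\geq P(\theta_{\ihat}\in\widehat C^\alpha_{\ihat})\geq 1-\alpha$, which is the claim.

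The one point needing a little care is that $t_l,t_u$ are actually attained (or at least that $[t_l,t_u]$ contains $\widehat C^\alpha_{\ihat}$ even if the min/max are really inf/sup). Since $X_{\ihat}\in \widehat C^\alpha_{\ihat}$ (taking $t=X_{\ihat}$ makes $|X_{\ihat}-t|=0\leq r_\alpha$, as $r_\alpha\geq 0$ always), the set is nonempty, so $t_l\leq X_{\ihat}\leq t_u$ and the containment $\widehat C^\alpha_{\ihat}\subseteq[t_l,t_u]$ holds regardless of attainment; using $\min,\max$ in the statement is harmless because $\mathbf{S}$ and hence $r_\alpha(\theta^t)$ are continuous in $t$ (the map $t\mapsto \theta^t$ is continuous and $\Delta\mapsto r_\alpha$ is continuous via continuity of $\mathbf{S}$), so the defining inequality $|X_{\ihat}-t|\leq r_\alpha(\theta^t)$ is a closed condition and the set is closed; combined with its evident boundedness (for $|t|$ large, $r_\alpha(\theta^t)$ stays bounded while $|X_{\ihat}-t|\to\infty$) it is compact, so $t_l,t_u$ are attained.

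I do not expect a serious obstacle here, precisely because \lemref{mainlemma} has already done the heavy lifting of turning an $m$-dimensional projection into a one-dimensional membership test. The only thing to watch is the topological bookkeeping in the previous paragraph — establishing compactness of $\widehat C^\alpha_{\ihat}$ so that $[t_l,t_u]$ is a legitimate, finite interval — but this is routine given the continuity of $\mathbf{S}$ and the blow-up of $|X_{\ihat}-t|$. Everything else is a one-line chain of inclusions and probability monotonicity.
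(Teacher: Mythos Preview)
Your proposal is correct and takes essentially the same approach as the paper, which treats the theorem as an immediate consequence of \lemref{mainlemma} combined with the validity of the zoom test. The paper does not spell out the topological details about attainment of $t_l,t_u$ that you include (it simply notes afterward that $\widehat C^\alpha_{\ihat}\subseteq [X_{\ihat}-r_\alpha(0), X_{\ihat}+r_\alpha(0)]$ for boundedness), so your argument is, if anything, slightly more complete.
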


We make a few practical remarks. First, note that $\widehat C^\alpha_{\ihat}\subseteq [X_{\ihat}-r_\alpha(0), X_{\ihat} + r_\alpha(0)]$, where $r_\alpha(0)$ denotes the fully simultaneous interval radius, obtained when $\theta_1=\dots=\theta_m=0$. We do not have a guarantee that $\widehat C^\alpha_{\ihat}$ itself is an interval, though in practice we expect it to be; therefore, taking $ [t_l, t_u]$ should not make the confidence set much more conservative. In our experiments, we will implement the confidence interval prescribed by Theorem \ref{thm:winner_int} by performing a grid search over $t\in [X_{\ihat}-r_\alpha(0), X_{\ihat} + r_\alpha(0)]$. We note that this technically allows for some inaccuracy in the interval computation, but we expect this inaccuracy to be practically negligible.

In Section \ref{sec:extensions}, we will show that $\widehat C^\alpha_{\ihat}$ is also a valid confidence set for the \emph{population winner} $\theta^* = \max_{i\in[m]} \theta_i$. This will follow because, perhaps surprisingly, inverting the zoom test and projecting it along coordinate $i^* = \argmax_i \theta_i$ gives the same set as inverting the zoom test and projecting it along ${\ihat}$.

\section{Efficient step-down implementation}
\label{sec:stepdown}

We now provide a slightly conservative version of the approach in Theorem \ref{thm:winner_int} that gains in simplicity. The method can be seen as an ``effective'' Bonferroni correction: it performs a Bonferroni correction only over competitive candidates and thus adapts to the effective multiplicity of the problem.

We make no assumptions about the dependence of the errors $\xi_i$ and define $\mathbf{S}(v)$ via a union bound instead. Formally, for a vector $v$, we assume that $\mathbf{S}(v)$ takes the form:
\[\mathbf{S}(v) = \sum_{i=1}^m S(v_i),\]
where $S(r)$ denotes a decreasing tail upper bound on the marginal noise distribution: $P(|\xi_j|>r) \leq S(r)$ for all $j\in[m]$.
With this choice of $\mathbf{S}(v)$, we can give a simple characterization of $t_l$ and $t_u$, the lower and upper endpoints of $\widehat C_{\ihat}^\alpha$. Let $r_l = \X_{\ihat} - t_l$ and $r_u = t_u - \X_{\ihat}$ denote the radius of the lower and upper endpoints, respectively; in other words, $[t_l, t_u] = [\X_{\ihat} - r_l, \X_{\ihat} + r_u]$. We also let $\hDelta_j = X_{\ihat} - X_j$ denote the empirical suboptimality gaps.

\begin{restatable}{lemma}{endpointequation}
\label{lemma:endpoint_equation}
The radius $r_l$ of the lower endpoint of $\widehat C_{\ihat}^\alpha$ solves
\begin{equation}
\label{eq:lower}
\alpha = \S_l(r) := \sum_{j=1}^m S\left(\max\left\{r, \frac{\hDelta_j - r}{3}\right\}\right).
\end{equation}
The radius $r_u$ of the upper endpoint of $\widehat C_{\ihat}^\alpha$ solves
\begin{equation}
\label{eq:upper}
\alpha = \S_u(r) := \sum_{j=1}^m S\left(\max\left\{r, \frac{\hDelta_j + r}{3}\right\}\right).
\end{equation}
\end{restatable}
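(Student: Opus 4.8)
The plan is to turn both displays into an explicit evaluation of the active radius $r_\alpha(\theta^t)$ along the one-parameter family $\{\theta^t\}$ of \eqref{eq:worst-case-theta}. By Lemma~\ref{lemma:mainlemma}, $t\in\widehat C^\alpha_{\ihat}$ iff $|X_{\ihat}-t|\le r_\alpha(\theta^t)$, so with the union-bound form $\mathbf{S}(v)=\sum_j S(v_j)$ the whole problem reduces to a one-dimensional computation of $r_\alpha(\theta^t)$ and then to locating the extreme $t$ for which the inequality holds.

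First I would compute the suboptimality vector $\Delta(\theta^t)$. Since $\theta^t_{\ihat}=t$ and $\theta^t_j=\min\{\tfrac23 X_j+\tfrac13 t,\,t\}\le t$ for $j\neq\ihat$, the largest coordinate of $\theta^t$ equals $t$; hence $\Delta_j(\theta^t)=t-\theta^t_j$, which equals $0$ at $j=\ihat$ and $\max\{0,\tfrac23(t-X_j)\}$ at $j\neq\ihat$. Writing $r=|X_{\ihat}-t|$ and using $\hDelta_j=X_{\ihat}-X_j$, this gives $\tfrac{\Delta_j(\theta^t)}{2}=\max\{0,\tfrac{\hDelta_j-r}{3}\}$ when $t\le X_{\ihat}$ (so $t-X_j=\hDelta_j-r$), and, when $t\ge X_{\ihat}$, $\tfrac{\Delta_j(\theta^t)}{2}=\tfrac{\hDelta_j+r}{3}$ for $j\neq\ihat$ (here $X_j\le X_{\ihat}<t$ forces the minimum not to clip) and $0$ for $j=\ihat$.

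Next I would convert the condition $r\le r_\alpha(\theta^t)$ into a scalar equation. By \eqref{eq:active-radius}, $r_\alpha(\theta)=\min\{\rho:\mathbf{S}(\rho\vee\tfrac{\Delta}{2})\le\alpha\}$, and since $\rho\mapsto\mathbf{S}(\rho\vee\tfrac{\Delta}{2})$ is continuous and, for $\rho\ge0$, strictly decreasing (the population-winner coordinate, with $\Delta_j=0$, contributes $S(\rho)$), while $\mathbf{S}(\tfrac{\Delta}{2})\ge S(0)>\alpha$ under the mild conditions below, we get $r\le r_\alpha(\theta)\iff\mathbf{S}(r\vee\tfrac{\Delta}{2})\ge\alpha$. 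Applying this with $\theta=\theta^t$ and plugging in the expressions above (the constraint $r\ge0$ lets the outer $\rho\vee(\cdot)$ swallow the inner $\max\{0,\cdot\}$, and at $\rho=r$ the $\ihat$-th summand is $S(r)=S(r\vee\tfrac r3)$), one finds
\[
\mathbf{S}\left(r\vee\tfrac{\Delta(\theta^t)}{2}\right)=\sum_{j=1}^m S\left(\max\left\{r,\tfrac{\hDelta_j-r}{3}\right\}\right)=\S_l(r)\quad\text{if }t\le X_{\ihat},
\]
and the same computation with $\hDelta_j+r$ replacing $\hDelta_j-r$ yields $\S_u(r)$ if $t\ge X_{\ihat}$. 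Hence $\{t<X_{\ihat}\}\cap\widehat C^\alpha_{\ihat}=\{X_{\ihat}-r:\,r\ge0,\ \S_l(r)\ge\alpha\}$ and the part of $\widehat C^\alpha_{\ihat}$ above $X_{\ihat}$ is $\{X_{\ihat}+r:\,r\ge0,\ \S_u(r)\ge\alpha\}$. Since $\S_l$ is continuous and drops below $\alpha$ for large $r$ (precisely the condition making $r_\alpha(0)<\infty$), its super-level set $\{\S_l\ge\alpha\}$ is bounded and closed, so $t_l=X_{\ihat}-r_l$ with $r_l$ the largest root of $\S_l(r)=\alpha$; symmetrically $t_u=X_{\ihat}+r_u$ with $r_u$ the largest root of $\S_u(r)=\alpha$.

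The main obstacle is not a single hard idea but careful bookkeeping: tracking the case split in the minimum defining $\theta^t_j$, verifying that the nested maxima — the inner $\max\{0,\cdot\}$ absorbed by $\rho\vee(\cdot)$, and the exceptional coordinate $\ihat$ whose half-gap is $0$ rather than $\tfrac r3$ — all collapse to exactly the summands of \eqref{eq:lower} and \eqref{eq:upper} once $\rho$ is set to the boundary value $r$, and upgrading the ``iff'' of Lemma~\ref{lemma:mainlemma} to a genuine equation at the endpoint. That last step is where continuity of $\mathbf{S}$, strict monotonicity of $S$, and the decay of $\S_l,\S_u$ at infinity all enter; note that $\S_l(0)\ge S(0)$, so under mild conditions ($S(0)>\alpha$, automatic when the errors have no atom at $0$) the degenerate case $t_l=X_{\ihat}$ is excluded, and one should record that $r_l,r_u$ are the \emph{largest} roots since $\S_l,\S_u$ need not be monotone in $r$.
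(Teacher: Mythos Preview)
Your proposal is correct and follows essentially the same route as the paper: compute the gaps $\Delta_j(\theta^t)$ of the worst-case vector, plug them into the union-bound form of $\mathbf{S}$, and use continuity to turn Lemma~\ref{lemma:mainlemma}'s inequality into an equality at the endpoint. The paper's argument is terser---it simply observes $r_l\equiv r_\alpha(\theta^{t_l})$, writes out the defining equation for the active radius, and substitutes $t_l=X_{\ihat}-r_\alpha(\theta^{t_l})$---whereas you spell out the intermediate equivalence $r\le r_\alpha(\theta^t)\iff\mathbf{S}(r\vee\tfrac{\Delta}{2})\ge\alpha$ and the absorption of the inner $\max\{0,\cdot\}$; these are the same steps with more bookkeeping made explicit. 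One small correction: $\S_u$ \emph{is} monotone decreasing (each summand's argument, $r$ or $(\hDelta_j+r)/3$, increases in $r$), so the root defining $r_u$ is unique; your ``largest root'' caveat is only needed for $\S_l$.
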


We immediately see that $\S_l(r) \geq \S_u(r)$ for all $r$, which means that the lower endpoint will offer more protection than the upper endpoint. This makes sense, since the winner is biased upwards. Moreover, both functions approach $S(r)$ from above as $\hDelta_j\to\infty$ for all $j\neq \ihat$.
Observing that $\hDelta_{\ihat}=0$, we also see that $\S_l(0) \geq \S_u(0) \geq S(0) = 1$ and $\S_l(r), \S_u(r)\to 0$ as $r\to\infty$, so the equations are guaranteed to have at least one solution for some $r \in (0,\infty)$. In fact, the function $\S_u(r)$ is decreasing and therefore has a unique solution to Eq.~\eqref{eq:upper}. The function $\S_l(r)$ is not necessarily decreasing, but it is bounded above by the decreasing function $m\cdot S(r)$.

Finally, the explicit characterization of Lemma \ref{lemma:endpoint_equation} allows for a straightforward comparison of our approach to a Bonferroni correction. The lower radius $r_l$ reduces to the Bonferroni radius $r_{\text{Bonf}}$, which solves $\alpha = \sum_{j=1}^m S(r_{\text{Bonf}})$, if and only if $\max_j\hDelta_j \leq 4r_{\text{Bonf}}$. Similarly, the upper radius $r_u$ reduces to the Bonferroni radius if and only if $\max_j\hDelta_j \leq 2r_{\text{Bonf}}$. As a result, our interval coincides with the Bonferroni interval if and only if the Bonferroni intervals for the winner and the loser overlap.

\paragraph{Step-down zoom correction.} The preceding discussion characterizes the endpoints of $\widehat C^\alpha_{\ihat}$ when $\mathbf{S}(v)$ is given by a union bound. By bounding $\S_l$ and $\S_u$, we can obtain a very efficient \emph{step-down zoom correction} that produces only slightly conservative upper bounds on $r_l$ and $r_u$. We state the estimation of $r_l$ and $r_u$ in Algorithm \ref{alg:stepdownlow} and Algorithm \ref{alg:stepdownhigh}, respectively.
\begin{figure}[t]
\centering
\begin{minipage}{.485\textwidth}
\begin{algorithm2e}[H]
    \KwData{
    $X_1,\dots,X_m$
    }
    \SetAlgoLined
    ~Compute gaps $\hDelta_j = X_{\ihat} - X_j,j\in[m]$, and sort $\hDelta_{(1)} \geq \hDelta_{(2)} \geq \cdots \geq \hDelta_{(m)} = 0$\newline
    $\alpha_1 = \alpha$\newline
    \For{$j = 1$ \KwTo $m$}{
        $\hat{r}_l^{(j)} = S^{-1}(\alpha_j / (m-j+1))$\newline
        \eIf{$\hDelta_{(j)} \leq 4 \hat{r}_l^{(j)}$}{
            \KwRet $\hat r_l = \hat{r}_l^{(j)}$
        }{
            $\alpha_{j+1} = \alpha_j - S\left(\frac{\hDelta_{(j)} - \hat{r}_l^{(j)}}{3}\right)$
        }
    } 
\caption{\parbox[t]{.9\linewidth}{Step-down zoom correction ($r_l$)}}
\label{alg:stepdownlow}
\end{algorithm2e}
\end{minipage}
\hfill
\begin{minipage}{.485\textwidth}
\begin{algorithm2e}[H]
    \KwData{
    $X_1,\dots,X_m$
    }
    \SetAlgoLined
    ~Compute gaps $\hDelta_j = X_{\ihat} - X_j,j\in[m]$, and sort $\hDelta_{(1)} \geq \hDelta_{(2)} \geq \cdots \geq \hDelta_{(m)} = 0$\newline
    $\alpha_1 = \alpha$\newline
    \For{$j = 1$ \KwTo $m$}{
        $\hat{r}_u^{(j)} = S^{-1}(\alpha_j / (m-j+1))$\newline
        \eIf{$\hDelta_{(j)} \leq 2 \hat{r}_u^{(j)}$}{
            \KwRet $\hat r_u = \hat{r}_u^{(j)}$
        }{
            $\alpha_{j+1} = \alpha_{j} - S\left(\frac{\hDelta_{(j)} + S^{-1}(\alpha)}{3}\right)$
        }
    }
\caption{\parbox[t]{.9\linewidth}{Step-down zoom correction ($r_u$)}}
\label{alg:stepdownhigh}
\end{algorithm2e}
\end{minipage}
\end{figure}
\noindent Note that $S^{-1}(\alpha)$, used in the computation of the upper bound, is the uncorrected interval width.

The returned estimates $\hat r_l$ and $\hat r_u$ are indeed upper bounds on the true values $r_l$ and $r_u$.

\begin{restatable}[Step-down zoom correction]{thm}{stepdown}
\label{thm:stepdown}
Let $\hat r_l$ and $\hat r_u$ denote the lower and upper bound estimates from  Algorithm~\ref{alg:stepdownlow} and Algorithm~\ref{alg:stepdownhigh}, respectively. Then, $\hat r_l \geq r_l$ and $\hat r_u \geq r_u$. Consequently,
\[P(\theta_{\ihat} \in [\X_{\ihat} - \hat r_l, \X_{\ihat} + \hat r_u])\geq 1-\alpha.\]
\end{restatable}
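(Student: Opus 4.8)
The plan is to verify that the step-down algorithms return valid upper bounds on $r_l$ and $r_u$, from which the coverage guarantee follows immediately via Lemma~\ref{lemma:endpoint_equation} and Theorem~\ref{thm:winner_int}. Since the upper-endpoint case (Algorithm~\ref{alg:stepdownhigh}) is cleaner---because $\S_u$ is genuinely decreasing---I would treat it first and then adapt the argument to the lower endpoint. The core idea is a standard step-down / closed-testing-flavored induction: at iteration $j$, the algorithm has ``frozen out'' the $j-1$ most suboptimal candidates, paying for them out of the error budget, and pretends the remaining $m-j+1$ candidates are all active with the uniform per-candidate radius $\hat r_u^{(j)} = S^{-1}(\alpha_j/(m-j+1))$. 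I would maintain the invariant that $\alpha_j \le \alpha - \sum_{\ell < j} S\big(\max\{r_u, (\hDelta_{(\ell)}+r_u)/3\}\big)$, or more usefully an invariant directly comparing $\hat r_u^{(j)}$ to the true $r_u$.

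The key step is the following comparison. Recall $r_u$ solves $\alpha = \S_u(r_u) = \sum_{j=1}^m S\big(\max\{r_u, (\hDelta_{(j)}+r_u)/3\}\big)$. Split the sum at the first index where the algorithm would stop, i.e.\ suppose the algorithm returns at iteration $j_0$ because $\hDelta_{(j_0)} \le 2\hat r_u^{(j_0)}$. For the ``active'' indices $\ell \ge j_0$ we have $\hDelta_{(\ell)} \le \hDelta_{(j_0)}$, and one checks that $\max\{r, (\hDelta_{(\ell)}+r)/3\} = r$ precisely when $\hDelta_{(\ell)} \le 2r$; so if $r_u$ were at least $\hat r_u^{(j_0)}$ these terms would each contribute $S(r_u) \le S(\hat r_u^{(j_0)})$. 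For the ``inactive'' indices $\ell < j_0$, I want to show the term $S\big(\max\{r_u,(\hDelta_{(\ell)}+r_u)/3\}\big)$ is at most the amount $S\big((\hDelta_{(\ell)}+S^{-1}(\alpha))/3\big)$ that the algorithm subtracted from the budget; this uses monotonicity of $S$ together with $r_u \le S^{-1}(\alpha)$ (the zoom interval is never wider than the uncorrected one, which holds because $\S_u(S^{-1}(\alpha)) \ge S(S^{-1}(\alpha)) = \alpha$ and $\S_u$ is decreasing). Assembling these, if $r_u \ge \hat r_u^{(j_0)}$ then
\[
\alpha = \S_u(r_u) \le \sum_{\ell < j_0} S\Big(\tfrac{\hDelta_{(\ell)}+S^{-1}(\alpha)}{3}\Big) + (m-j_0+1)\,S(\hat r_u^{(j_0)}) = (\alpha - \alpha_{j_0}) + (m-j_0+1)\,S(\hat r_u^{(j_0)}),
\]
so $S(\hat r_u^{(j_0)}) \ge \alpha_{j_0}/(m-j_0+1)$, i.e.\ $\hat r_u^{(j_0)} \ge S^{-1}(\alpha_{j_0}/(m-j_0+1)) = \hat r_u^{(j_0)}$ with the reverse bound forced to be tight only in a degenerate way---more carefully, I would phrase it as: this chain shows $r_u \ge \hat r_u^{(j_0)}$ leads to $S(\hat r_u^{(j_0)}) \ge \alpha_{j_0}/(m-j_0+1)$, hence $\S_u(\hat r_u^{(j_0)}) \ge \alpha = \S_u(r_u)$, hence by monotonicity $\hat r_u^{(j_0)} \le r_u$, contradiction unless equality. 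The clean way is actually the contrapositive: assume $\hat r_u^{(j_0)} < r_u$ and derive $\alpha < \alpha$, so in fact $\hat r_u^{(j_0)} \ge r_u$. I also need to argue the algorithm does terminate (it returns at the latest when $j=m$, since $\hDelta_{(m)}=0 \le 2\hat r_u^{(m)}$) and that the stopping rule $\hDelta_{(j)} \le 2\hat r_u^{(j)}$ is consistent with the ``active means radius $=r$'' dichotomy, so that no subtracted term was over- or under-counted.

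For $r_l$ the argument is structurally identical, using Eq.~\eqref{eq:lower}, the threshold $4$ in place of $2$ (because $\max\{r,(\hDelta-r)/3\}=r$ iff $\hDelta \le 4r$), and the bound $\S_l(r) \le m\cdot S(r)$ to handle the fact that $\S_l$ need not be monotone---specifically, I would use $\S_l(\hat r_l) \ge \alpha$ together with $\S_l(r) \le mS(r) \to 0$ to conclude there is a solution $r_l \le \hat r_l$, which is all that is needed. The main obstacle I anticipate is the bookkeeping in the inductive step: verifying that at each iteration the budget decrement $S\big((\hDelta_{(j)} - \hat r_l^{(j)})/3\big)$ (resp.\ with $+S^{-1}(\alpha)$ for the upper case) genuinely dominates the corresponding term of the true $\S_l$ or $\S_u$ sum \emph{uniformly over all candidate radii $r \ge \hat r^{(j)}$}, so that the telescoping of the $\alpha_j$'s goes through without circularity. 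Once the endpoint-domination is established, the final probability statement is just $P(\theta_{\ihat} \in [X_{\ihat}-r_l, X_{\ihat}+r_u]) \ge 1-\alpha$ from Theorem~\ref{thm:winner_int} combined with $[X_{\ihat}-r_l, X_{\ihat}+r_u] \subseteq [X_{\ihat}-\hat r_l, X_{\ihat}+\hat r_u]$.
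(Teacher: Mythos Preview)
Your upper-bound argument is essentially the paper's, just phrased as a contradiction, but you have one inequality backwards: you claim $r_u \le S^{-1}(\alpha)$, yet your own reasoning (``$\S_u(S^{-1}(\alpha)) \ge \alpha$ and $\S_u$ is decreasing'') yields $r_u \ge S^{-1}(\alpha)$. Conceptually this is the statement that the zoom interval is \emph{wider} than the uncorrected one, not narrower. Fortunately it is $r_u \ge S^{-1}(\alpha)$ that you actually need to bound the inactive terms, so once corrected your argument goes through; the paper does the same thing more directly by showing $\S_u(\hat r_u) \le \alpha$ and invoking monotonicity of $\S_u$.

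The lower-bound argument, however, has a genuine gap. Your stated plan---show $\S_l(\hat r_l) \ge \alpha$ and use $\S_l(r)\le m S(r)\to 0$ to conclude there is a solution $\le \hat r_l$---has the inequality in the wrong direction and draws the wrong conclusion: from $\S_l(\hat r_l)\ge\alpha$ and $\S_l\to 0$ you would get a solution \emph{above} $\hat r_l$, not below, and in any case finding \emph{some} solution below $\hat r_l$ is not enough. Because $\S_l$ is not monotone, the equation $\S_l(r)=\alpha$ may have several solutions, and $r_l$ (being the radius of the \emph{minimal} $t$ in $\widehat C^\alpha_{\ihat}$) is the largest one. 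So you must rule out every solution above $\hat r_l$; that is, you need $\S_l(r)<\alpha$ for \emph{all} $r>\hat r_l$. The paper does this by first proving that the sequence $\hat r_l^{(j)}$ is nonincreasing in $j$ (this is where the step-down budget arithmetic is used carefully) and then handling each interval $r\in(\hat r_l^{(k+1)},\hat r_l^{(k)}]$ separately, bounding $\S_l(r)$ by splitting the sum at index $k$ and using the telescoping identity $\sum_{j\le k} S((\hDelta_{(j)}-\hat r_l^{(j)})/3) + (m-k)S(\hat r_l^{(k+1)}) = \alpha$. Your ``main obstacle'' paragraph hints at the right uniformity concern, but the concrete plan you gave does not deliver it.
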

The proof of Theorem \ref{thm:stepdown} relies on showing that $\S_l(\hat r_l)\leq \alpha$ and $\S_u(\hat r_u)\leq \alpha$. Since $\S_u(r)$ is monotonically decreasing, this is enough to conclude that the upper estimate is valid: $\hat r_u \geq r_u$. Since $\S_l(r)$ may not be monotone, we additionally argue that $\S_l(r)<\alpha$ for all $r> \hat r_l$.

\section{Extensions}
\label{sec:extensions}

We generalize and extend our main results for inference on the winner to several related problems: inference on the top $k$ winners, inference on the value $\theta^*$ and index $i^*$ of the population winner, and inference on ``near-winners.'' We also provide a variance-adaptive version of the algorithm, which may be more appropriate when the candidates have different variances, or more generally when their tail bounds differ.

\subsection{Inference on top \texorpdfstring{$k$}{k} winners}
\label{sec:topk}

We study the problem of inference on the \emph{top $k$} winners: $\ihat_{(1)},\dots,\ihat_{(k)}$, corresponding to the $k$ largest values of $X$. The following analysis recovers the previous analysis for $k=1$, thus providing a strict generalization.

\paragraph{Zoom test (top $k$).} The key difference is in redefining the suboptimality gaps in the definition of the zoom test. Specifically, we take
\[\Delta^{(k)}_j = \max\{\theta_{(k)} - \theta_j , 0 \},\]
where $\theta_{(k)}$ denotes the $k$-th largest coordinate of $\theta$. Notice that this recovers the previous definition of $\Delta_j$ when $k=1$. The remaining definitions---acceptance region $A_\alpha$, active radius $r_\alpha$, and so on---all remain the same, only replacing the gaps $\Delta_j$ with the more general $\Delta_j^{(k)}$. For example, the active indices for general $k$ are given by $\I_\alpha = \{j:\; \Delta_j^{(k)} \leq 2r_\alpha\}.$

\paragraph{Inverting the test (top $k$).} 
As before, we will form a confidence region for coordinates $\ihat_{(1)},\dots,\ihat_{(k)}$ by first inverting $A_\alpha$ to obtain a confidence region for $\theta$ and then projecting it onto those coordinates:
\[
\widehat C^\alpha_{\ihat_{(1)},\dots,\ihat_{(k)}} = \left\{(\theta_{\ihat_{(1)}},\dots,\theta_{\ihat_{(k)}}):\; \theta \in \widehat C^\alpha \right\} = \left\{(\theta_{\ihat_{(1)}},\dots,\theta_{\ihat_{(k)}}) :\; \X \in A_\alpha(\theta)\right\}.
\]

The computational shortcut from the $k=1$ case has its direct analogue in the general case. In particular, for $\tvec = (t_{\ihat_{(1)}},\dots,t_{\ihat_{(k)}})\in\R^k$, we define the ``worst-case'' vector $\theta^{\tvec}$ as
\begin{equation}
\label{eq:worst-case-theta-topk}
\theta_j^{\tvec} = \begin{cases} t_j & \text{if } j \in \{\ihat_{(1)},\dots,\ihat_{(k)}\}; \\ \min\left\{\frac{2}{3}\X_j + \frac{1}{3}\tvec_{(k)}, \tvec_{(k)} \right\} & \text{if } j \not\in \{\ihat_{(1)},\dots,\ihat_{(k)}\},\end{cases}
\end{equation}
where $\tvec_{(k)} = \min_{j\in[k]}t_{\ihat_{(j)}}$.

Lemma \ref{lemma:main-lemma-topk} extends Lemma \ref{lemma:mainlemma} to general $k$ and is proved analogously. 

\begin{restatable}{lemma}{mainlemmatopk}
\label{lemma:main-lemma-topk}
Fix $\tvec=(t_{\ihat_{(1)}},\dots,t_{\ihat_{(k)}})\in\R^k$ and let $\theta^{\tvec}$ be as in Eq. \eqref{eq:worst-case-theta-topk}. Then, $\tvec \in \widehat C_{\ihat_{(1)},\dots, \ihat_{(k)}}^\alpha$  if and only if $\max_{j\in[k]}|X_{\ihat_{(j)}} - t_{\ihat_{(j)}}| \leq r_\alpha(\theta^{\tvec})$.
\end{restatable}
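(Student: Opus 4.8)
The plan is to adapt the proof of Lemma~\ref{lemma:mainlemma} essentially verbatim, with $\tvec_{(k)} = \min_{i\in[k]} t_{\ihat_{(i)}}$ taking the role that $t$ played in the case $k=1$ and the generalized suboptimalities $\Delta_j^{(k)}$ taking the role of the $\Delta_j$. By definition of the projected region, $\tvec \in \widehat C^\alpha_{\ihat_{(1)},\dots,\ihat_{(k)}}$ if and only if there exists some $\theta$ with $\theta_{\ihat_{(i)}} = t_{\ihat_{(i)}}$ for all $i\in[k]$ and $X\in A_\alpha(\theta)$. It therefore suffices to establish: (a) $X\in A_\alpha(\theta^{\tvec})$ if and only if $\max_{i\in[k]}|X_{\ihat_{(i)}} - t_{\ihat_{(i)}}| \le r_\alpha(\theta^{\tvec})$; and (b) whenever some $\theta$ with the prescribed top-$k$ coordinates satisfies $X\in A_\alpha(\theta)$, so does $\theta^{\tvec}$. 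Since $\theta^{\tvec}$ carries the prescribed top-$k$ coordinates, (a) and (b) together give the stated equivalence.

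For (a) I would first read off the relevant features of $\theta^{\tvec}$ from Eq.~\eqref{eq:worst-case-theta-topk}. Every non-top-$k$ coordinate of $\theta^{\tvec}$ is at most $\tvec_{(k)}$, whereas the top-$k$ coordinates equal the values $t_{\ihat_{(i)}} \ge \tvec_{(k)}$; hence $\theta_{(k)}(\theta^{\tvec}) = \tvec_{(k)}$ (ties among non-top-$k$ coordinates equal to $\tvec_{(k)}$ do not change this), $\Delta_j^{(k)}(\theta^{\tvec}) = 0$ for every top-$k$ index $j$, and for $j\notin\{\ihat_{(1)},\dots,\ihat_{(k)}\}$ one has $\Delta_j^{(k)}(\theta^{\tvec}) = \frac{2}{3}(\tvec_{(k)} - X_j)$ when $X_j < \tvec_{(k)}$ and $0$ otherwise---exactly as in the $k=1$ computation. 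Substituting into the definition of $A_\alpha(\theta^{\tvec})$: a non-top-$k$ coordinate with $X_j < \tvec_{(k)}$ gives $|X_j - \theta_j^{\tvec}| = \frac{1}{3}(\tvec_{(k)} - X_j) = \frac{1}{2}\Delta_j^{(k)}(\theta^{\tvec})$, so its constraint holds automatically; a non-top-$k$ coordinate with $X_j \ge \tvec_{(k)}$ gives the constraint $X_j - \tvec_{(k)} \le r_\alpha(\theta^{\tvec})$, which, picking $p\in[k]$ with $t_{\ihat_{(p)}} = \tvec_{(k)}$ and using $X_j \le X_{\ihat_{(k)}} \le X_{\ihat_{(p)}}$, is implied by the constraint $|X_{\ihat_{(p)}} - t_{\ihat_{(p)}}| \le r_\alpha(\theta^{\tvec})$; and the remaining constraints are precisely $|X_{\ihat_{(i)}} - t_{\ihat_{(i)}}| \le r_\alpha(\theta^{\tvec})$ for $i\in[k]$. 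This proves (a); moreover $X\in A_\alpha(\theta^{\tvec})$ means $\theta^{\tvec}\in\widehat C^\alpha$, so projecting onto the top-$k$ coordinates shows $\tvec\in\widehat C^\alpha_{\ihat_{(1)},\dots,\ihat_{(k)}}$, which is the ``if'' half of the lemma.

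For (b), the strategy mirrors the corresponding step in the proof of Lemma~\ref{lemma:mainlemma}: one shows that $\theta^{\tvec}$ is the most favorable parameter among all $\theta$ carrying the prescribed top-$k$ coordinates. Given such a witness $\theta$ with $X\in A_\alpha(\theta)$, each top-$k$ coordinate obeys $|X_{\ihat_{(i)}} - t_{\ihat_{(i)}}| \le r_\alpha(\theta) \vee \frac{1}{2}\Delta_{\ihat_{(i)}}^{(k)}(\theta)$, so by (a) it is enough to bound both $r_\alpha(\theta)$ and $\frac{1}{2}\Delta_{\ihat_{(i)}}^{(k)}(\theta)$ above by $r_\alpha(\theta^{\tvec})$. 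This is done by a coordinatewise case analysis, using that $r_\alpha$ is non-increasing in each gap (since $\mathbf{S}$ is coordinatewise decreasing), that an inactive coordinate $j$ under $\theta$ satisfies $|X_j - \theta_j| \le \frac{1}{2}\Delta_j^{(k)}(\theta)$, and that $X_{\ihat_{(1)}} \ge \cdots \ge X_{\ihat_{(k)}} \ge X_j$ for every non-top-$k$ index $j$. I expect the main obstacle---and the only genuinely new point relative to $k=1$---to be controlling $\theta_{(k)}(\theta)$, the $k$-th largest coordinate of the witness, which plays the role of $\theta^*(\theta)$ but need not equal any prescribed coordinate: one must argue that whichever coordinate attains it is active under $\theta$, hence within $r_\alpha(\theta)$ of its observed value, and then combine this with $X_j \le X_{\ihat_{(k)}}$ and $t_{\ihat_{(p)}} = \tvec_{(k)}$ to push the comparison through. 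A small amount of additional care is needed to handle ties when several coordinates share the value $\theta_{(k)}(\theta)$ or $\tvec_{(k)}$.
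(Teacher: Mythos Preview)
Your approach is the same as the paper's, and part~(a) (the $(\Leftarrow)$ direction) matches the paper's proof essentially line for line. For part~(b), the paper's argument is more streamlined than your sketch in two respects. First, it opens with a standalone lemma (Lemma~\ref{lemma:winners_active-topk}, the top-$k$ analogue of Lemma~\ref{lemma:winner_active}) showing that \emph{every} empirical top-$k$ winner is active under any $\theta$ with $X\in A_\alpha(\theta)$; this immediately gives $|X_{\ihat_{(i)}}-t_{\ihat_{(i)}}|\le r_\alpha(\theta)$ for all $i\in[k]$ and removes the need to bound $\tfrac12\Delta_{\ihat_{(i)}}^{(k)}(\theta)$ separately. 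Second, the inequality $\theta_{(k)}(\theta)\ge\tvec_{(k)}$ that you flag as the main obstacle is in fact immediate---the witness $\theta$ already has $k$ coordinates (the prescribed ones $\ihat_{(1)},\dots,\ihat_{(k)}$) valued at least $\tvec_{(k)}$---so no analysis of which coordinate attains $\theta_{(k)}$ is required.

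The genuinely delicate step, which your sketch leaves as ``coordinatewise case analysis,'' is the comparison $r_\alpha(\theta)\le r_\alpha(\theta^{\tvec})$. Naive monotonicity of $r_\alpha$ in the gaps is not enough, because it is \emph{not} true in general that $\Delta_j^{(k)}(\theta)\ge\Delta_j^{(k)}(\theta^{\tvec})$ for all $j$; $\theta$ may well have smaller gaps than $\theta^{\tvec}$ at some inactive coordinates. The paper handles this with a contradiction argument packaged as Lemma~\ref{lemma:increasing_coord-topk} (the top-$k$ analogue of Lemma~\ref{lemma:increasing_coord}): if some $j\in\I_\alpha^c(\theta^{\tvec})$ had $\Delta_j^{(k)}(\theta)<\Delta_j^{(k)}(\theta^{\tvec})$, then combining $\theta_{(k)}(\theta)\ge\tvec_{(k)}$ with the explicit form of $\theta_j^{\tvec}$ gives $\theta_j-X_j>\tfrac12\Delta_j^{(k)}(\theta^{\tvec})$, and the lemma guarantees that for at least one such $j$ this exceeds the radius of $A_\alpha(\theta)$ at coordinate $j$, contradicting $X\in A_\alpha(\theta)$. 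Your outline does not make this step explicit, and it is the piece most worth filling in.
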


One issue with Lemma \ref{lemma:main-lemma-topk} is that for non-trivially large $k$ the grid search approach described at the end of Section \ref{sec:conf_int_winner} is not feasible: instead of searching over a one-dimensional grid as before, we would need to search over a $k$-dimensional grid. To mitigate this issue, we show a reduction of Lemma \ref{lemma:main-lemma-topk} to a one-dimensional problem. The following lemma no longer gives an exact characterization of points in the projection set $\widehat C^\alpha_{\ihat_{(1)},\dots,\ihat_{(k)}}$, but it gives a sufficient condition that allows forming a valid confidence set.

\begin{restatable}{lemma}{topkoned}
\label{lemma:topk-1d}
Fix $\rvec = (r_{\ihat_{(1)}},\dots,r_{\ihat_{(k)}})\in\R^k$.
Given $r\in\R$, define
\begin{equation}
\label{eq:thetatil}
\tilde\theta_j^r = \begin{cases} \X_j-r & \text{if } j \in \{\ihat_{(1)},\dots,\ihat_{(k)}\}; \\ \min\left\{\frac{2}{3}\X_j + \frac{1}{3}(\X_{\ihat_{(k)}} - r), \X_{\ihat_{(k)}} - r \right\} & \text{if } j \not\in \{\ihat_{(1)},\dots,\ihat_{(k)}\}.\end{cases}
\end{equation}
If $(\X_{\ihat_{(1)}} - r_{\ihat_{(1)}},\dots, \X_{\ihat_{(k)}} - r_{\ihat_{(k)}}) \in \widehat C_{\ihat_{(1)},\dots, \ihat_{(k)}}$, then $|r_{\ihat_{(j_0)}}| \leq r_\alpha\left(\tilde \theta^{|r_{\ihat_{(j_0)}}|}\right)$, where $j_0 = \argmax_{j\in[k]} |r_{\ihat_{(j)}}|$.
\end{restatable}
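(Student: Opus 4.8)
The plan is to reduce the $k$-dimensional membership condition of Lemma \ref{lemma:main-lemma-topk} to a one-dimensional one by comparing the given vector $\tvec = (\X_{\ihat_{(1)}} - r_{\ihat_{(1)}},\dots,\X_{\ihat_{(k)}} - r_{\ihat_{(k)}})$ against a cleverly chosen vector $\tilde\tvec$ all of whose coordinates equal $\X_{\ihat_{(j)}} - |r_{\ihat_{(j_0)}}|$, or rather whose $\ihat_{(j)}$-coordinates are $\X_{\ihat_{(j)}} - r$ for the common radius $r = |r_{\ihat_{(j_0)}}|$. The key point to establish is a monotonicity/domination statement: if $\tvec \in \widehat C^\alpha_{\ihat_{(1)},\dots,\ihat_{(k)}}$, then the ``more favorable'' symmetric vector, which replaces each individual radius $r_{\ihat_{(j)}}$ by the single largest radius $r = |r_{\ihat_{(j_0)}}|$ on the lower side, is also in the region. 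Concretely, I would apply Lemma \ref{lemma:main-lemma-topk} twice: once to conclude $\max_{j\in[k]}|\X_{\ihat_{(j)}} - (\X_{\ihat_{(j)}} - r_{\ihat_{(j)}})| = \max_j |r_{\ihat_{(j)}}| = |r_{\ihat_{(j_0)}}| \le r_\alpha(\theta^{\tvec})$, and once in the reverse direction to recognize $\tilde\theta^r$ in \eqref{eq:thetatil} as precisely the worst-case vector $\theta^{\tilde\tvec}$ associated to $\tilde\tvec = (\X_{\ihat_{(1)}} - r, \dots, \X_{\ihat_{(k)}} - r)$ with $r = |r_{\ihat_{(j_0)}}|$, since for that choice $\tilde\tvec_{(k)} = \min_j (\X_{\ihat_{(j)}} - r) = \X_{\ihat_{(k)}} - r$ (using that $\X_{\ihat_{(k)}}$ is the smallest of the top-$k$ data values).

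So the core of the argument is: from $|r_{\ihat_{(j_0)}}| \le r_\alpha(\theta^{\tvec})$ I want to deduce $|r_{\ihat_{(j_0)}}| \le r_\alpha(\tilde\theta^{|r_{\ihat_{(j_0)}}|}) = r_\alpha(\theta^{\tilde\tvec})$. It therefore suffices to show $r_\alpha(\theta^{\tvec}) \le r_\alpha(\theta^{\tilde\tvec})$, i.e. that passing from the original $\tvec$ to the symmetrized $\tilde\tvec$ only enlarges (or preserves) the active radius. Since $r_\alpha$ is, by the remark after \eqref{eq:active-radius}, non-increasing in each suboptimality gap $\Delta_j^{(k)}$, I would show that the gap vector $\Delta^{(k)}(\theta^{\tilde\tvec})$ is coordinatewise $\le \Delta^{(k)}(\theta^{\tvec})$. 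For the top-$k$ coordinates: $\theta^{\tvec}_{\ihat_{(j)}} = \X_{\ihat_{(j)}} - r_{\ihat_{(j)}} \ge \X_{\ihat_{(j)}} - r = \theta^{\tilde\tvec}_{\ihat_{(j)}}$ only when $r_{\ihat_{(j)}} \le r$; but the relevant comparison is of gaps relative to the $k$-th largest coordinate, so I must track how the $k$-th largest coordinate of $\theta$ itself changes. The clean way is: lowering some top-$k$ coordinates (on the lower side we have $r_{\ihat_{(j)}} \ge 0$, and replacing each by the max $r \ge r_{\ihat_{(j)}}$ lowers them) can only lower $\theta^{\tvec}_{(k)}$, hence lowers $\theta^{\tvec}_{(k)} - \theta_j$ for inactive $j$; and the $\min\{\tfrac23 \X_j + \tfrac13 \tvec_{(k)}, \tvec_{(k)}\}$ formula for non-top-$k$ coordinates is monotone increasing in $\tvec_{(k)}$, so those coordinates also drop, in a way that I'd check keeps all gaps $\Delta_j^{(k)}$ from increasing. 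Working this monotonicity through carefully is what delivers $r_\alpha(\theta^{\tvec}) \le r_\alpha(\tilde\theta^{|r_{\ihat_{(j_0)}}|})$.

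The main obstacle I anticipate is bookkeeping around the $k$-th order statistic $\tvec_{(k)} = \min_{j\in[k]} t_{\ihat_{(j)}}$: lowering the top-$k$ coordinates by different amounts $r_{\ihat_{(j)}}$ changes which index attains the minimum and by how much, and the suboptimality gaps $\Delta_j^{(k)}$ — and through them $r_\alpha$ and the set of active indices — depend on $\theta$ only via $\theta_{(k)}$ and the coordinatewise differences. One must verify that symmetrizing to the common radius $r = |r_{\ihat_{(j_0)}}|$ does not accidentally make $\tilde\tvec_{(k)}$ larger than $\tvec_{(k)}$ (it does not, since each coordinate drops by at least as much). A secondary check is confirming that $\tilde\theta^r$ as written in \eqref{eq:thetatil} literally equals $\theta^{\tilde\tvec}$ from \eqref{eq:worst-case-theta-topk} — this is immediate once one observes $\tilde\tvec_{(k)} = \X_{\ihat_{(k)}} - r$. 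With these two observations in hand, the chain $|r_{\ihat_{(j_0)}}| \le r_\alpha(\theta^{\tvec}) \le r_\alpha(\theta^{\tilde\tvec}) = r_\alpha(\tilde\theta^{|r_{\ihat_{(j_0)}}|})$ closes the proof.
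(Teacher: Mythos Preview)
Your proposal is correct and follows essentially the same approach as the paper: apply Lemma~\ref{lemma:main-lemma-topk} to obtain $|r_{\ihat_{(j_0)}}|\le r_\alpha(\theta^{\tvec})$, define the symmetrized $\tilde\tvec=(X_{\ihat_{(1)}}-|r_{\ihat_{(j_0)}}|,\dots,X_{\ihat_{(k)}}-|r_{\ihat_{(j_0)}}|)$, argue that the gaps of $\theta^{\tilde\tvec}$ are coordinatewise no larger than those of $\theta^{\tvec}$ so $r_\alpha(\theta^{\tilde\tvec})\ge r_\alpha(\theta^{\tvec})$, and identify $\theta^{\tilde\tvec}=\tilde\theta^{|r_{\ihat_{(j_0)}}|}$. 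The paper dispatches the gap comparison in one sentence (``the gaps induced by $\tilde\tvec$ can only be smaller''), whereas you spell out the bookkeeping; your one slightly loose remark is the parenthetical ``on the lower side we have $r_{\ihat_{(j)}}\ge 0$'' --- this need not hold, but your conclusion that each coordinate drops is still correct since $|r_{\ihat_{(j_0)}}|\ge |r_{\ihat_{(j)}}|\ge r_{\ihat_{(j)}}$ regardless of sign.
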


\noindent Lemma \ref{lemma:topk-1d} implies a one-dimensional approach to inference on the top $k$ winners, similarly to Theorem \ref{thm:winner_int}.

\begin{restatable}[Zoom correction: top $k$]{thm}{topk}
\label{thm:winner-topk}
Let $r_{\max} = \max\{r: r\leq r_\alpha(\tilde\theta^{r}) \}$, where $\tilde\theta^{r}$ is as in Eq.~\eqref{eq:thetatil}. Then,
\[P\left(\theta_{\ihat_{(1)},\dots,\ihat_{(k)}} \in [X_{\ihat_{(1)},\dots,\ihat_{(k)}} \pm r_{\max}] \right)\geq 1-\alpha.\]
\end{restatable}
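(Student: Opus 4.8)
\textbf{Proof proposal for Theorem \ref{thm:winner-topk}.} The plan is to derive the coverage guarantee directly from the validity of the zoom test (Proposition \ref{prop:test}) combined with the sufficient condition in Lemma \ref{lemma:topk-1d}. By construction of the confidence region $\widehat C^\alpha$ obtained by inverting the zoom test, we know $P(\theta \in \widehat C^\alpha) \geq 1-\alpha$, and hence, projecting onto the top-$k$ coordinates, $P\big(\theta_{\ihat_{(1)},\dots,\ihat_{(k)}} \in \widehat C^\alpha_{\ihat_{(1)},\dots,\ihat_{(k)}}\big)\geq 1-\alpha$. So it suffices to show the deterministic inclusion
\[
\widehat C^\alpha_{\ihat_{(1)},\dots,\ihat_{(k)}} \;\subseteq\; \big[X_{\ihat_{(1)},\dots,\ihat_{(k)}} \pm r_{\max}\big],
\]
i.e. that whenever $\tvec = (X_{\ihat_{(1)}} - r_{\ihat_{(1)}},\dots,X_{\ihat_{(k)}} - r_{\ihat_{(k)}}) \in \widehat C^\alpha_{\ihat_{(1)},\dots,\ihat_{(k)}}$, we have $|r_{\ihat_{(j)}}| \leq r_{\max}$ for every $j\in[k]$, where of course any point in the projection set can be written in this form by choosing the $r_{\ihat_{(j)}}$ appropriately.

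Here is the core step. Suppose $\tvec\in\widehat C^\alpha_{\ihat_{(1)},\dots,\ihat_{(k)}}$ and let $j_0 = \argmax_{j\in[k]}|r_{\ihat_{(j)}}|$, so that $\max_{j\in[k]}|r_{\ihat_{(j)}}| = |r_{\ihat_{(j_0)}}|$. By Lemma \ref{lemma:topk-1d}, this membership implies $|r_{\ihat_{(j_0)}}| \leq r_\alpha\big(\tilde\theta^{\,|r_{\ihat_{(j_0)}}|}\big)$. Writing $\rho = |r_{\ihat_{(j_0)}}| \geq 0$, this says exactly that $\rho$ satisfies $\rho \leq r_\alpha(\tilde\theta^{\rho})$, hence $\rho$ belongs to the set $\{r : r \leq r_\alpha(\tilde\theta^r)\}$ over which $r_{\max}$ is the maximum. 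Therefore $\rho \leq r_{\max}$, i.e. $\max_{j\in[k]}|r_{\ihat_{(j)}}| \leq r_{\max}$, which is precisely the statement that $\tvec \in [X_{\ihat_{(1)},\dots,\ihat_{(k)}} \pm r_{\max}]$. Chaining this with the coverage of $\widehat C^\alpha_{\ihat_{(1)},\dots,\ihat_{(k)}}$ gives the claimed bound.

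A couple of loose ends should be checked to make the argument airtight. First, one should confirm that $r_{\max}$ is well-defined, i.e. that the set $\{r : r\leq r_\alpha(\tilde\theta^r)\}$ is nonempty (it contains $r=0$, since $r_\alpha \geq 0$) and that the supremum is attained --- this follows from continuity of $r \mapsto r_\alpha(\tilde\theta^r)$, which in turn follows from continuity of $\mathbf{S}$ and of the map $r\mapsto\tilde\theta^r$ together with the characterization of $r_\alpha$ in Eq. \eqref{eq:active-radius}; one also wants the set to be bounded above, which holds because $r_\alpha(\tilde\theta^r) \leq r_\alpha(0)$ for all $r$ (increasing $r$ only increases the gaps $\Delta_j^{(k)}$ of $\tilde\theta^r$, which cannot increase $r_\alpha$), so $r_{\max}\leq r_\alpha(0)$. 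Second, one should note the notational convention that $[X_{\ihat_{(1)},\dots,\ihat_{(k)}} \pm r_{\max}]$ denotes the $\ell_\infty$ ball $\{\tvec : \max_{j}|X_{\ihat_{(j)}} - t_{\ihat_{(j)}}| \leq r_{\max}\}$, which is what makes the ``$\max$'' in the definition of $j_0$ the relevant quantity.

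The main obstacle is not in this final assembly --- which is essentially a one-line consequence of Lemma \ref{lemma:topk-1d} --- but rather lies upstream in Lemma \ref{lemma:topk-1d} itself: the delicate part is verifying that the one-dimensional vector $\tilde\theta^r$ with all top-$k$ coordinates set symmetrically to $X_{\ihat_{(j)}} - r$ is "conservative enough", i.e. that membership of the actual (asymmetric) point $\tvec$ in the projection set forces $|r_{\ihat_{(j_0)}}| \leq r_\alpha(\tilde\theta^{|r_{\ihat_{(j_0)}}|})$ even though $\tilde\theta^{|r_{\ihat_{(j_0)}}|}$ may differ from the genuine worst-case vector $\theta^{\tvec}$ of Eq. \eqref{eq:worst-case-theta-topk}. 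Granting that lemma, Theorem \ref{thm:winner-topk} is immediate.
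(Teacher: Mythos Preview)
Your argument is correct and matches the paper's approach: the paper treats Theorem~\ref{thm:winner-topk} as an immediate consequence of Lemma~\ref{lemma:topk-1d} and does not write out a separate proof, and your assembly---project, apply Lemma~\ref{lemma:topk-1d} to get $\rho\le r_\alpha(\tilde\theta^{\rho})$, conclude $\rho\le r_{\max}$---is exactly the intended one.

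One minor slip in your loose-ends discussion: increasing $r$ actually \emph{decreases} the gaps, since $\Delta_j^{(k)}(\tilde\theta^r)=\max\{0,\tfrac{2}{3}(\hDelta_j^{(k)}-r)\}$ (cf.\ the proof of Lemma~\ref{lemma:endpoint_equation_topk}). Your conclusion $r_\alpha(\tilde\theta^r)\le r_\alpha(0)$ is nonetheless correct, for the simpler reason that $r_\alpha(\theta)\le r_\alpha(0)$ for \emph{every} $\theta$: all gaps are nonnegative and $r_\alpha$ is non-increasing in the gaps, so the zero-gap vector attains the maximal active radius.
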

\noindent As in the case of Theorem \ref{thm:winner_int}, the interval width can be no larger than the fully simultanteous correction, $r_{\max}\in[-r_\alpha(0), r_\alpha(0)]$, obtained when $\theta_1=\dots=\theta_m=0$. This means that in practice we can find $r_{\max}$ by performing a grid search over $r\in [-r_\alpha(0), r_\alpha(0)]$. Of course, grid search allows for numerical inaccuracies, however we do not expect such inaccuracies to affect the inferential validity.

Next, we discuss an analogue of the step-down correction from Section \ref{sec:stepdown}. Again, we assume that $\mathbf{S}(v)$ takes a union bound: $\mathbf{S}(v) = \sum_{i=1}^m S(v_i)$ for $v\in\R_+^m$. We let $\hDelta_j^{(k)} = X_{\ihat_{(k)}} - X_j$ denote the empirical suboptimality gaps for the top $k$ problem. It is not hard to show that $r_{\max}$ solves the \emph{same} equation as $r_l$ in Lemma \ref{lemma:endpoint_equation}, only replacing $\hDelta_j$ with $\hDelta_j^{(k)}$.

\begin{restatable}{lemma}{endpointeqtopk}
\label{lemma:endpoint_equation_topk}
The radius $r_{\max}$ solves
$\alpha = \sum_{j=1}^m S\left(\max\left\{r, \frac{\hDelta_j^{(k)} - r}{3}\right\}\right).$
\end{restatable}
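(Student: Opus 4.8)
\textbf{Proof proposal for Lemma~\ref{lemma:endpoint_equation_topk}.}

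The plan is to mimic the derivation of Lemma~\ref{lemma:endpoint_equation}, but now starting from the one-dimensional characterization of $r_{\max}$ supplied by Lemma~\ref{lemma:topk-1d} and Theorem~\ref{thm:winner-topk}. First I would unpack the defining equation $r_{\max} = \max\{r:\; r \leq r_\alpha(\tilde\theta^r)\}$ into a statement about the active radius: since $r_\alpha(\theta)$ is the smallest $r'$ with $\mathbf{S}(r'\vee \tfrac{\Delta(\theta)}{2}) \leq \alpha$, the inequality $r \leq r_\alpha(\tilde\theta^r)$ is equivalent to $\mathbf{S}\big(r \vee \tfrac{\Delta^{(k)}(\tilde\theta^r)}{2}\big) > \alpha$ for all radii strictly below $r$ — or, using continuity and monotonicity of $\mathbf{S}$, to the boundary condition $\mathbf{S}\big(r \vee \tfrac{\Delta^{(k)}(\tilde\theta^r)}{2}\big) = \alpha$ at $r = r_{\max}$. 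So the task reduces to computing the suboptimality gaps $\Delta_j^{(k)}(\tilde\theta^r)$ of the vector $\tilde\theta^r$ defined in Eq.~\eqref{eq:thetatil}, and plugging them into $\mathbf{S}(v) = \sum_{j=1}^m S(v_j)$.

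Next I would carry out that computation. For $\tilde\theta^r$, the $k$ selected coordinates all sit at $X_{\ihat_{(j)}} - r$, so the $k$-th largest coordinate among the selected ones is $\max_i(\tilde\theta^r)_{(k)} = X_{\ihat_{(k)}} - r$ (one needs to check the non-selected coordinates, which are $\min\{\tfrac23 X_j + \tfrac13(X_{\ihat_{(k)}}-r),\, X_{\ihat_{(k)}}-r\} \leq X_{\ihat_{(k)}} - r$, don't exceed this, and they don't since $X_j \leq X_{\ihat_{(k)}}$ for $j \notin\{\ihat_{(1)},\dots,\ihat_{(k)}\}$ implies $\tfrac23 X_j + \tfrac13(X_{\ihat_{(k)}}-r) \leq X_{\ihat_{(k)}} - \tfrac{r}{3} \cdot \text{(something)}$, more precisely it is $\le X_{\ihat_{(k)}}-r$ only when $r$ is small, but the $\min$ caps it at $X_{\ihat_{(k)}}-r$ regardless). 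Hence $\theta^{(k)}(\tilde\theta^r) := (\tilde\theta^r)_{(k)} = X_{\ihat_{(k)}} - r$. Then for a non-selected $j$, $\Delta_j^{(k)}(\tilde\theta^r) = \max\{(X_{\ihat_{(k)}}-r) - (\tilde\theta^r)_j,\,0\}$; substituting $(\tilde\theta^r)_j = \min\{\tfrac23 X_j + \tfrac13(X_{\ihat_{(k)}}-r),\, X_{\ihat_{(k)}}-r\}$ gives $\Delta_j^{(k)}(\tilde\theta^r) = \max\{\tfrac23(X_{\ihat_{(k)}} - X_j) - \tfrac23 r \cdot(\ldots),\,0\}$ — working through the arithmetic, $(X_{\ihat_{(k)}}-r) - (\tfrac23 X_j + \tfrac13(X_{\ihat_{(k)}}-r)) = \tfrac23(X_{\ihat_{(k)}}-r) - \tfrac23 X_j = \tfrac23(\hDelta_j^{(k)} - r)$, so $\Delta_j^{(k)} = \tfrac23(\hDelta_j^{(k)}-r)_+$ and therefore $\tfrac{\Delta_j^{(k)}}{2} = \tfrac13(\hDelta_j^{(k)}-r)_+$, matching the $\frac{\hDelta_j^{(k)}-r}{3}$ term. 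For selected $j$ (including $j = \ihat_{(k)}$ itself, where $\hDelta_{\ihat_{(k)}}^{(k)} = 0$), $\Delta_j^{(k)}(\tilde\theta^r) = \max\{(X_{\ihat_{(k)}}-r) - (X_{\ihat_{(j)}}-r),\,0\} = \max\{X_{\ihat_{(k)}} - X_{\ihat_{(j)}},0\} = 0$ since $X_{\ihat_{(j)}} \geq X_{\ihat_{(k)}}$ for $j\le k$. Thus for selected coordinates $r \vee \tfrac{\Delta_j^{(k)}}{2} = r$, and for non-selected coordinates $r \vee \tfrac{\Delta_j^{(k)}}{2} = \max\{r,\, \tfrac13(\hDelta_j^{(k)}-r)\}$ (the positive part is absorbed because $r \geq 0$). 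Collecting terms, $\mathbf{S}\big(r\vee\tfrac{\Delta^{(k)}}{2}\big) = \sum_{j=1}^m S\big(\max\{r, \tfrac{\hDelta_j^{(k)}-r}{3}\}\big)$, where the selected indices contribute $S(r) = S(\max\{r, \tfrac{\hDelta_j^{(k)}-r}{3}\})$ automatically since $\hDelta_j^{(k)} \leq r$ forces $\tfrac{\hDelta_j^{(k)}-r}{3} \leq 0 \leq r$ for those $j$. Setting this equal to $\alpha$ yields the claimed equation.

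The main obstacle, as in the $k=1$ case, is the direction of the equivalence: Lemma~\ref{lemma:topk-1d} only gives a \emph{necessary} condition for a point to lie in the projected region, so I need to argue that $r_{\max}$ as \emph{defined} in Theorem~\ref{thm:winner-topk} (via the fixed-point inequality $r \le r_\alpha(\tilde\theta^r)$) exactly solves the displayed equation, not merely that it's bounded by a solution. This is a matter of carefully tracking that the set $\{r: r \le r_\alpha(\tilde\theta^r)\}$ is an interval containing $0$ and that its right endpoint is characterized by equality in $\mathbf{S}(\cdot) = \alpha$ — here I would lean on continuity of $\mathbf{S}$ and of $r\mapsto r_\alpha(\tilde\theta^r)$ (which in turn follows from continuity of the gaps in $r$ and of $\mathbf{S}$), exactly as was done implicitly for $r_l$ in Lemma~\ref{lemma:endpoint_equation}. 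A minor secondary point is to double-check the claim $(\tilde\theta^r)_{(k)} = X_{\ihat_{(k)}} - r$, i.e.\ that no non-selected coordinate is pushed above $X_{\ihat_{(k)}} - r$; this is immediate from the outer $\min$ in Eq.~\eqref{eq:thetatil}. Since all of this parallels Section~\ref{sec:stepdown} with $\hDelta_j$ replaced by $\hDelta_j^{(k)}$, the proof is short once the gap computation above is in place.
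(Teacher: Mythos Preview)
Your proposal is correct and follows essentially the same approach as the paper: compute the gaps $\Delta_j^{(k)}(\tilde\theta^r) = \max\{0,\tfrac{2}{3}(\hDelta_j^{(k)}-r)\}$, invoke continuity to get the boundary condition $r_{\max} = r_\alpha(\tilde\theta^{r_{\max}})$, and substitute into the defining equation for $r_\alpha$. You supply more detail than the paper does (e.g., separately verifying $(\tilde\theta^r)_{(k)} = X_{\ihat_{(k)}}-r$ and handling selected versus non-selected coordinates explicitly), but the argument is the same.
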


This means that the step-down method in Algorithm \ref{alg:stepdownlow} can be immediately repurposed for inference on the top $k$ winners. In particular, running Algorithm \ref{alg:stepdownlow} with the suboptimality gaps $\hDelta_j^{(k)}$ yields an estimate $\hat r_{\max}$ that is guaranteed to upper bound $r_{\max}$.

\begin{restatable}[Step-down zoom correction: top $k$]{thm}{stepdowntopk}
\label{thm:stepdowntopk}
Let $\hat r_{\max}$ denote the output of  Algorithm~\ref{alg:stepdownlow} run with $\hDelta_j^{(k)}$ instead of $\hDelta_j$. Then, $\hat r_{\max}\geq r_{\max}$. Consequently,
\[P\left(\theta_{\ihat_{(1)},\dots,\ihat_{(k)}} \in [X_{\ihat_{(1)},\dots,\ihat_{(k)}} \pm \hat r_{\max}]\right)\geq 1-\alpha.\]
\end{restatable}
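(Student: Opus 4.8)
The plan is to reduce Theorem~\ref{thm:stepdowntopk} to the already-established machinery for the $k=1$ case. The key structural observation is Lemma~\ref{lemma:endpoint_equation_topk}, which says that the quantity $r_{\max}$ for the top-$k$ problem solves exactly the same fixed-point equation $\alpha = \S_l(r) = \sum_{j=1}^m S\!\left(\max\{r, (\hDelta_j^{(k)} - r)/3\}\right)$ that governs the lower radius $r_l$ in Lemma~\ref{lemma:endpoint_equation}, with the only change being that the empirical gaps $\hDelta_j$ are replaced by $\hDelta_j^{(k)} = X_{\ihat_{(k)}} - X_j$. So once we verify that the proof of Theorem~\ref{thm:stepdown} for the lower endpoint $r_l$ used \emph{only} this fixed-point characterization (and not anything else specific to $k=1$), the same argument goes through verbatim with $\hDelta_j^{(k)}$ in place of $\hDelta_j$.

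Concretely, I would proceed in three steps. First, recall from the discussion following Theorem~\ref{thm:stepdown} that its proof establishes two facts about the output $\hat r_l$ of Algorithm~\ref{alg:stepdownlow}: (i) $\S_l(\hat r_l) \leq \alpha$, and (ii) $\S_l(r) < \alpha$ for every $r > \hat r_l$; together these imply $\hat r_l \geq r_l$ whenever $r_l$ is a root of $\S_l$, even though $\S_l$ need not be monotone. Second, I would observe that running Algorithm~\ref{alg:stepdownlow} with the sorted gaps $\hDelta_{(1)}^{(k)} \geq \cdots \geq \hDelta_{(m)}^{(k)} = 0$ produces an output $\hat r_{\max}$ for which the identical two bounds hold with respect to the function $\S_l^{(k)}(r) := \sum_{j=1}^m S\!\left(\max\{r, (\hDelta_j^{(k)} - r)/3\}\right)$ — the algorithm's logic and the inequalities in its analysis never referenced the meaning of the gaps, only their numerical values and ordering, and $\hDelta_{(m)}^{(k)} = X_{\ihat_{(k)}} - X_{\ihat_{(k)}} = 0$ still holds so the equation still has a root in $(0,\infty)$ by the same $\S_l^{(k)}(0) \geq 1$, $\S_l^{(k)}(\infty) = 0$ boundary argument. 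Combining with Lemma~\ref{lemma:endpoint_equation_topk}, which identifies $r_{\max}$ as a root of $\S_l^{(k)}$, gives $\hat r_{\max} \geq r_{\max}$.

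Third, I would close the coverage claim. By Theorem~\ref{thm:winner-topk}, the symmetric box $[X_{\ihat_{(1)},\dots,\ihat_{(k)}} \pm r_{\max}]$ covers $\theta_{\ihat_{(1)},\dots,\ihat_{(k)}}$ with probability at least $1-\alpha$. Since $\hat r_{\max} \geq r_{\max}$, the larger box $[X_{\ihat_{(1)},\dots,\ihat_{(k)}} \pm \hat r_{\max}]$ contains it coordinatewise and hence also covers the target with probability at least $1-\alpha$, which is the desired conclusion. The main obstacle — really the only nontrivial point — is making sure that the non-monotonicity argument for $\S_l$ in the proof of Theorem~\ref{thm:stepdown} genuinely transfers: one must check that the step-down bookkeeping (the updates $\alpha_{j+1} = \alpha_j - S((\hDelta_{(j)}^{(k)} - \hat r^{(j)})/3)$ and the stopping rule $\hDelta_{(j)}^{(k)} \leq 4\hat r^{(j)}$) still guarantees $\S_l^{(k)}(r) < \alpha$ for all $r$ exceeding the returned value, using the bound $\S_l^{(k)}(r) \leq m \cdot S(r)$ exactly as before. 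I expect this to be immediate once the earlier proof is in hand, since nothing in it is special to $k=1$ beyond the name of the gaps.
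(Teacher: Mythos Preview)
Your proposal is correct and follows exactly the approach the paper takes: the paper does not give a separate proof of Theorem~\ref{thm:stepdowntopk} but instead observes (via Lemma~\ref{lemma:endpoint_equation_topk}) that $r_{\max}$ satisfies the same equation as $r_l$ with $\hDelta_j^{(k)}$ in place of $\hDelta_j$, so the lower-bound half of Theorem~\ref{thm:stepdown}'s proof applies verbatim. One tiny inaccuracy: you write $\hDelta_{(m)}^{(k)} = 0$, but in fact the smallest sorted gap is $X_{\ihat_{(k)}} - X_{\ihat_{(1)}} \leq 0$; this does not affect anything, since termination of the algorithm and the boundary condition $\S_l^{(k)}(0) \geq 1$ both follow from the existence of the index $j = \ihat_{(k)}$ with $\hDelta_{\ihat_{(k)}}^{(k)} = 0$.
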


\subsection{Inference on the population winner}
\label{sec:population_winner}

Next, we study inference on the value and index of the \emph{population winner}:
$$\theta^* = \max_{i\in[m]} \theta_i \quad\text{ and }\quad  i^* = \argmax_{i\in[m]} \theta_i.$$
In both cases, we form a confidence set as before: we invert $A_\alpha$ and apply a projection.

\paragraph{Inference on the value.} First, we show that $\widehat C^\alpha_{\ihat}$, used in Section \ref{sec:conf_int_winner}, is also a valid confidence set for $\theta^*$ under only mild assumptions.
The claim follows from the fact that $\{ \theta^* : X \in A_\alpha(\theta)\} = \{\theta_{\ihat} : X \in A_\alpha(\theta)\}$. In other words, inverting the test given by $A_\alpha$ and projecting it along coordinate ${i^*}$ is equivalent to inverting the same test and projecting it along ${\ihat}$.

We formalize this in Proposition~\ref{prop:population_winner}. We say that $\mathbf{S}(v)$ is symmetric in the entries of $v$ if, for any permutation $\sigma$, $\mathbf{S}(v) = \mathbf{S}(v_{\sigma})$, where $v_{\sigma}$ is obtained by permuting the entries of $v$ according to~$\sigma$.

\begin{restatable}{prop}{populationwinner}
    \label{prop:population_winner}
Suppose that $\mathbf{S}(v)$ is symmetric in the entries of $v$. Let 
$$\widehat C^{\alpha}_{*} = \{ \theta^* : X \in A_\alpha(\theta)\} \text{ and } \widehat C^{\alpha}_{\ihat} = \{ \theta_{\ihat} : X \in A_\alpha(\theta)\}.$$
Then, $\widehat C^{\alpha}_{*} = \widehat C_{\ihat}^\alpha$. Consequently, $P(\theta^* \in \widehat C^\alpha_{\ihat}) \geq 1-\alpha$.
\end{restatable}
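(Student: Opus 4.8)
The plan is to prove the set identity $\widehat C^\alpha_* = \widehat C^\alpha_{\ihat}$ directly, from which the coverage statement $P(\theta^* \in \widehat C^\alpha_{\ihat}) \geq 1-\alpha$ follows immediately because $P(\theta^* \in \widehat C^\alpha_*) = P(\exists\, \theta : \theta^* = \max_i \theta_i, X \in A_\alpha(\theta)) \geq P(\theta \in \widehat C^\alpha) \geq 1-\alpha$ by the validity of the zoom test (Proposition~\ref{prop:test}). So the whole content is the identity of the two projected sets.

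For the inclusion $\widehat C^\alpha_{\ihat} \subseteq \widehat C^\alpha_*$: given $t \in \widehat C^\alpha_{\ihat}$, there is some $\theta$ with $\theta_{\ihat} = t$ and $X \in A_\alpha(\theta)$. I want to produce a (possibly different) $\theta'$ with $\max_i \theta'_i = t$ and $X \in A_\alpha(\theta')$. The natural candidate is to cap all coordinates at $t$: set $\theta'_j = \min\{\theta_j, t\}$. Then $\max_i \theta'_i = t$ since coordinate $\ihat$ equals $t$. I must check $X \in A_\alpha(\theta')$. The key monotonicity facts to invoke: capping coordinates downward can only increase the suboptimality gaps $\Delta_j$ (since $\theta'^* = t = \theta'_{\ihat}$ while each $\theta'_j \le \theta_j$ with the old $\theta^* \ge t$... one must be slightly careful here — the old $\theta^*$ may exceed $t$), which, together with the symmetry/monotonicity structure of $\mathbf{S}$, means the active radius does not shrink, and hence the acceptance box $[\theta' \pm (r_\alpha \vee \Delta/2)]$ around $\theta'$ still contains $X$. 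Concretely one shows coordinatewise that $|X_j - \theta'_j| \le r_\alpha(\Delta') \vee \Delta'_j/2$: when $\theta'_j = \theta_j$ this is inherited from $X \in A_\alpha(\theta)$ after noting $r_\alpha(\Delta') \ge r_\alpha(\Delta)$ and $\Delta'_j \ge \Delta_j$ need comparing; when $\theta'_j = t < \theta_j$, one uses that $X_j \le X_{\ihat} \le \theta_{\ihat} + (\cdots) = t + (\cdots)$ is not immediate, so the cleanest route is to observe $X \in A_\alpha(\theta)$ forces $X_{\ihat} \le t + (r_\alpha \vee \Delta_{\ihat}/2)$ and use that. This is the inclusion that essentially re-uses the logic behind Lemma~\ref{lemma:mainlemma}.

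For the reverse inclusion $\widehat C^\alpha_* \subseteq \widehat C^\alpha_{\ihat}$: given $t \in \widehat C^\alpha_*$, there is $\theta$ with $\max_i \theta_i = t$ and $X \in A_\alpha(\theta)$. I need $\theta'$ with $\theta'_{\ihat} = t$ and $X \in A_\alpha(\theta')$. Here I would permute: let $i^* = \argmax_i \theta_i$, so $\theta_{i^*} = t$, and swap coordinates $i^*$ and $\ihat$ of $\theta$ (and correspondingly note $X_{\ihat} \ge X_{i^*}$). The permuted vector $\theta'$ has $\theta'_{\ihat} = t = \max_i \theta'_i$. Membership $X \in A_\alpha(\theta')$ is where symmetry of $\mathbf{S}$ enters: the multiset of gaps $\{\Delta'_j\}$ equals $\{\Delta_j\}$, so $r_\alpha$ is unchanged, and the acceptance box is just a permutation of the old one — except that $X$ is not permuted, so I must check the two coordinates that moved. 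At coordinate $\ihat$: need $|X_{\ihat} - t| \le r_\alpha \vee \Delta'_{\ihat}/2 = r_\alpha$ (since $\Delta'_{\ihat} = 0$); at coordinate $i^*$: need $|X_{i^*} - \theta_{\ihat}| \le r_\alpha \vee \Delta'_{i^*}/2$ where $\Delta'_{i^*} = t - \theta_{\ihat} = \Delta_{\ihat}$. Using $X_{i^*} \le X_{\ihat}$ and the original constraints $|X_{\ihat} - \theta_{\ihat}| \le r_\alpha \vee \Delta_{\ihat}/2$ and $|X_{i^*} - t| \le r_\alpha$, a short case analysis on whether $\ihat$ was active in the original $\theta$ closes it.

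The main obstacle is the bookkeeping in the coordinate that gets ``moved'' or ``capped'': showing $X$ still lands in the new box at that one coordinate requires combining the fact that $X_{\ihat}$ is the empirical maximum with the original acceptance constraint, and handling the active/inactive dichotomy (i.e., whether the relevant radius is $r_\alpha$ or $\Delta/2$). Everything else — the invariance of $r_\alpha$ under permutation (symmetry of $\mathbf{S}$) and its monotonicity in the gaps — is already available from Section~\ref{sec:test}. I would organize the proof as: (i) reduce to the set identity; (ii) prove $\supseteq$ via the swap argument, invoking symmetry; (iii) prove $\subseteq$ via the capping argument, invoking monotonicity of $r_\alpha$; (iv) note the coverage conclusion.
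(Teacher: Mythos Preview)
Your overall plan and your swap argument for $\widehat C^\alpha_* \subseteq \widehat C^\alpha_{\ihat}$ are essentially the paper's proof: permute coordinates $i^*$ and $\ihat$, use symmetry of $\mathbf{S}$ to keep $r_\alpha$ unchanged, and verify the two moved coordinates using $X_{\ihat} \ge X_{i^*}$, $\theta_{i^*} \ge \theta_{\ihat}$, and the fact that both indices are active (Lemma~\ref{lemma:winner_active} forces $\ihat$ active, and $\Delta_{i^*}=0$). No case split on activity of $\ihat$ is needed.

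The gap is in the other inclusion. Capping $\theta'_j = \min\{\theta_j, t\}$ does \emph{not} in general preserve $X \in A_\alpha(\cdot)$. Your claim that capping increases the gaps is backwards whenever $\theta^* > t$: capping lowers the maximum from $\theta^*$ to $t$, so for every uncapped coordinate $j$ the gap \emph{shrinks}, $\Delta'_j = t - \theta_j < \theta^* - \theta_j = \Delta_j$. At an inactive such coordinate the box half-width drops from $\Delta_j/2$ to $\max\{r_\alpha(\theta'),\, \Delta'_j/2\}$, and nothing forces $r_\alpha(\theta')$ to make up the difference. Concretely: take $m=3$, $\ihat=1$, $\theta=(0,\epsilon,-L)$ with $\epsilon$ small and $L$ large, and put $X_3 = -L + (L+\epsilon)/2$ on the upper edge of its box; after capping to $\theta'=(0,0,-L)$ the half-width at coordinate~$3$ is $L/2$ (for $L$ large), strictly less than $|X_3+L|=(L+\epsilon)/2$, so $X\notin A_\alpha(\theta')$. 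The paper sidesteps this entirely by not modifying an arbitrary witness $\theta$: it uses the specific vector $\theta^t$ of Eq.~\eqref{eq:worst-case-theta}, which by construction has maximum equal to $t$, and the proof of Lemma~\ref{lemma:mainlemma} already shows that $t \in \widehat C^\alpha_{\ihat}$ implies $X \in A_\alpha(\theta^t)$. That single observation is the whole inclusion $\widehat C^\alpha_{\ihat} \subseteq \widehat C^\alpha_*$; you pointed at Lemma~\ref{lemma:mainlemma} but then proposed a different witness---switch to $\theta^t$ and the argument is one line.
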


If $\mathbf{S}$ relies on a union bound, as in Section \ref{sec:stepdown}, then it is symmetric in the entries of $v$. Therefore, the step-down zoom correction also applies to the population winner $\theta^*$.

\paragraph{Inference on the index.} Often we are interested in the identity of the population winner, and not necessarily its value. For example, we may be interested in knowing which treatment is the most effective.
If $\theta$ has multiple population winners, meaning $\theta^* = \max_i \theta_i$ is attained at multiple coordinates, then we want a confidence set that includes all of them.

As before, we form this set by inverting the hypothesis test $A_\alpha$:
\begin{equation}
\label{eq:winning_index_set}
\widehat \I^\alpha = \left\{i\in[m]: X\in A_\alpha(\theta), \theta_i = \max_{j\in[m]} \theta_j \right\}.
\end{equation}
The ``worst-case'' vector $\theta^t$ \eqref{eq:worst-case-theta} again provides a simple characterization of the confidence set $\widehat \I^\alpha$. Recall that $t_l = \min \widehat C^\alpha_{\ihat}$ denotes the lower endpoint of the confidence interval for the winner.

\begin{restatable}{prop}{inferenceindex}
\label{prop:inference-on-index}
We have $\widehat \I^\alpha = \{i : X_i \geq X_{\ihat} - 2 r_\alpha(\theta^{t_l})\}$. Consequently, $P(X_{i^*} \geq X_{\ihat} - 2 r_\alpha(\theta^{t_l}))\geq 1-\alpha$.
\end{restatable}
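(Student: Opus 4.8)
The plan is to reduce the probabilistic claim to the deterministic identity $\widehat\I^\alpha = \{i : X_i \ge X_{\ihat} - 2r_\alpha(\theta^{t_l})\}$ and then prove the two inclusions. The reduction is immediate: with probability at least $1-\alpha$ the true mean satisfies $X \in A_\alpha(\theta)$, i.e. $\theta \in \widehat C^\alpha$, and on that event $\theta$ itself witnesses that every population-winner index $i^*$ lies in $\widehat\I^\alpha$, so the identity yields $X_{i^*} \ge X_{\ihat} - 2r_\alpha(\theta^{t_l})$. Write $r^{t_l} := r_\alpha(\theta^{t_l})$. Since $X_{\ihat} \in \widehat C^\alpha_{\ihat}$ we have $t_l \le X_{\ihat}$, and by continuity of $t \mapsto r_\alpha(\theta^t)$ together with minimality of $t_l$ in $\widehat C^\alpha_{\ihat}$ (via Lemma~\ref{lemma:mainlemma}) one gets the boundary identity $X_{\ihat} - t_l = r^{t_l}$, so the target set is $\{i : X_i \ge 2t_l - X_{\ihat}\}$. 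The case $i = \ihat$ is trivial on both sides ($\ihat \in \widehat\I^\alpha$ because $\ihat$ is the population winner of $\theta^{t_l} \in \widehat C^\alpha$), so fix $i \ne \ihat$.

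The engine is an analogue of Lemma~\ref{lemma:mainlemma} centered at coordinate $i$: for $v \in \R$ define $\theta^{(i),v}$ by $\theta^{(i),v}_i = v$ and $\theta^{(i),v}_j = \min\{\tfrac23 X_j + \tfrac13 v,\, v\}$ for $j \ne i$. Coordinate $i$ is always a (tied) population winner of $\theta^{(i),v}$ with value $v$, so $\theta^{(i),v} \in \widehat C^\alpha$ implies $i \in \widehat\I^\alpha$; conversely, mimicking the proof of Lemma~\ref{lemma:mainlemma}, if \emph{any} $\theta \in \widehat C^\alpha$ has $\theta_i = \theta^* = v$ then $X \in A_\alpha(\theta^{(i),v})$, since $\theta^{(i),v}$ pushes all non-$i$ coordinates as high as the covering constraints allow. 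Hence $i \in \widehat\I^\alpha$ iff $X \in A_\alpha(\theta^{(i),v})$ for some $v$. Exactly as in Lemma~\ref{lemma:mainlemma}, for $j \ne i$ the constraint defining $A_\alpha(\theta^{(i),v})$ is automatically met in coordinate $j$ unless $X_j > v$, in which case that coordinate has suboptimality $0$ and requires $X_j - v \le r_\alpha(\theta^{(i),v})$; as $X_{\ihat} = \max_j X_j$ this collapses to one scalar condition. Writing $\rho(v) := r_\alpha(\theta^{(i),v})$, we conclude: $i \in \widehat\I^\alpha$ iff there is a $v$ with $|X_i - v| \le \rho(v)$ and $v + \rho(v) \ge X_{\ihat}$. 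Establishing this $\theta^{(i),v}$-version of Lemma~\ref{lemma:mainlemma} rigorously is, I expect, the main obstacle; it should follow from the proof of Lemma~\ref{lemma:mainlemma} with $i$ in the role of $\ihat$, the one new feature being that $\ihat$, no longer the center, contributes the extra condition $X_{\ihat} - v \le \rho(v)$, handled just as the non-winning coordinates are in Section~\ref{sec:conf_int_winner}.

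For ``$\supseteq$'', suppose $X_i \ge 2t_l - X_{\ihat} = t_l - r^{t_l}$ and take $v = t_l$. Since $t_l \le X_{\ihat}$, the vectors $\theta^{(i),t_l}$ and $\theta^{t_l}$ agree in every coordinate except $i$, where the suboptimality drops from $\max\{0,\tfrac23(t_l - X_i)\}$ to $0$; monotonicity of $r_\alpha$ in each suboptimality gives $\rho(t_l) \ge r^{t_l}$. Then $|X_i - t_l| \le r^{t_l} \le \rho(t_l)$ — checking $X_i \le t_l$ using $t_l - X_i \le r^{t_l}$, and $X_i > t_l$ using $X_i \le X_{\ihat} = t_l + r^{t_l}$ — and $X_{\ihat} - t_l = r^{t_l} \le \rho(t_l)$. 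So $X \in A_\alpha(\theta^{(i),t_l})$ and $i \in \widehat\I^\alpha$.

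For ``$\subseteq$'', suppose $i \in \widehat\I^\alpha$ and let $V := \{v : |X_i - v| \le \rho(v),\ v + \rho(v) \ge X_{\ihat}\} \ne \emptyset$. Each $v \in V$ satisfies $v \in \widehat C^\alpha_* = \widehat C^\alpha_{\ihat} = [t_l, t_u]$ by Proposition~\ref{prop:population_winner} (as $\theta^{(i),v}$ realizes $v$ as a population-winner value), so $V \subseteq [t_l, t_u]$; $V$ is closed (since $v \mapsto \rho(v)$ is continuous, inherited from continuity of $\mathbf{S}$), hence $v_0 := \min V \ge t_l$ is attained. Because the suboptimalities of $\theta^{(i),v}$ are nondecreasing in $v$, the map $v \mapsto v - \rho(v)$ is nondecreasing, so for any $v < v_0$ we have $v - \rho(v) \le v_0 - \rho(v_0) \le X_i$; since $v \notin V$, this forces $v + \rho(v) < X_{\ihat}$, and continuity then yields $v_0 + \rho(v_0) = X_{\ihat}$. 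Therefore $\rho(v_0) = X_{\ihat} - v_0 \le X_{\ihat} - t_l = r^{t_l}$, whence $X_i \ge v_0 - \rho(v_0) = 2v_0 - X_{\ihat} \ge 2t_l - X_{\ihat}$, i.e. $i$ is in the target set. The remaining technical points — continuity of $v \mapsto r_\alpha(\theta^{(i),v})$ and $t \mapsto r_\alpha(\theta^t)$, monotonicity of $r_\alpha$ in the suboptimalities (already noted in the text), and the boundary identity $r^{t_l} = X_{\ihat} - t_l$ — are routine.
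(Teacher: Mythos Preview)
Your proposal is correct. The $\supseteq$ direction coincides with the paper's: your $\theta^{(i),t_l}$ is exactly the paper's $\theta'$, obtained from $\theta^{t_l}$ by raising the $i$th coordinate to $t_l$. (The paper additionally notes that $i$ is active at $\theta^{t_l}$ and invokes Lemma~\ref{lemma:r_only_inactive}(i) to get $r_\alpha(\theta') = r_\alpha(\theta^{t_l})$ rather than only $\ge$, but your inequality suffices.)

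For $\subseteq$, however, the paper takes a much shorter path that avoids the $\theta^{(i),v}$ machinery and the continuity argument altogether. Given any witness $\theta \in \widehat C^\alpha$ with $\theta_i = \theta^*$, it chains three inequalities: (a) $i$ is active at $\theta$ since $\Delta_i(\theta)=0$, so $X_i \ge \theta_i - r_\alpha(\theta)$; (b) $\theta_i \ge \theta_{\ihat} \ge t_l$, the last by Lemma~\ref{lemma:mainlemma}; (c) $r_\alpha(\theta) \le r_\alpha(\theta^{t_l})$, because $r_\alpha(\theta^{t_l})$ is the maximum of $r_\alpha(\cdot)$ over all of $\widehat C^\alpha$---this follows from the proof of Lemma~\ref{lemma:mainlemma} together with the monotonicity of $t \mapsto r_\alpha(\theta^t)$. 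Combining gives $X_i \ge t_l - r_\alpha(\theta^{t_l})$ in one line. Your route---developing the $i$-centered analogue of Lemma~\ref{lemma:mainlemma}, passing to $v_0 = \min V$, invoking Proposition~\ref{prop:population_winner}, and using continuity of $\rho$ to force $v_0 + \rho(v_0) = X_{\ihat}$---reaches the same endpoint but incurs extra overhead: you must re-run the Lemma~\ref{lemma:mainlemma} argument with a non-winning center and then justify continuity of $v \mapsto \rho(v)$. What you gain is a clean scalar characterization $i \in \widehat\I^\alpha \Leftrightarrow V \ne \emptyset$, potentially useful elsewhere; what the paper's argument buys is brevity and no appeal to continuity.
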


Therefore, we can simply collect the indices that are at most $2 r_\alpha(\theta^{t_l})$ away from being the empirical winner, and that set will be guaranteed to contain the population winner with high probability.

\subsection{Inference on near-winners}
\label{sec:near-winners}

In Section \ref{sec:conf_int_winner}, we projected the confidence set $\widehat C^\alpha$, obtained by inverting the zoom test, along the winning coordinate: $\widehat C^\alpha_{\ihat} = \{\theta_{\ihat} : \theta\in \widehat C^\alpha\}$. However, after seeing the data one might decide that other indices are also of interest, not just the winner. This might happen if, for example, several candidates are close to being tied with the winner.

The projection approach is in principle valid for \emph{any} coordinate, not just the winner. That is, for any index $j$ chosen in a possibly data-dependent manner,
$\widehat C_j^\alpha = \{\theta_j : \theta \in \widehat C^\alpha\}$
is a valid $(1-\alpha)$-confidence set. In Section \ref{sec:conf_int_winner} we showed how to compute this set exactly when $j$ is the winner, $j = \ihat$. In this section, we show how to obtain a conservative approximation of this set for \emph{any} index $j$. As in Proposition~\ref{prop:population_winner}, we rely on the mild assumption that $\mathbf{S}(v)$ is symmetric in the entries of $v$, meaning that for any permutation $\sigma$, $\mathbf{S}(v) = \mathbf{S}(v_{\sigma})$, where $v_{\sigma}$ is the vector obtained by permuting the entries of $v$ according to $\sigma$. This assumption is true when applying a union bound, as in Section~\ref{sec:stepdown}.

Recall that $r_l = X_{\ihat}-t_l$ and $r_u = t_u - X_{\ihat}$ are the lower and upper radius of the confidence interval for the winner, $[t_l, t_u]$.

\begin{restatable}{prop}{nearwinners}
\label{prop:near-winners}
Suppose that $\mathbf{S}(v)$ is symmetric in the entries of $v$. For any, possibly data-dependent $j\in[m]$, let
\[\widetilde C_j^\alpha = \Big(\left[X_{\ihat} - 3r_l, X_{\ihat} + r_u\right]\cap \left[X_j - r_l, X_j + r_l\right]\Big) \cup \left[X_j - \hDelta_j - r_u, X_j + \frac{1}{3}(\hDelta_j + r_u)\right].\]
Then, $\widehat C_j^\alpha \subseteq \widetilde C_j^\alpha$. Consequently, $P(\theta_j \in \widetilde C_{j}^\alpha) \geq 1-\alpha$.
\end{restatable}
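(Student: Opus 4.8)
The plan is to establish the deterministic inclusion $\widehat C_j^\alpha\subseteq\widetilde C_j^\alpha$, where $\widehat C_j^\alpha=\{\theta_j:\X\in A_\alpha(\theta)\}$; the probability bound then follows verbatim from the argument in \secref{conf_int_winner}, since $P(\X\in A_\alpha(\theta))\ge 1-\alpha$ by \propref{test}. So I would fix an arbitrary $\theta$ with $\X\in A_\alpha(\theta)$, write $t=\theta_j$ and $s^*=\theta^*(\theta)=\max_i\theta_i$, and aim to show $t\in\widetilde C_j^\alpha$. The only structural input about $j$ is that its half-width in $A_\alpha(\theta)$ is $r_\alpha(\theta)$ if $j$ is active (i.e. $\Delta_j(\theta)\le 2r_\alpha(\theta)$) and $\tfrac12\Delta_j(\theta)$ if $j$ is inactive; the proof runs a case split on exactly this.

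Two preliminary facts do most of the work. First, since $\mathbf{S}$ is symmetric, \propref{population_winner} gives $s^*\in\widehat C_*^\alpha=\widehat C_{\ihat}^\alpha\subseteq[t_l,t_u]$, hence $X_{\ihat}-r_l\le s^*\le X_{\ihat}+r_u$. Second --- and this is the technical core --- I would show $r_\alpha(\theta)\le r_l$ for every feasible $\theta$. The idea is to compare $\theta$ against the winner's worst-case vector $\theta^{s^*}$ of \eqref{eq:worst-case-theta}, whose gaps are $\Delta_i(\theta^{s^*})=\tfrac23\max\{s^*-X_i,0\}$ for $i\ne\ihat$ and $\Delta_{\ihat}(\theta^{s^*})=0$. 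For every $i$ one checks $\max\{r_\alpha(\theta),\tfrac12\Delta_i(\theta)\}\ge\max\{r_\alpha(\theta),\tfrac12\Delta_i(\theta^{s^*})\}$: if $i$ is inactive under $\theta$, then $|X_i-\theta_i|\le\tfrac12\Delta_i(\theta)$ together with $\Delta_i(\theta)=s^*-\theta_i$ gives $\tfrac12\Delta_i(\theta)\ge\tfrac13\max\{s^*-X_i,0\}$ while $\tfrac12\Delta_i(\theta)>r_\alpha(\theta)$; if $i$ is active, then $|X_i-\theta_i|\le r_\alpha(\theta)$ and $\Delta_i(\theta)\le 2r_\alpha(\theta)$ give $s^*-X_i\le 3r_\alpha(\theta)$, so both maxima collapse to $r_\alpha(\theta)$ (and $i=\ihat$ is immediate). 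Because $\mathbf{S}$ is coordinate-wise decreasing, this yields $\mathbf{S}\big(r_\alpha(\theta)\vee\tfrac12\Delta(\theta^{s^*})\big)\ge\mathbf{S}\big(r_\alpha(\theta)\vee\tfrac12\Delta(\theta)\big)=\alpha$ (the equality holds unless $r_\alpha(\theta)=0$, which I handle at the end), which forces $r_\alpha(\theta)\le r_\alpha(\theta^{s^*})$ since $r\mapsto\mathbf{S}(r\vee\tfrac12\Delta(\theta^{s^*}))$ is non-increasing and equals $\alpha$ at $r_\alpha(\theta^{s^*})$. Finally $s\mapsto r_\alpha(\theta^s)$ is continuous and non-increasing (its gaps increase with $s$), and a boundary argument via Lemma~\ref{lemma:mainlemma} gives $r_\alpha(\theta^{t_l})=X_{\ihat}-t_l=r_l$; since $s^*\ge t_l$, I conclude $r_\alpha(\theta)\le r_\alpha(\theta^{s^*})\le r_\alpha(\theta^{t_l})=r_l$.

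With these in hand the case split is arithmetic. If $j$ is active under $\theta$: $|X_j-t|\le r_\alpha(\theta)\le r_l$, so $t\in[X_j-r_l,X_j+r_l]$; moreover $t=s^*-\Delta_j(\theta)\ge t_l-2r_\alpha(\theta)\ge t_l-2r_l=X_{\ihat}-3r_l$ and $t\le s^*\le X_{\ihat}+r_u$, placing $t$ in $[X_{\ihat}-3r_l,X_{\ihat}+r_u]\cap[X_j-r_l,X_j+r_l]$, the first piece of $\widetilde C_j^\alpha$. If $j$ is inactive: $|X_j-t|\le\tfrac12\Delta_j(\theta)$, and from $\theta_j\ge X_j-\tfrac12\Delta_j(\theta)$ with $\Delta_j(\theta)=s^*-\theta_j$ one gets $\tfrac12\Delta_j(\theta)\le s^*-X_j\le(X_{\ihat}+r_u)-X_j=\hDelta_j+r_u$, hence $t\ge X_j-(\hDelta_j+r_u)$; and $\theta_j-X_j\le\tfrac12\Delta_j(\theta)=\tfrac12(s^*-\theta_j)$ rearranges to $t\le\tfrac23X_j+\tfrac13s^*\le X_j+\tfrac13(\hDelta_j+r_u)$, placing $t$ in the second piece $[X_j-\hDelta_j-r_u,\;X_j+\tfrac13(\hDelta_j+r_u)]$. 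Either way $t\in\widetilde C_j^\alpha$. The degenerate case $r_\alpha(\theta)=0$ forces $t=X_j$, which lies in the second piece trivially, so it needs no special treatment.

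The main obstacle I expect is the active-radius bound $r_\alpha(\theta)\le r_l$, equivalently the comparison $r_\alpha(\theta)\le r_\alpha(\theta^{s^*})$ --- the statement that no feasible configuration can have a larger active radius than the extreme configuration $\theta^{t_l}$ --- which is in essence the content of the proof of \propref{population_winner}. The coordinatewise inequality above delivers it at $r=r_\alpha(\theta)$; the one delicate point is a possible flat stretch of $r\mapsto\mathbf{S}(r\vee\tfrac12\Delta(\theta^{s^*}))$ at level $\alpha$, which is handled exactly as the non-monotonicity of $\S_l$ is handled in \secref{stepdown} (or, more cheaply, by a strictly decreasing perturbation of $\mathbf{S}$ followed by a limit). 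Everything else --- locating $s^*$ through \propref{population_winner} and the two-case bookkeeping --- is routine.
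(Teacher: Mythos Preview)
Your proof is correct and follows essentially the same approach as the paper's: both arguments locate $s^*=\theta^*$ inside $[t_l,t_u]$ via \propref{population_winner}, establish the key radius bound $r_\alpha(\theta)\le r_\alpha(\theta^{s^*})\le r_\alpha(\theta^{t_l})=r_l$, and then run the active/inactive case split with the same arithmetic. The organization differs slightly: the paper stratifies over $s\in\widehat C_{i^*}^\alpha$ first and cites the worst-case property of $\theta^s$ (``among all vectors with $\theta^*=s$ this is the one that maximizes the active radius'') as a consequence of the proof of \lemref{mainlemma}, whereas you isolate and re-prove the radius comparison $r_\alpha(\theta)\le r_\alpha(\theta^{s^*})$ directly via the coordinatewise inequality on $\mathbf{S}$. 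Your direct route is self-contained and makes the dependence on symmetry of $\mathbf{S}$ transparent; the paper's route, by leaning on \lemref{r_only_inactive}~(ii) and \lemref{increasing_coord} through the main lemma, sidesteps the flat-stretch issue you flag (since those lemmas do not require strict monotonicity of $\mathbf{S}$). Your proposed fix via a strictly decreasing perturbation and limit is fine, but you could equally just invoke the $(\Rightarrow)$ direction of \lemref{mainlemma} applied to $\thetaswap$ (from the proof of \propref{population_winner}), which already delivers $r_\alpha(\theta)=r_\alpha(\thetaswap)\le r_\alpha(\theta^{s^*})$ without any perturbation.
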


\noindent Notice that for the winner, i.e. when $\hDelta_j = 0$, $\widetilde C^\alpha_j$ exactly coincides with the set from Theorem \ref{thm:winner_int}.

\subsection{Variance-adaptive algorithm}
\label{sec:variance-adaptive}

The main procedure from Section \ref{sec:conf_int_winner} computes the active radius $r_\alpha$ independently of any information about the variance of different candidates. However, if the winner $\ihat$ happens to have a small variance relative to the other candidates, this approach may lead to an unnecessarily wide interval for the winner. To address this issue, we extend our main results to allow variance-adaptive interval widths.

We assume that for every coordinate $i$ we have a known estimate of the variance of $X_i$, denoted $\sigma_i^2$. While we will use the term ``variance'' to refer to $\sigma_i^2$, note that $\sigma_i^2$ need not correspond to the actual variance; the procedure will be valid for any choice of $\sigma_1^2,\dots,\sigma_m^2$.

\paragraph{Variance-adaptive zoom test.} The main difference from the basic case is in the definition of the zoom test. We define $A_\alpha$ as the rectangle centered at $\theta$ whose radius in coordinate $j$ is $\max\left\{r_\alpha \sigma_j, \Delta_j \frac{\sigma_j}{\sigma_j + \sigma_{i^*}}\right\}$, where $i^* = \argmax_i \theta_i$ is the population winner:
\[
A_\alpha = \left[\theta \pm \left(r_\alpha \sigma\vee \Delta \frac{\sigma}{\sigma + \sigma_{i^*}}\right)\right].
\]
Therefore, the radius in coordinate $j$ is proportional to $\sigma_j$. As before, $r_\alpha$ is chosen to ensure that $A_\alpha$ is indeed a $(1-\alpha)$-level acceptance region:
\begin{equation}
\label{eq:active-radius-sigmas}
r_\alpha = \min \left\{r:\; \mathbf{S}\left(r \sigma \vee \Delta \frac{\sigma}{\sigma + \sigma_{i^*}} \right) \leq \alpha \right\}.
\end{equation}
The active indices $\I_\alpha$ are now those indices whose width is $r_\alpha \sigma_j$: $
\I_\alpha = \left\{j:\; \Delta_j \frac{1}{\sigma_j + \sigma_{i^*}} \leq r_\alpha \right\}.$

When $\sigma_1 = \dots = \sigma_m$, the variance-adaptive zoom test is equivalent to the basic test from Section~\ref{sec:test}.

\paragraph{Inverting the variance-adaptive test.} We again invert the test by relying on a ``worst-case'' vector, similar to \eqref{eq:worst-case-theta}. Given a tuple $(t,t^*,i^*) \in \R \times \R \times [m]$ such that $t^*\geq t$, we let:
$$\theta^{(t,t^*,i^*)}_j = \begin{cases}
t & \text{if } j = \ihat,\\
t^* & \text{if } j=i^*,\\
\min\left\{\frac{X_j(\sigma_j + \sigma_{i^*}) + t^* \sigma_j}{2\sigma_j + \sigma_{i^*}}, t^*\right\} & \text{if } j\not\in\{i^*,\ihat\}.
\end{cases}$$
We have the following key technical lemma, analogous to Lemma \ref{lemma:mainlemma}.

\begin{restatable}{lemma}{mainlemmasigmas}
\label{lemma:main-lemma-sigmas}
Fix $t\in\R$. Then, $t \in \widehat C_{\ihat}^\alpha$ if and only if, for some $t^*\geq t$ and $i^*\in[m]$, $|\X_{\ihat}-t| \leq r_\alpha(\theta^{(t,t^*,i^*)}) \sigma_{\ihat}$ and $|\X_j-t^*| \leq r_\alpha(\theta^{(t,t^*,i^*)}) \sigma_{j}$ for $j\in\{i^*\}\cup \{k: X_k \geq t^*\}$.
\end{restatable}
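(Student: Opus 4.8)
The plan is to mimic the proof of Lemma~\ref{lemma:mainlemma} (the $k=1$, equal-variance case) but now carrying the extra bookkeeping for the population-winner location $t^*$ and index $i^*$, since in the variance-adaptive construction the acceptance region $A_\alpha(\theta)$ and the active radius $r_\alpha(\theta)$ depend on $\theta$ not only through the suboptimality gaps but also through $\sigma_{i^*}$, i.e. through \emph{which} coordinate achieves the maximum. So a point $t$ lies in $\widehat C^\alpha_{\ihat}$ iff there is \emph{some} full vector $\theta$ with $\theta_{\ihat}=t$ and $X\in A_\alpha(\theta)$; such a $\theta$ has some maximum value $t^*\ge t$ attained at some index $i^*$, and the claim is that among all vectors with a prescribed $(t,t^*,i^*)$, the specific vector $\theta^{(t,t^*,i^*)}$ is the ``easiest'' one to satisfy $X\in A_\alpha(\theta)$. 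Reducing the existential quantifier over $\theta$ to an existential quantifier over the triple $(t^*,i^*)$ and the single candidate $\theta^{(t,t^*,i^*)}$ is the heart of the argument.

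First I would establish the ``only if'' direction: suppose $t\in\widehat C^\alpha_{\ihat}$, so there is $\theta$ with $\theta_{\ihat}=t$ and $X\in A_\alpha(\theta)$. Set $t^*=\theta^*=\max_j\theta_j$ and let $i^*$ be a maximizing index; then $t^*\ge \theta_{\ihat}=t$. Because $X\in A_\alpha(\theta)$ means $|X_j-\theta_j|\le \max\{r_\alpha(\theta)\sigma_j,\ \Delta_j\frac{\sigma_j}{\sigma_j+\sigma_{i^*}}\}$ coordinatewise, the task is to show the displayed inequalities hold with this $\theta$ replaced by $\theta^{(t,t^*,i^*)}$, whose active radius I will argue is no smaller. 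The key monotonicity facts I would invoke, exactly as in the basic case, are: (i) $r_\alpha(\cdot)$ is coordinatewise non-increasing in the suboptimality gaps $\Delta_j$ (stated below Eq.~\eqref{eq:active-radius-sigmas} in spirit, and following from $\mathbf{S}$ being coordinatewise decreasing); and (ii) moving a non-$\{i^*,\ihat\}$ coordinate $\theta_j$ \emph{upward} toward $t^*$ shrinks its gap $\Delta_j$, hence weakly increases $r_\alpha$. The formula for $\theta^{(t,t^*,i^*)}_j$ on the non-special coordinates is precisely the largest value we may push $\theta_j$ to without violating $|X_j-\theta_j|\le \max\{r_\alpha\sigma_j, \Delta_j\frac{\sigma_j}{\sigma_j+\sigma_{i^*}}\}$ — solving $X_j-\theta_j=(t^*-\theta_j)\frac{\sigma_j}{\sigma_j+\sigma_{i^*}}$ for $\theta_j$ gives $\theta_j=\frac{X_j(\sigma_j+\sigma_{i^*})+t^*\sigma_j}{2\sigma_j+\sigma_{i^*}}$, capped at $t^*$ — so the constraint is automatically met at $\theta^{(t,t^*,i^*)}$ for those coordinates with equality (or slack, after the cap), and for $\ihat$ and $i^*$ it reduces to exactly the two displayed inequalities. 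The restriction of the coordinate list to $j\in\{i^*\}\cup\{k:X_k\ge t^*\}$ is because for a coordinate $j$ with $X_j<t^*$ the cap $\theta^{(t,t^*,i^*)}_j=\min\{\cdot, t^*\}$ and the non-negativity of the gap make the constraint hold vacuously, so it suffices to check the others.

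Then I would do the ``if'' direction, which is easier: given $t^*\ge t$, $i^*$, and the two families of inequalities, I simply exhibit $\theta=\theta^{(t,t^*,i^*)}$ as a witness, verify it has maximum $t^*$ attained at $i^*$ (using $t^*\ge t$ and $\theta^{(t,t^*,i^*)}_j\le t^*$ for $j\notin\{i^*,\ihat\}$ by the explicit $\min$), compute its suboptimality gaps, and check $X\in A_\alpha(\theta)$ coordinatewise — the non-special coordinates by the definition of $\theta^{(t,t^*,i^*)}_j$ (equality or the vacuous capped case), and the $\ihat$ and $i^*$ coordinates by the hypothesized inequalities. Hence $\theta\in\widehat C^\alpha$ and its $\ihat$-coordinate is $t$, so $t\in\widehat C^\alpha_{\ihat}$.

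I expect the main obstacle to be the ``only if'' direction's circularity: $r_\alpha(\theta^{(t,t^*,i^*)})$ depends on the full gap vector of $\theta^{(t,t^*,i^*)}$, which in turn is defined through $r_\alpha$-flavored reasoning, so I must be careful that pushing coordinates up is a \emph{valid} monotone operation — i.e. that it genuinely cannot decrease $r_\alpha$ — and that the ``cap at $t^*$'' does not inadvertently create a coordinate exceeding $t^*$ or make $i^*$ no longer a maximizer. The cleanest way around this is the same device used for Lemma~\ref{lemma:mainlemma}: argue that for a \emph{fixed} pair $(t^*,i^*)$, the map $\theta\mapsto\theta^{(t,t^*,i^*)}$ dominates every feasible $\theta$ with those statistics coordinatewise on $\{i^*,\ihat\}^c$, apply the coordinatewise monotonicity of $r_\alpha$ in the gaps once, and only then read off the inequalities; the union over $(t^*,i^*)$ at the end turns this into the stated ``for some $t^*\ge t$ and $i^*$'' condition. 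A secondary care point is confirming that the set $\{k:X_k\ge t^*\}\cup\{i^*\}$ is exactly the set of coordinates on which the capped formula is \emph{not} automatically slack, so that no binding constraint is omitted.
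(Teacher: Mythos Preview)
Your overall architecture matches the paper's: in the ``if'' direction you exhibit $\theta^{(t,t^*,i^*)}$ as a witness and verify $X\in A_\alpha(\theta^{(t,t^*,i^*)})$ coordinatewise (this is correct and is exactly what the paper does); in the ``only if'' direction you take a feasible $\theta$, read off its $(t^*,i^*)$, and argue that the displayed inequalities already hold at $\theta$ and that $r_\alpha(\theta^{(t,t^*,i^*)})\ge r_\alpha(\theta)$.

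The gap is in this last inequality. You claim that $\theta^{(t,t^*,i^*)}$ coordinatewise dominates every feasible $\theta$ on $\{i^*,\ihat\}^c$, and then invoke the monotonicity of $r_\alpha$ in the gaps. The domination claim is false. The formula for $\theta^{(t,t^*,i^*)}_j$ is the largest value satisfying the \emph{inactive} branch $\theta_j-X_j\le (t^*-\theta_j)\frac{\sigma_j}{\sigma_j+\sigma_{i^*}}$, not the full constraint with the $\max$: a feasible $\theta$ may have a coordinate $j$ active at $\theta$ with $\theta_j$ as large as $X_j+r_\alpha(\theta)\sigma_j$, and this can strictly exceed $\theta^{(t,t^*,i^*)}_j$. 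In that case $\Delta_j(\theta)<\Delta_j(\theta^{(t,t^*,i^*)})$, and plain monotonicity points the wrong way. (The same issue is already present in the equal-variance Lemma~\ref{lemma:mainlemma}; the paper does not use simple domination there either.)

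What the paper actually proves is weaker but sufficient: the gaps of any feasible $\theta$ dominate those of $\theta^{(t,t^*,i^*)}$ \emph{only on the inactive set} $\I_\alpha^c(\theta^{(t,t^*,i^*)})$. This is shown by contradiction via (the variance-adaptive analogue of) Lemma~\ref{lemma:increasing_coord}: if some inactive gap at $\theta^{(t,t^*,i^*)}$ were strictly larger than the corresponding gap at $\theta$, then at the ``worst'' such index $i_0$ one has $\Delta_{i_0}(\theta^{(t,t^*,i^*)})\frac{\sigma_{i_0}}{\sigma_{i_0}+\sigma_{i^*}}\ge \max\{r_\alpha(\theta)\sigma_{i_0},\Delta_{i_0}(\theta)\frac{\sigma_{i_0}}{\sigma_{i_0}+\sigma_{i^*}}\}$; combined with the construction identity $\theta^{(t,t^*,i^*)}_{i_0}-X_{i_0}=\Delta_{i_0}(\theta^{(t,t^*,i^*)})\frac{\sigma_{i_0}}{\sigma_{i_0}+\sigma_{i^*}}$ and $\theta_{i_0}>\theta^{(t,t^*,i^*)}_{i_0}$, this forces $\theta_{i_0}-X_{i_0}$ to exceed the acceptance-region radius at $\theta$, contradicting $X\in A_\alpha(\theta)$. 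Once inactive-gap domination is established, the conclusion $r_\alpha(\theta^{(t,t^*,i^*)})\ge r_\alpha(\theta)$ follows from (the analogue of) Lemma~\ref{lemma:r_only_inactive}(ii), which says precisely that only the inactive gaps matter for $r_\alpha$. You also need the variance-adaptive version of Lemma~\ref{lemma:winner_active} to certify that $\ihat$ and every $j$ with $X_j\ge t^*$ are active at $\theta$, so that $|X_j-t^*|\le X_j-\theta_j\le r_\alpha(\theta)\sigma_j$ for those indices; your sketch does not isolate this step.
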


Similarly to Lemma \ref{lemma:mainlemma}, Lemma \ref{lemma:main-lemma-sigmas} reduces the problem of finding $\widehat C^\alpha_{\ihat}$ to a grid search problem. For every candidate value $t$, we additionally search through candidate population winners $i^*\in[m]$ and find the minimum value $t^*\geq t$ such that the condition of the lemma is satisfied. Once such a $t^*$ is found, we know that $t$ is in $\widehat C^\alpha_{\ihat}$ and we can move on to the next candidate value $t$.

\section{Experiments}

We evaluate the zoom correction on synthetic and real data, comparing it to both conditional solutions---in particular, standard conditional \cite{lee2016exact} and hybrid~\cite{andrews2019inference} inference---as well as simultaneous solutions---in particular, standard simultaneous inference, locally simultaneous inference \cite{zrnic2022locally}, and simultaneous inference over the selected (SoS) \cite{benjamini2019confidence}.
SoS intervals assume independent observations, so we compute them in settings where this assumption is satisfied. Conditional and hybrid inference are applicable to Gaussian observations with a known (or estimable) covariance, so we apply them in settings where we can compute a covariance estimate. Note that the intervals given by locally simultaneous inference are guaranteed to be no wider than standard, fully simultaneous intervals. We also implemented the intervals prescribed by \citet{fuentes2018confidence}, which are applicable in a subset of the settings where SoS intervals are applicable. However, we did not observe an advantage over SoS, and thus we will not plot these intervals.

We set the error level to be $\alpha=0.1$ throughout. Hybrid inference and locally simultaneous inference rely on an error budget splitting parameter $\beta\in(0,\alpha)$ (resp. $\nu\in(0,\alpha)$), which conceptually serve a similar purpose. We set them to be $\beta=\nu=0.1\alpha$ throughout. Similarly, SoS intervals have a parameter $\delta\in(0,1)$ that splits the error budget between the upper and lower endpoints of the interval; we choose the split to be balanced by setting $\delta=0.5$.

All code and data are available at: \url{https://github.com/tijana-zrnic/winners-curse}.

\subsection{Synthetic data}

\begin{figure}
\includegraphics[width=0.33\textwidth]{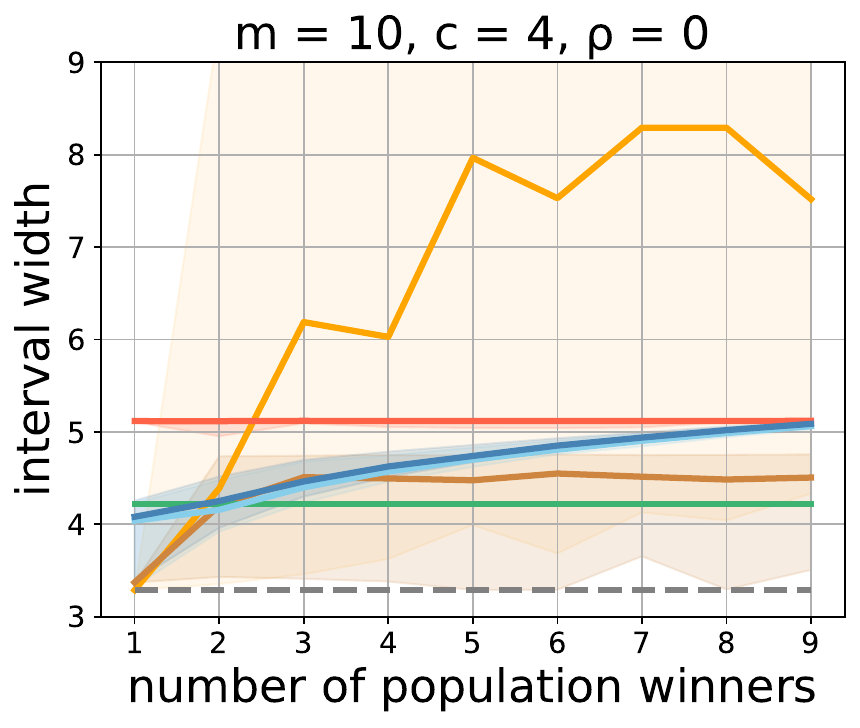}
\includegraphics[width=0.33\textwidth]{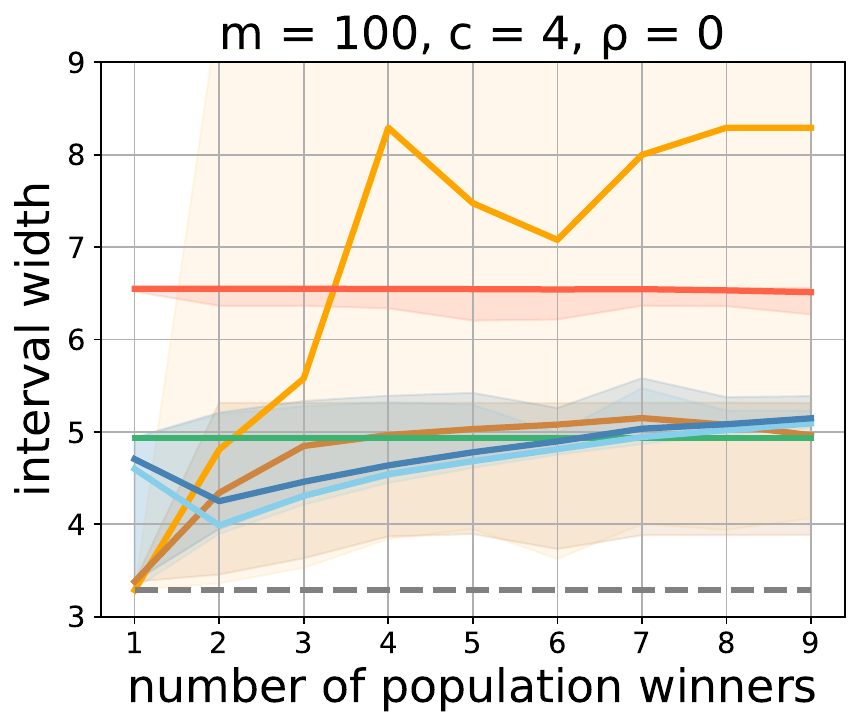}
\includegraphics[width=0.33\textwidth]{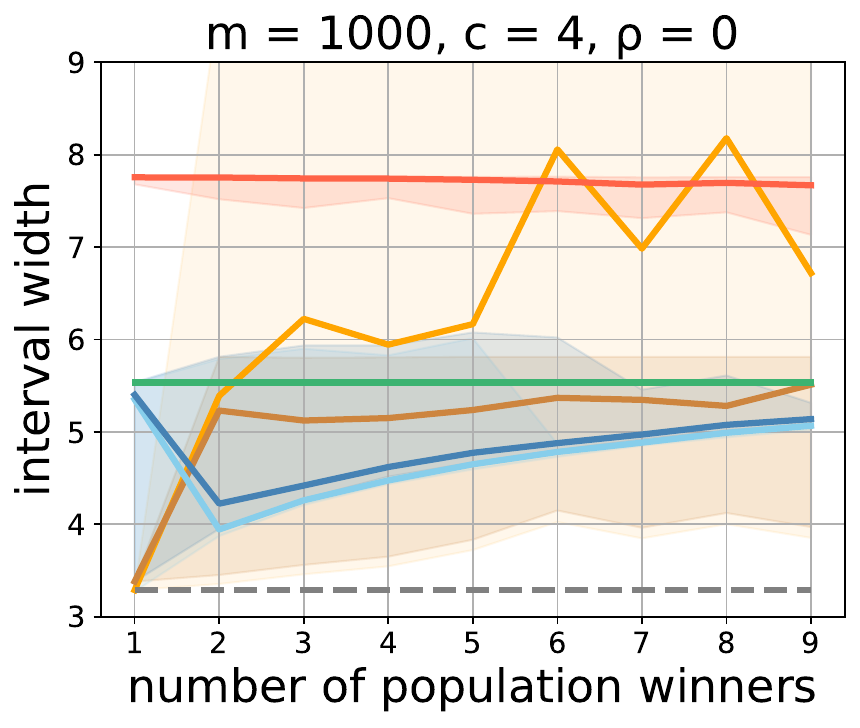}
\includegraphics[width=0.33\textwidth]{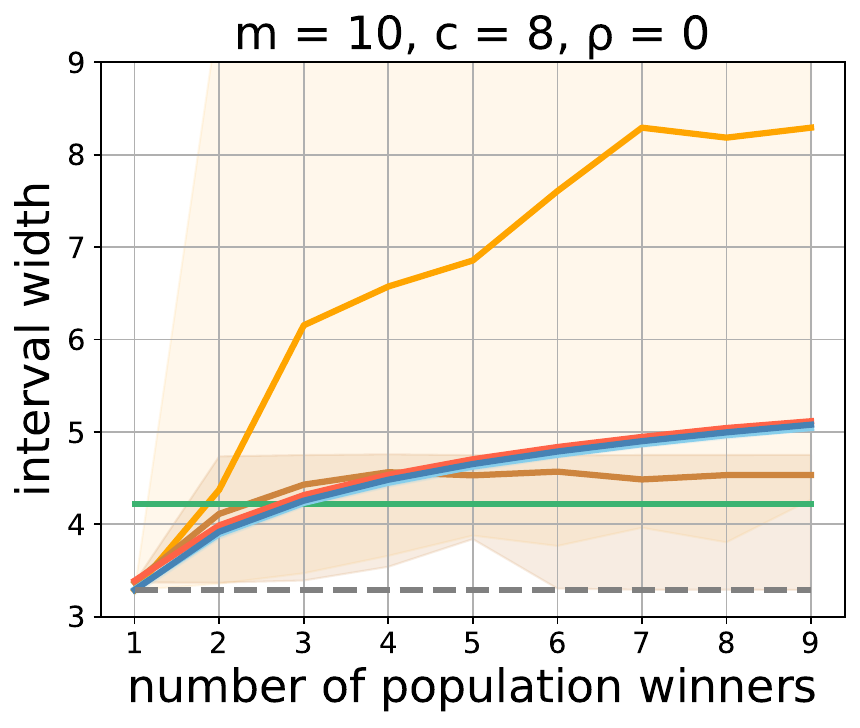}
\includegraphics[width=0.33\textwidth]{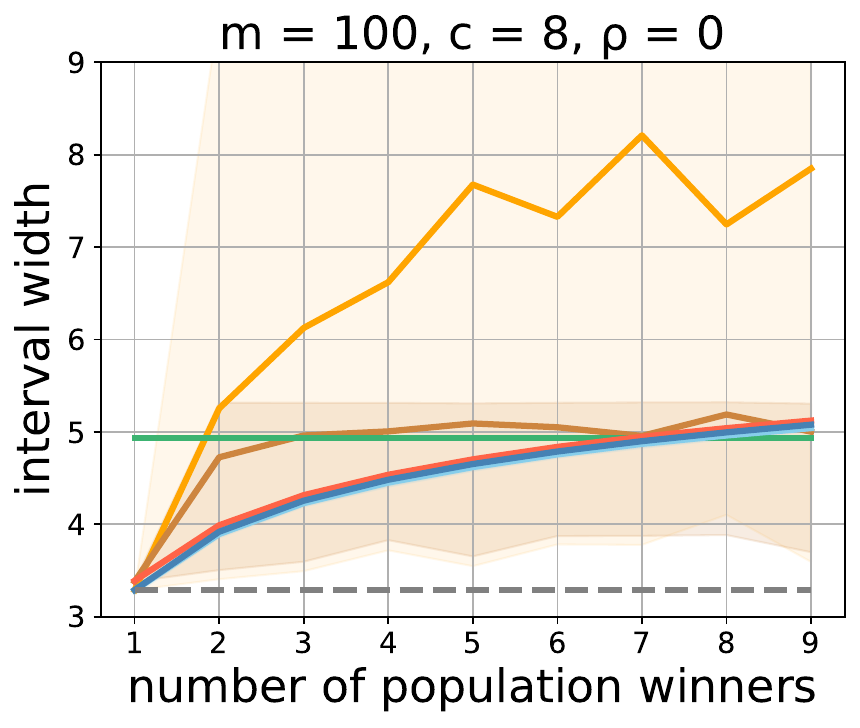}
\includegraphics[width=0.33\textwidth]{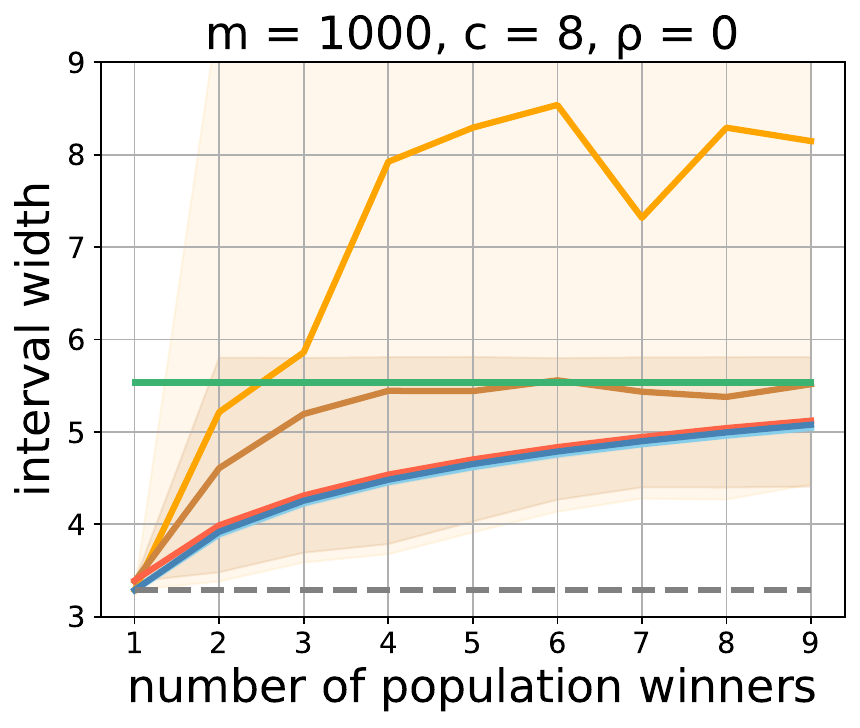}
\includegraphics[width=0.33\textwidth]{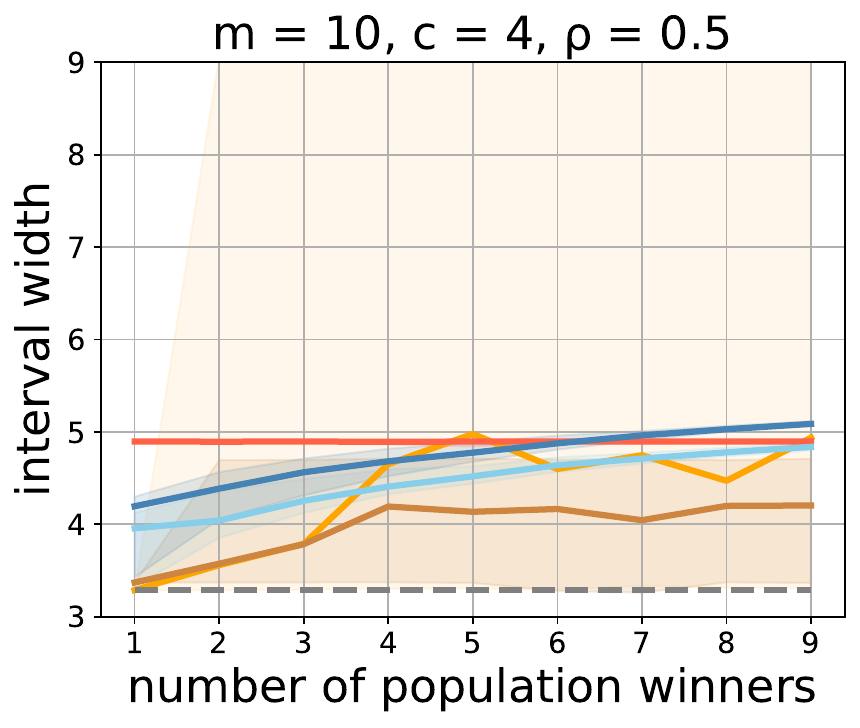}
\includegraphics[width=0.33\textwidth]{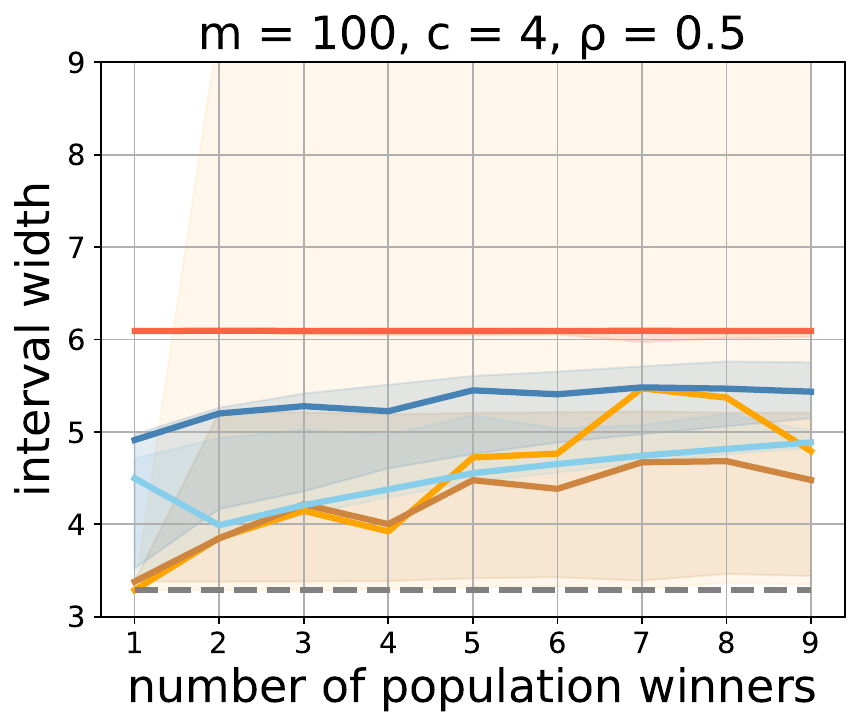}
\includegraphics[width=0.33\textwidth]{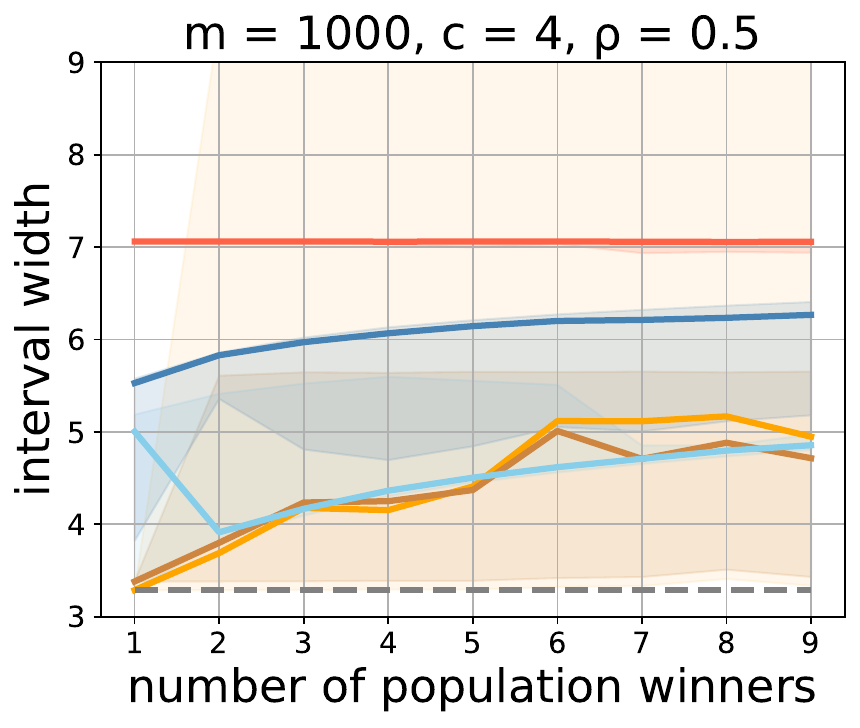}
\includegraphics[width=0.33\textwidth]{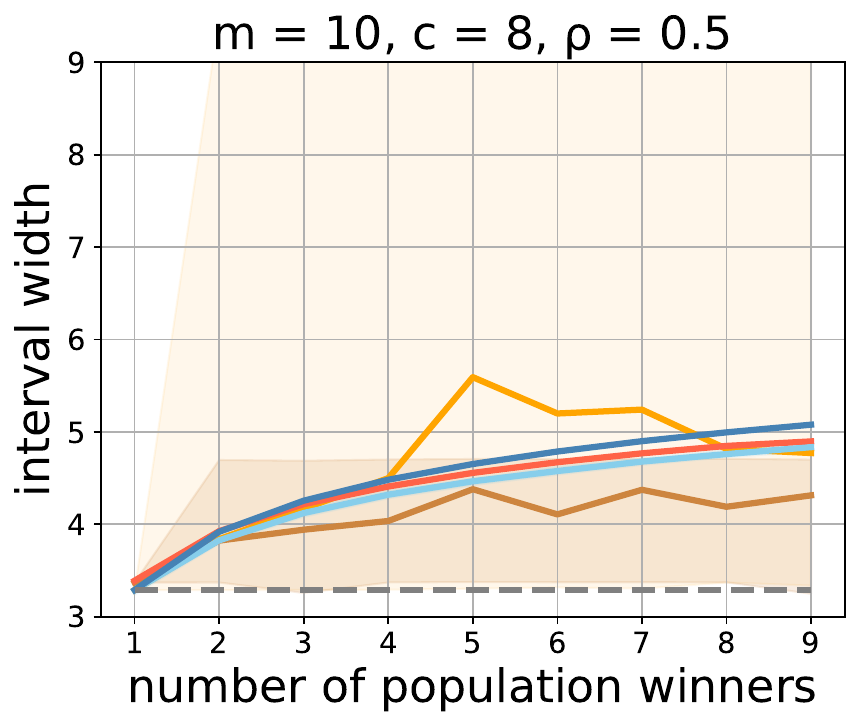}
\includegraphics[width=0.33\textwidth]{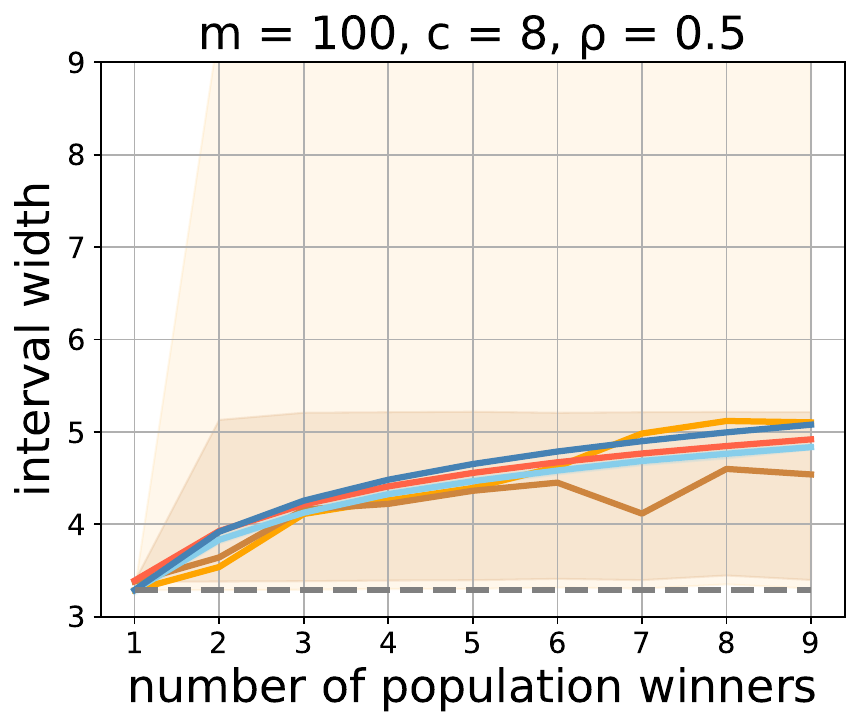}
\includegraphics[width=0.33\textwidth]{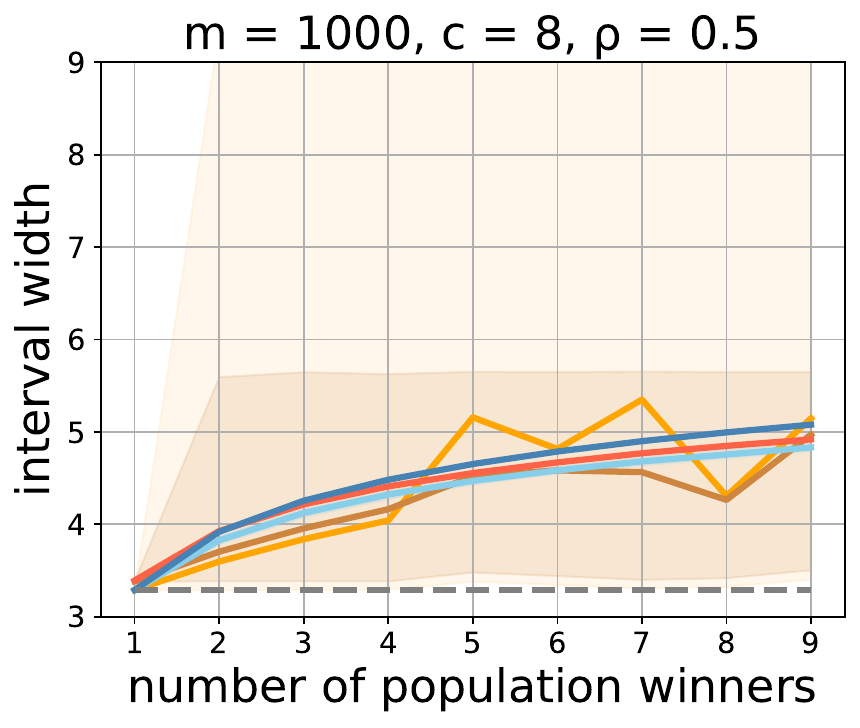}
\includegraphics[width=\textwidth]{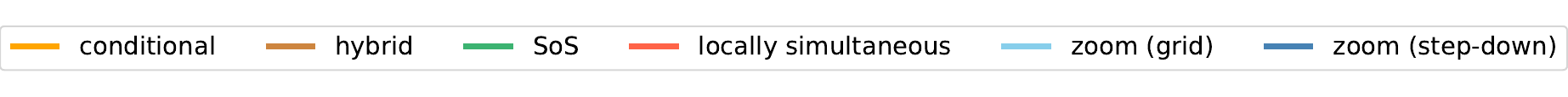}
\caption{Interval widths of the zoom correction compared to conditional \cite{lee2016exact}, hybrid \cite{andrews2019inference}, simultaneous over the selected (SoS) \cite{benjamini2019confidence}, and locally simultaneous inference \cite{zrnic2022locally}, in the synthetic data setting. The gray dashed line corresponds to the uncorrected interval width.}
\label{fig:synthetic}
\end{figure}

To build intuition, we begin with simulations on synthetic data. We report the median interval width over $100$ trials, together with the 5\% and 95\% percentile of interval width, plotted as error bars around the median. We consider two versions of the zoom method: one based on grid search, as explained in Section \ref{sec:conf_int_winner}, and the step-down version described in Section \ref{sec:stepdown}.

We sample $X \sim N(\theta, \Sigma)$, where $\theta$ is an $m$-dimensional mean vector, for varying $m$. We set
\[\theta_i = \begin{cases}
0, &i\in\{1,\dots,m_W \}\\
-c \cdot r_{\mathrm{sim}}, &i\in\{m_W +1,\dots,m\},
\end{cases}\]
where $m_W$ is a varying number of population winners, $c >0$ is a varying constant, and $r_{\mathrm{sim}}$ is the radius of the simultaneous (max-z) interval. The covariance matrix $\Sigma$ has $\Sigma_{ii} = 1$ and $\Sigma_{ij} = \rho$ for $i\neq j$. When $c=4$, the simultaneously corrected confidence interval for the winner and loser do not overlap with high probability. We are interested in the gap $4r_{\mathrm{sim}}$ because this roughly corresponds to the critical gap beyond which locally simultaneous inference drops the losing observations from its correction. This is the value for which locally simultaneous inference is most conservative, since it considers all observations despite them being fairly suboptimal.

In Figure \ref{fig:synthetic} we plot the interval widths for $c\in\{4, 8\}$, $\rho \in \{0,0.5\}$, and $m\in\{10,100,1000\}$. On the x-axis we vary the number of population winners $m_W$. The zoom correction is always at least as powerful as locally simultaneous inference, outperforming it by a large margin for smaller $c$. The two variants of the zoom correction, based on grid search and the step-down implementation, yield almost indistinguishable widths when the observations are independent, but when there are strong dependencies the grid-based approach is preferred. Conditional inference often yields very wide and highly variable intervals. Hybrid inference is generally comparable to the zoom correction, becoming more conservative when $m$ is large. SoS intervals are only applicable when $\rho=0$ and they are most conservative when $m_W$ is small, because they are not adaptive to the number of competitive candidates but only depend on the total number of candidates $m$.


\subsection{Extreme climate events}

We conduct experiments on the real climate dataset \cite{rasp2020weatherbench} considered in \cite{zrnic2022locally}. The dataset contains hourly measurements of temperature from 1999 to 2018 across a discrete grid of locations on Earth. The grid is obtained by pairing 32 latitude coordinates with 64 longitude coordinates. Following \cite{zrnic2022locally}, we model the measurements across years as i.i.d. multivariate Gaussian draws and use older data, from 1979 to 1998, to estimate the covariance. In the first set of experiments we compute the average temperature on Earth (averaged over all locations on the grid) and look at the resulting time series. For each year we take one measurement per day, evaluated at noon, resulting in a series of 365 entries. We ask for inference on the warmest day and the coldest day. In the second set of experiments we compute the average annual temperature and look at its distribution over the recorded locations on Earth. Similarly to the first set of experiments, we ask for inference on the warmest location and the coldest location.

In Figure \ref{fig:climate} we plot the intervals. The zoom correction consistently outperforms locally simultaneous inference (and thus also fully simultaneous inference). It is somewhat conservative in the problem of inference on the warmest location, but in all other problems it is comparable to the hybrid method. In the problems that focus on the most extreme days, conditional inference yields very large intervals, while in the problems that focus on the most extreme locations it yields the smallest intervals.

\begin{figure}[t]
\hspace{2.2mm}\includegraphics[height=0.3\textwidth]{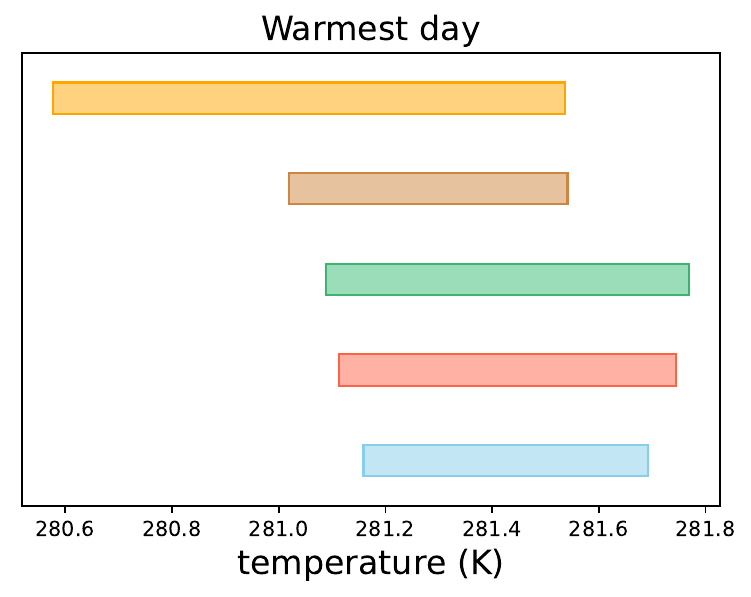}
\includegraphics[height=0.3\textwidth]{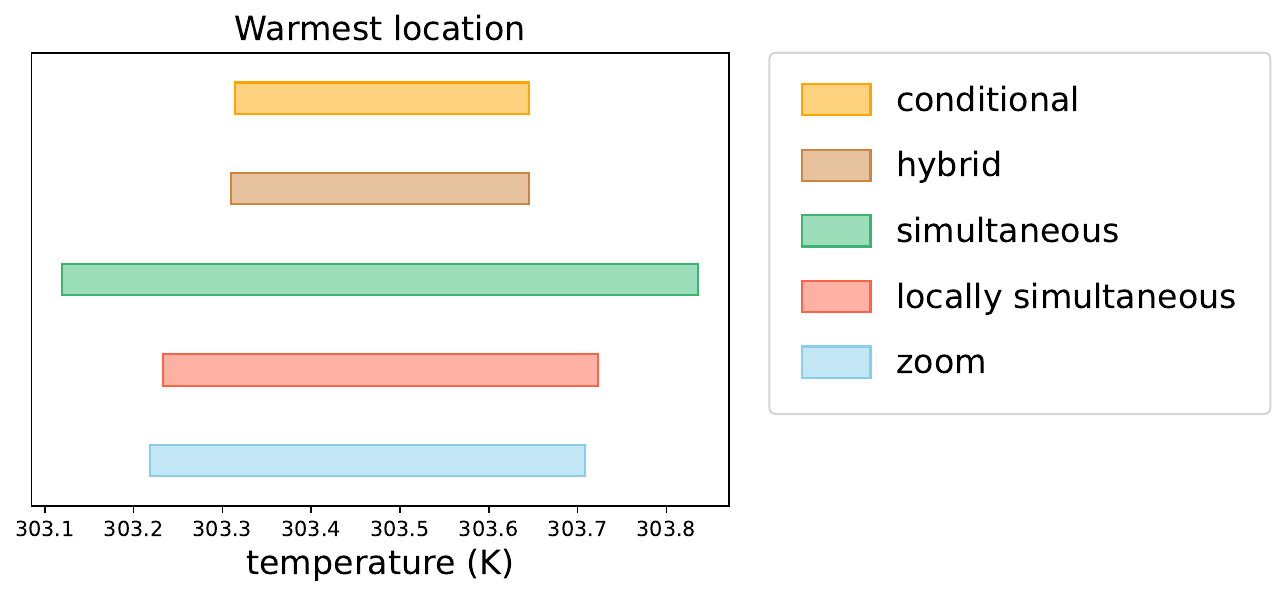}
\includegraphics[height=0.3\textwidth]{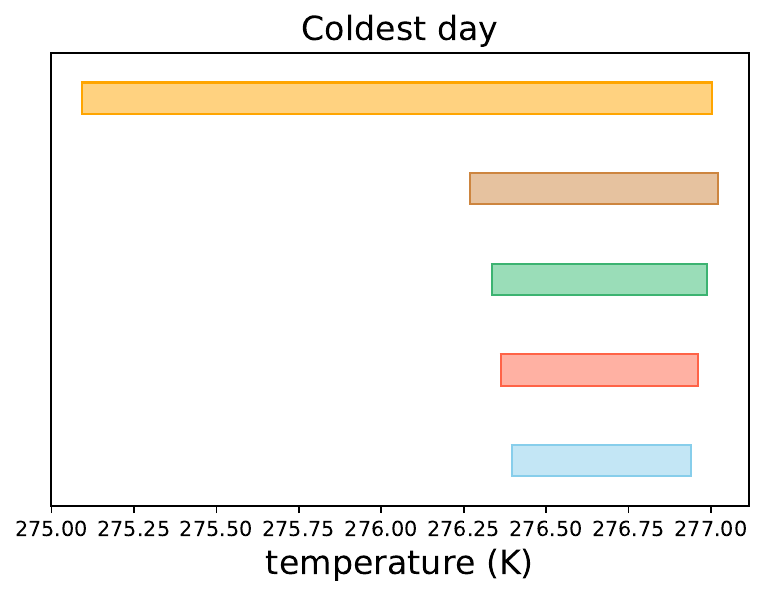}
\hspace{0.4mm}
\includegraphics[height=0.3\textwidth]{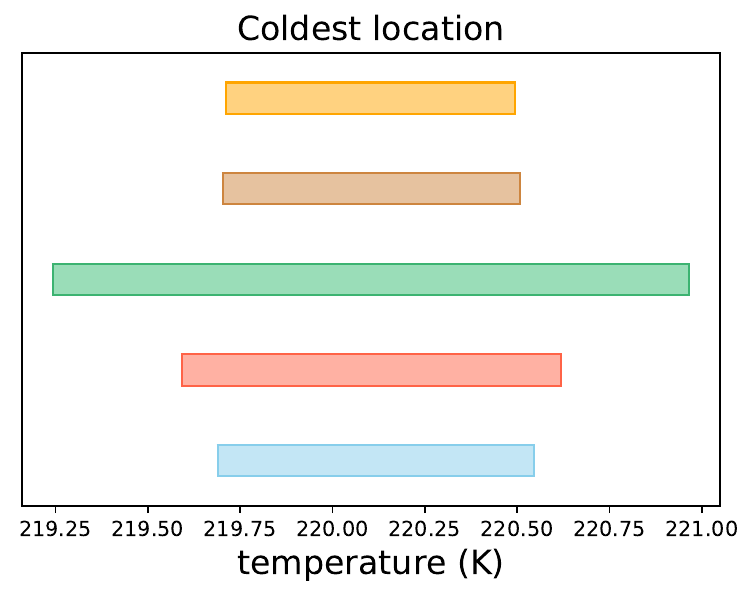}
\caption{Zoom correction intervals compared to the intervals computed via conditional \cite{lee2016exact}, hybrid \cite{andrews2019inference}, fully simultaneous, and locally simultaneous \cite{zrnic2022locally} inference, in extreme climate event analysis.}
\label{fig:climate}
\end{figure}

\begin{figure}[t]
\includegraphics[height=0.33\textwidth]{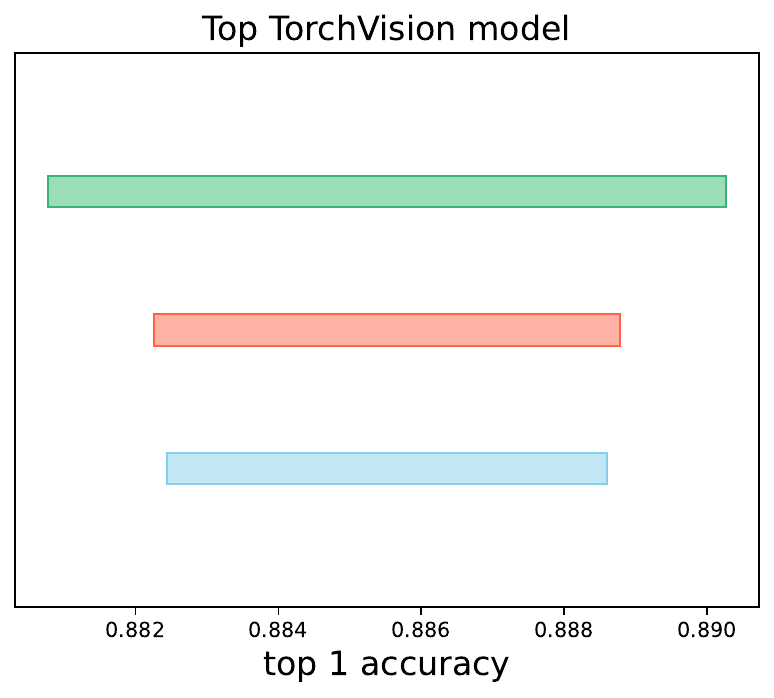}
\includegraphics[height=0.33\textwidth]{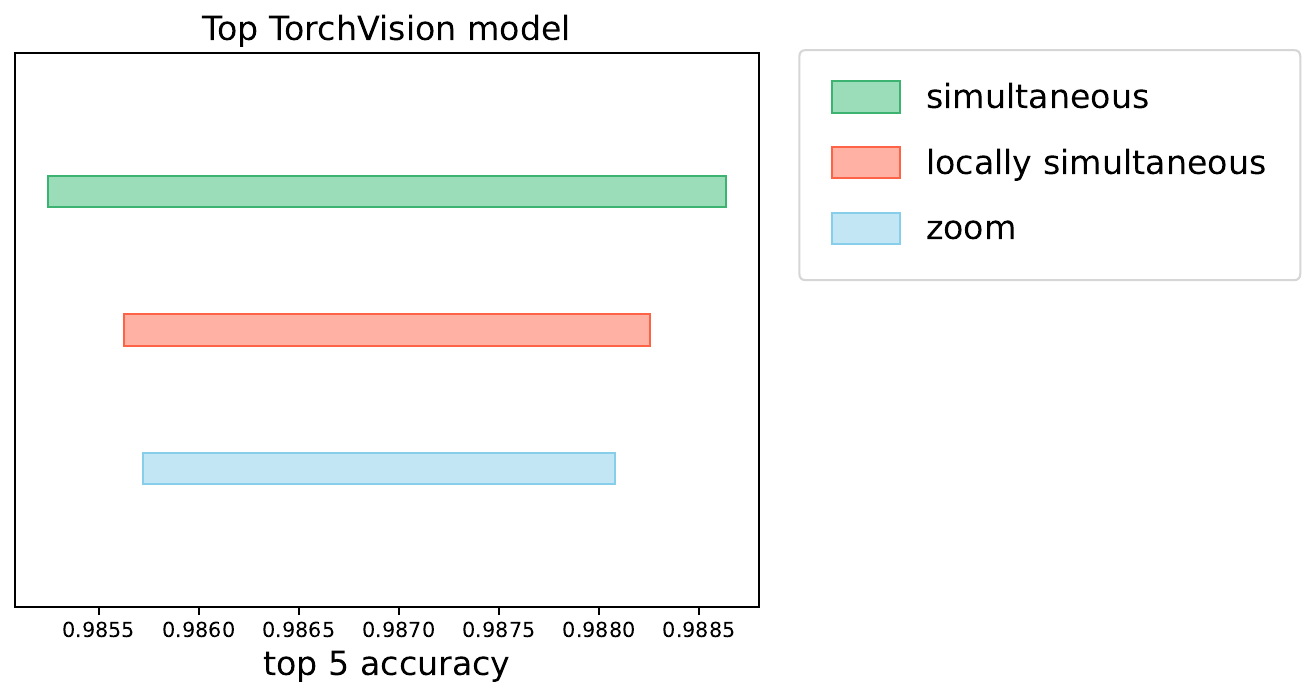}
\includegraphics[height=0.33\textwidth]{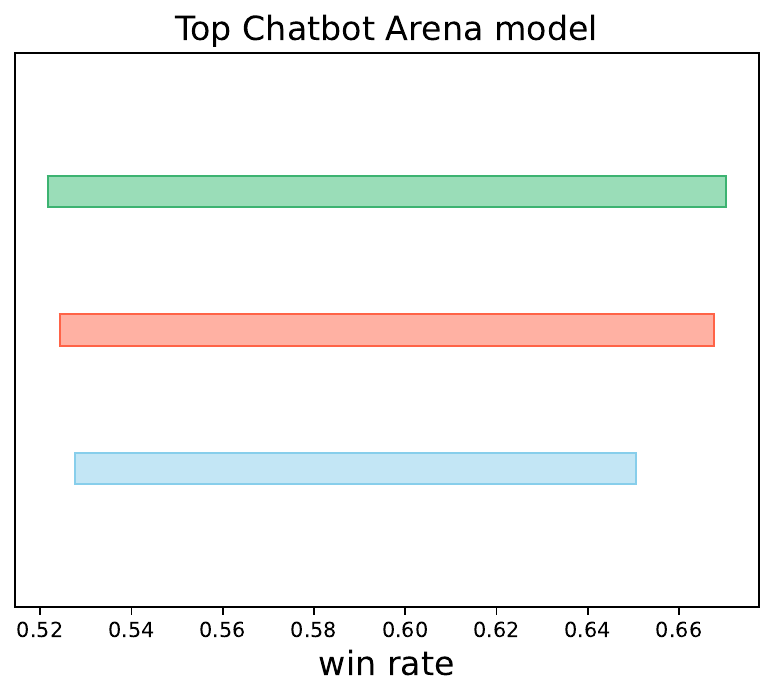}
\includegraphics[height=0.33\textwidth]{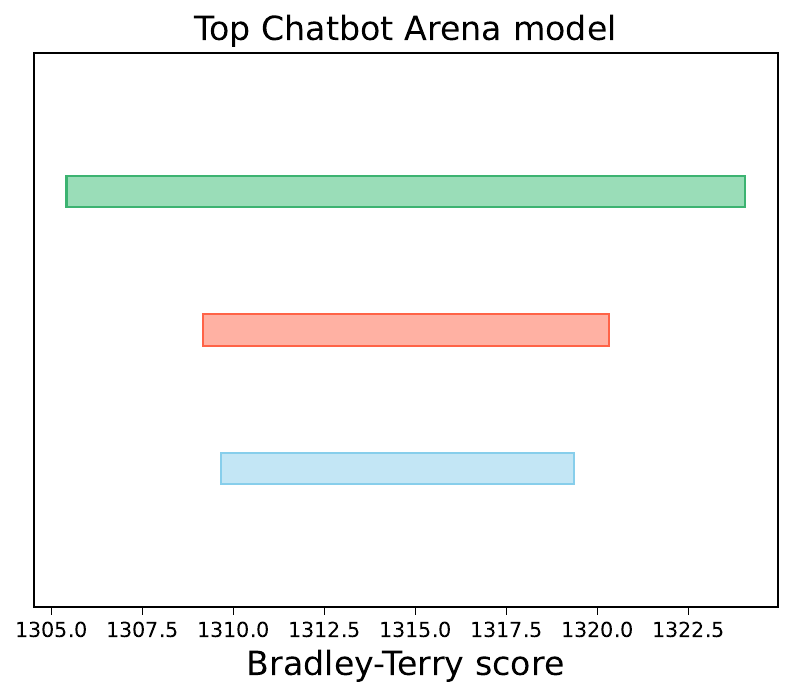}
\caption{Zoom correction intervals compared to the intervals computed via fully simultaneous and locally simultaneous \cite{zrnic2022locally} inference, in analyzing the performance of leaderboard winners.}
\label{fig:model_eval}
\end{figure}

\subsection{Model evaluation}

We consider inference on leaderboard winners; i.e., AI models that are the best performing on a benchmark. We consider two settings. In the first, we compare the accuracies of computer vision models available in TorchVision \cite{torchvision2016}, PyTorch's computer vision library, on the ImageNet dataset \cite{deng2009imagenet} (see \href{https://pytorch.org/vision/stable/models.html?fbclid=IwY2xjawFKrb9leHRuA2FlbQIxMAABHR_IjqeXFNGMex7cAqRt2Dusm9AguGW29-7C-oSYzBdLuTnDGtQ0Zy5SYQ_aem_qORwdM1YKothjcCN51LEqA}{here} for the accuracies). Our goal is to infer the population accuracy of the best performing model. We consider both top 1 and top 5 accuracy. In the second setting, we compare the performance of large language models (LLMs). We use data released by Chatbot Arena \cite{chiangchatbot}, a platform for crowd-sourced ranking of LLMs though pair-wise comparisons. Given a user-chosen prompt, the platform provides the answers of two LLMs (with hidden identities). The user then decides which answer they prefer. We compare the LLMs in terms of win rates, that is, how frequently they beat the competitors, as well as their Bradley-Terry scores \cite{bradley1952rank}, commonly used to quantify the ``strength'' of an LLM. Our goal is to estimate the average win rate and population value of the Bradley-Terry score of the best performing model (in both cases, ChatGPT-4o). Bradley-Terry scores are essentially logistic regression coefficients~\cite{chiangchatbot}; we thus apply a standard normal approximation for logistic regression to obtain an error tail bound.

Additionally, we apply the extension from Section \ref{sec:population_winner} to form a confidence set for the \emph{identity} of the population winner. That is, we return a set of models such that the population-best model in terms of the relevant metric (e.g., accuracy or Bradley-Terry score) is within the returned set.

Note that in both settings the scores observed for different models are highly dependent, and this dependence is difficult to characterize. In the first setting, the observed accuracies are dependent because all models are evaluated on the same test dataset from ImageNet. In the second setting, the win rates and Bradley-Terry scores of different models are dependent because they are collected through pairwise comparisons, so one ``battle'' affects the empirical win rates for two models. Given such dependence,~it is challenging to apply conditional and hybrid inference, and SoS does not apply since it assumes independent estimates. Therefore, we only compare the zoom correction to fully and locally simultaneous inference. For all three methods we assume worst-case dependence, i.e. we apply a union bound.

Figure \ref{fig:model_eval} shows the intervals. The zoom correction yields smaller intervals than both baselines, achieving a major improvement over the full union bound in all four examples. Locally simultaneous inference leads to noticeable improvements over the union bound in the TorchVision evaluations and Bradley-Terry score estimation, but does not see much gain for win rate estimation. In Table~\ref{table:torchvision} and Table~\ref{table:chatbot_arena} we report confidence sets for the identity of the population winner, obtained via the zoom correction and the full union bound. We see that the zoom correction reduces the size of the set in all problems but the top 1 accuracy in TorchVision and, in particular, it is able to identify the single best model on Chatbot Arena in terms of the Bradley-Terry score.

\begin{table*}[t!]
\renewcommand{\arraystretch}{1.5} 
\small
\centering
\begin{tabular}{l|cc}
\toprule
\multirow[t]{2}{*}{\textbf{Metric}} & \multicolumn{2}{c}{\textbf{Confidence set for the population winner (TorchVision)}} \\
                                    & zoom correction & simultaneous \\
\midrule
\textbf{Top 1 accuracy} & \parbox[t]{5.7cm}{\scriptsize\texttt{\{ViT\_H\_14\_Weights.IMAGENET1K\_SWAG\_E2E\_V1},\\ \texttt{RegNet\_Y\_128GF\_Weights.IMAGENET1K\_SWAG\_E2E\_V1},\\ \texttt{ViT\_L\_16\_Weights.IMAGENET1K\_SWAG\_E2E\_V1\}}} & \parbox[t]{5.7cm}{\scriptsize\texttt{\{ViT\_H\_14\_Weights.IMAGENET1K\_SWAG\_E2E\_V1},\\ \texttt{RegNet\_Y\_128GF\_Weights.IMAGENET1K\_SWAG\_E2E\_V1},\\ \texttt{ViT\_L\_16\_Weights.IMAGENET1K\_SWAG\_E2E\_V1\}}} \\
\\[-10pt] 
\textbf{Top 5 accuracy} & \parbox[t]{5.7cm}{\scriptsize\texttt{\{ViT\_H\_14\_Weights.IMAGENET1K\_SWAG\_E2E\_V1},\\ \texttt{RegNet\_Y\_128GF\_Weights.IMAGENET1K\_SWAG\_E2E\_V1},\\ \texttt{ViT\_L\_16\_Weights.IMAGENET1K\_SWAG\_E2E\_V1\}}} & \parbox[t]{5.7cm}{\scriptsize\texttt{\{ViT\_H\_14\_Weights.IMAGENET1K\_SWAG\_E2E\_V1},\\ \texttt{RegNet\_Y\_128GF\_Weights.IMAGENET1K\_SWAG\_E2E\_V1},\\ \texttt{ViT\_L\_16\_Weights.IMAGENET1K\_SWAG\_E2E\_V1},\\
\texttt{RegNet\_Y\_32GF\_Weights.IMAGENET1K\_SWAG\_E2E\_V1\}}} \\
\bottomrule
\end{tabular}
\caption{Confidence set for the population winner in terms of the top 1 and top 5 accuracy, among computer vision models in TorchVision.}
\label{table:torchvision}
\end{table*}

\begin{table*}[t!]
\renewcommand{\arraystretch}{1.5} 
\small
\centering
\begin{tabular}{l|cc}
\toprule
\multirow[t]{2}{*}{\textbf{Metric}} & \multicolumn{2}{c}{\textbf{Confidence set for the population winner (Chatbot Arena)}} \\
                                    & zoom correction & simultaneous \\
\midrule
\textbf{Win rate}                   & \parbox[t]{5cm}{\texttt{\{gpt-3.5-turbo-0314},\\ \texttt{gemini-1.5-pro-api-0409-preview},\\ \texttt{gemini-1.5-pro-exp-0801},\\ \texttt{gemini-1.5-pro-exp-0827},\\ \texttt{chatgpt-4o-latest},\\ \texttt{grok-2-2024-08-13\}}} & \parbox[t]{5cm}{\texttt{\{gpt-3.5-turbo-0314},\\ \texttt{gemini-1.5-pro-api-0409-preview},\\ \texttt{gemini-advanced-0514},\\ \texttt{gemini-1.5-pro-exp-0801},\\ \texttt{gemini-1.5-pro-exp-0827},\\ \texttt{chatgpt-4o-latest},\\ \texttt{gpt-4o-2024-08-06},\\ \texttt{grok-2-2024-08-13},\\ \texttt{llama-3.1-405b-instruct\}}} \\
\\[-10pt] 
\textbf{Bradley-Terry score}  & \parbox[t]{5cm}{\texttt{\{chatgpt-4o-latest\}}} & \parbox[t]{5cm}{\texttt{\{chatgpt-4o-latest},\\ \texttt{gemini-1.5-pro-api-0409-preview\}}} \\
\bottomrule
\end{tabular}
\caption{Confidence set for the population winner in terms of the win rate and Bradley-Terry score, among large language models on Chatbot Arena.}
\label{table:chatbot_arena}
\end{table*}

\subsection{Analysis of language devices}

In natural language processing, one is often interested in understanding the relationship between linguistic devices and the perceived sentiment of text; for example, the relationship between using devices such as gratitude words or hedging on the perceived politeness. A natural object of interest is devices that lead to the largest effect on the perceived politeness \cite{danescu2013computational}. We study the dataset in \cite{danescu2013computational}, which consists of online requests posted
on Stack Exchange and Wikipedia, annotated for politeness by human raters. All requests are additionally annotated with 21 binary features that correspond to the use of linguistic devices such as deference, gratitude, hedging, and so on.
We compute confidence intervals for the effect of the most effective devices. In the first setting we estimate the Pearson correlation coefficient between the use of each device and politeness and select the device that leads to the largest correlation; the winning device is using gratitude words. In the second setting, we regress politeness on all features via logistic regression and select the device with the largest coefficient; the winning device is indirectness.

In both cases, we apply a normal approximation. In the first problem, we do so via the Fisher z-transformation; in the second, we rely on the standard asymptotic normality of logistic regression. The dependencies between the different coefficients are intractable in the first problem and thus we apply simultaneous, locally simultaneous inference, and the zoom correction with a union bound. In the second problem, we can use the standard consistent plug-in estimate of the logistic regression covariance, allowing us to additionally apply the conditional and hybrid methods.

Figure \ref{fig:politeness} shows the computed intervals. In the first problem, the zoom correction yields a significantly smaller interval than both fully and locally simultaneous inference, which lead to similar intervals. In the second example, the winning coefficient stands out and thus conditional inference and the zoom correction yield essentially uncorrected intervals. The hybrid method and locally simultaneous inference give slightly larger intervals due to their use of error budget splitting, which prevents them from recovering exact uncorrected inference when the winner stands out.

\begin{figure}[t]
\hspace{2.2mm}\includegraphics[height=0.3\textwidth]{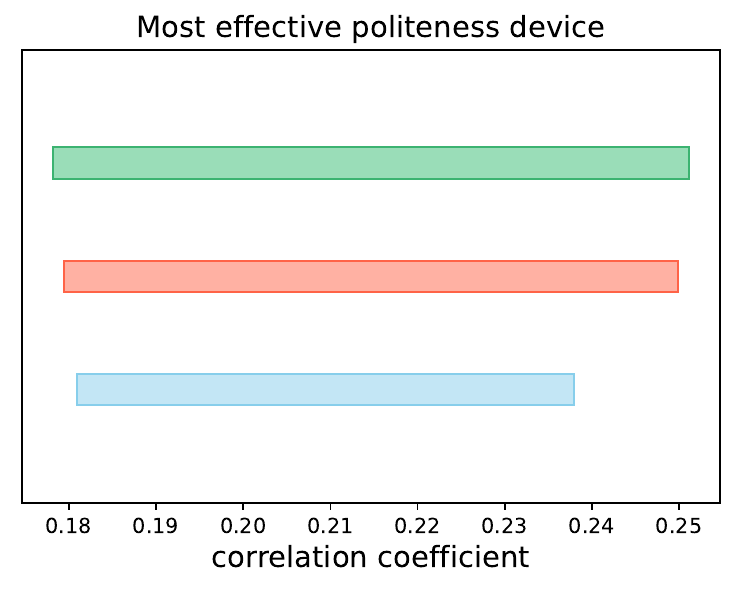}
\includegraphics[height=0.3\textwidth]{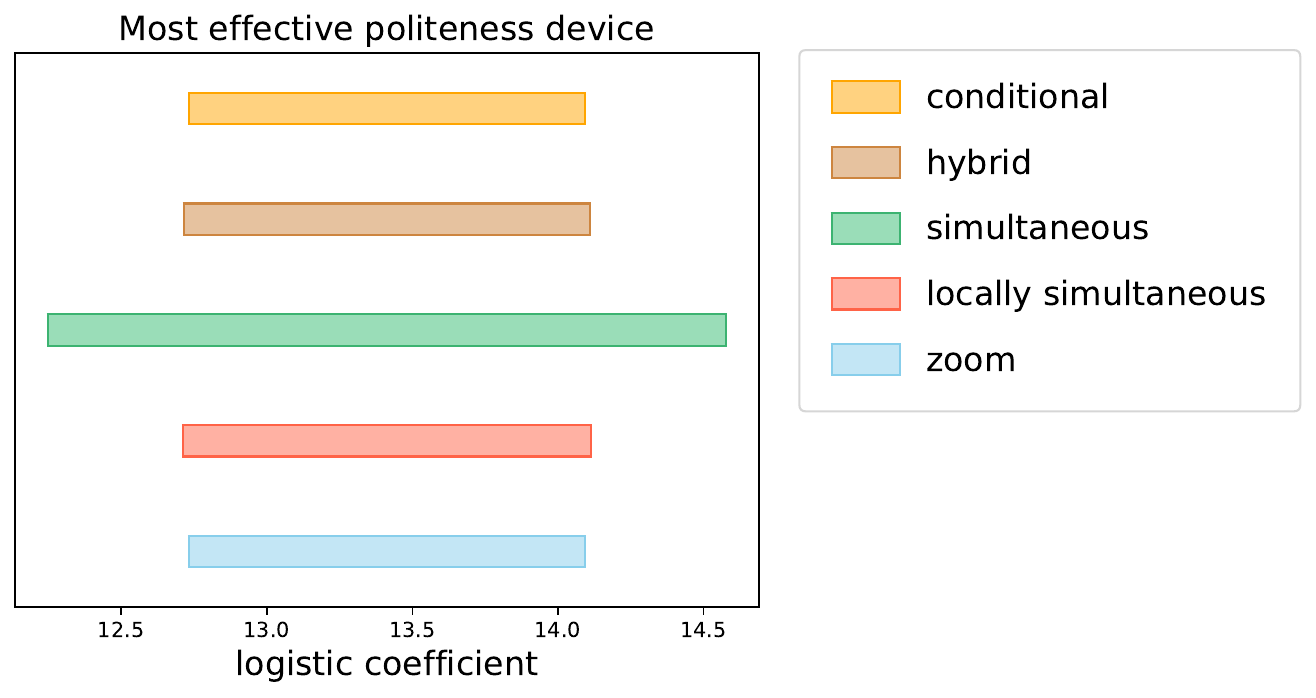}
\caption{Intervals computed via conditional \cite{lee2016exact}, hybrid \cite{andrews2019inference}, simultaneous, and locally simultaneous \cite{zrnic2022locally} inference, and the zoom correction, in language device analysis.}
\label{fig:politeness}
\end{figure}

\section{Discussion}
\label{sec:discussion}

We introduced the zoom correction: a novel, flexible approach for valid inference on the winner. The basic idea is to provide simultaneous inference for \emph{all} candidates in such a way that focuses the statistical power around the most promising candidates. The correction is applicable in both parametric and nonparametric settings, under possibly unknown dependence between the candidates.

In nonparametric settings or settings with unknown dependence, the zoom correction provides a strict improvement over the union bound. Moreover, it outperforms locally simultaneous inference empirically. We are not aware of other corrections that are applicable in such general settings.

In problems amenable to a normal approximation with a known covariance matrix, the hybrid method is typically the most powerful solution (though the zoom correction is often competitive). The main challenge for the hybrid method arises when there are many clearly suboptimal candidates, because the initial step in the hybrid approach---namely, ``classical'' simultaneous inference over all candidates---is unnecessarily loose around the promising candidates. However, notice that the hybrid method can be combined with \emph{any} simultaneous inference method---including the zoom correction. Using the zoom correction would give rise to a better hybrid method. Indeed, since the zoom correction strictly improves over classical simultaneous inference, a new hybrid method with the zoom correction as its initial step would outperform the usual hybrid method. In settings where a normal approximation with a known covariance matrix is applicable, we believe that this novel hybrid method building on the zoom correction would achieve the best of both worlds.

\section*{Acknowledgements}

We thank Isaiah Andrews for valuable feedback on this work and for proposing the integration of the zoom correction and the hybrid method, outlined in Section \ref{sec:discussion}.

\newpage

\bibliographystyle{plainnat}
\bibliography{refs}

\newpage

\appendix

\section{Proofs from Section \ref{sec:test}}

\subsection{Proof of Proposition \ref{prop:test-stat}}

\teststat*

\begin{proof}
We have
\begin{align*}
r_\alpha 
&= \min \left\{r:\; P\left(|\xi_j| \leq \max\left\{r,\frac{\Delta_j}{2}\right\}, \text{ for all } j\right) \geq 1-\alpha\right\} \\
&= \min \left\{r:\; P\left(|\xi_j|\,\cdot\, 1\left\{|\xi_j|> \frac{\Delta_j}{2}\right\} \leq r, \text{ for all } j\right) \geq 1-\alpha\right\} = \min \left\{r:\; P(M(\xi, \Delta) \leq r) \geq 1-\alpha\right\}.
\end{align*}
\end{proof}

\section{Proofs from Section \ref{sec:conf_int_winner}}
\label{sec:lemmas}

\subsection{Technical lemmas}

Below we state several technical lemmas that are used in the proof of Lemma \ref{lemma:mainlemma}.

First, we prove a convenient fact that allows us to restrict attention to $\theta$ vectors for which the winning coordinate $\ihat$ is an active coordinate.

\begin{lemma}[Winner must be active]
\label{lemma:winner_active}
Suppose $X \in A_\alpha(\theta)$, and let $\ihat = \argmax_{j\in[m]} \X_j$. Then, $\ihat \in \I_\alpha(\theta)$.
\end{lemma}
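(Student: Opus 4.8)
The plan is to argue by contradiction: suppose $X \in A_\alpha(\theta)$ but $\ihat \notin \I_\alpha(\theta)$, i.e.\ $\Delta_{\ihat} > 2r_\alpha$, so that the radius of $A_\alpha$ in coordinate $\ihat$ equals $\frac{\Delta_{\ihat}}{2}$. Since $X \in A_\alpha(\theta)$, this forces $X_{\ihat} \le \theta_{\ihat} + \frac{\Delta_{\ihat}}{2} = \theta^* - \frac{\Delta_{\ihat}}{2}$, using $\theta_{\ihat} = \theta^* - \Delta_{\ihat}$. On the other hand, let $k$ be any index achieving the population maximum, so $\theta_k = \theta^*$ and $\Delta_k = 0 \le 2r_\alpha$, hence $k \in \I_\alpha$ and the radius of $A_\alpha$ in coordinate $k$ is $r_\alpha$. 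Again using $X \in A_\alpha(\theta)$, we get $X_k \ge \theta_k - r_\alpha = \theta^* - r_\alpha$.

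Now I combine these two bounds. Since $\Delta_{\ihat} > 2r_\alpha$, we have $\frac{\Delta_{\ihat}}{2} > r_\alpha$, and therefore
\[
X_{\ihat} \le \theta^* - \frac{\Delta_{\ihat}}{2} < \theta^* - r_\alpha \le X_k.
\]
This says $X_k > X_{\ihat}$, contradicting the fact that $\ihat = \argmax_{j\in[m]} X_j$ is the index of the largest coordinate of $X$. Hence the assumption $\ihat \notin \I_\alpha(\theta)$ is untenable, and $\ihat \in \I_\alpha(\theta)$.

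The argument is short, and the only subtlety worth stating carefully is that $r_\alpha > 0$ (so that the strict inequality $\frac{\Delta_{\ihat}}{2} > r_\alpha$ genuinely follows from $\Delta_{\ihat} > 2r_\alpha$) and that a population-maximizing index $k$ always exists and lies in $\I_\alpha$ because $\Delta_k = 0$. Both are immediate from the definitions: $r_\alpha \ge 0$ always, and if $r_\alpha = 0$ then the condition $\Delta_{\ihat} > 2r_\alpha = 0$ still gives radius $\frac{\Delta_{\ihat}}{2} > 0 = r_\alpha$, so the chain of inequalities goes through unchanged. I do not anticipate any real obstacle here; the lemma is essentially a bookkeeping check that the coordinate carrying the empirical maximum cannot be the one whose interval has been inflated to its suboptimality gap, since that gap being large would push $X_{\ihat}$ below $X_k$.
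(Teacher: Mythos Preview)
Your proof is correct and follows essentially the same contradiction argument as the paper: assume $\ihat$ is inactive, use $X\in A_\alpha(\theta)$ to bound $X_{\ihat}$ above by $\theta^*-\Delta_{\ihat}/2$ and $X_{i^*}$ below by $\theta^*-r_\alpha$, and conclude that $X_{i^*}>X_{\ihat}$ (equivalently, the paper derives $r_\alpha\ge \Delta_{\ihat}/2$), contradicting the definition of $\ihat$. The only cosmetic difference is that you phrase the contradiction as $X_k>X_{\ihat}$ whereas the paper phrases it as $r_\alpha\ge \Delta_{\ihat}/2$; the underlying chain of inequalities is identical.
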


\begin{proof}
Let $i^*$ be a maximal index of $\theta$: $i^* \in \argmax_{j\in[m]}\theta_j$, and note that $i^*$ must be active because $\Delta_{i^*}=0$. Now suppose that, in fact, $\ihat\in \I_\alpha^c$. Then, we know
\[\X_{\ihat} \leq \theta_{\ihat} + \max\{r_\alpha, \Delta_{\ihat}/2\} = \theta_{\ihat} + \Delta_{\ihat}/2.\]
At the same time, because $i^*$ is active, we know
\[\X_{i^*} \geq \theta_{i^*} - r_\alpha.\]
Putting the two inequalities together, we get
\[\X_{\ihat} - \X_{i^*} \leq \theta_{\ihat} + \Delta_{\ihat}/2 - \theta_{i^*} + r_\alpha = -\Delta_{\ihat}/2 + r_\alpha.\]
But $\X_{\ihat} - \X_{i^*}\geq 0$ by the definition of $\ihat$, implying $r_\alpha \geq \Delta_{\ihat}/2$ and thus contradicting the claim $\ihat\in\I_\alpha^c$.
\end{proof}

\begin{lemma}[Only inactive gaps matter]
\label{lemma:r_only_inactive}
Fix $\theta \in \R^m$.
\begin{itemize}
\item[(i)] If $\theta'$ is any vector such that $\Delta_j(\theta') = \Delta_j(\theta)$ for $j\in \I_\alpha^c(\theta)$ and $\Delta_j(\theta') \leq \Delta_j(\theta)$ for $j\in \I_\alpha(\theta)$, then $r_\alpha(\theta) = r_\alpha(\theta')$.
\item[(ii)] If $\theta'$ is any vector such that $\Delta_j(\theta') \geq \Delta_j(\theta)$ for $j\in \I_\alpha^c(\theta)$, then $r_\alpha(\theta') \leq r_\alpha(\theta)$.
\end{itemize}
\end{lemma}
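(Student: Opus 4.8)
\textbf{Proof plan for Lemma~\ref{lemma:r_only_inactive}.}
The plan is to work directly from the defining formula~\eqref{eq:active-radius}, namely $r_\alpha(\theta) = \min\{r : \mathbf S(r\vee\tfrac{\Delta(\theta)}{2})\le\alpha\}$, using only monotonicity of $\mathbf S$ (coordinatewise decreasing) and the observation that the map $r\mapsto r\vee\tfrac{\Delta_j}{2}$ is constant in $r$ on $[0,\tfrac{\Delta_j}{2}]$ and equals $r$ for $r\ge\tfrac{\Delta_j}{2}$. The key structural fact is that $j\in\I_\alpha^c(\theta)$ means $\Delta_j(\theta)>2r_\alpha(\theta)$, so near $r = r_\alpha(\theta)$ the $j$-th coordinate of the argument $r\vee\tfrac{\Delta_j}{2}$ is ``frozen'' at $\tfrac{\Delta_j}{2}$, whereas for $j\in\I_\alpha(\theta)$ we have $\Delta_j(\theta)\le 2r_\alpha(\theta)$ and the $j$-th coordinate is already equal to $r$ at $r=r_\alpha(\theta)$.

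For part~(ii), let $r^\star = r_\alpha(\theta)$. I would show $\mathbf S(r^\star \vee \tfrac{\Delta(\theta')}{2})\le\alpha$, which by definition of the min immediately gives $r_\alpha(\theta')\le r^\star$. The point is that for $j\in\I_\alpha(\theta)$, since $\Delta_j(\theta)\le 2r^\star$, we have $r^\star\vee\tfrac{\Delta_j(\theta)}{2} = r^\star$, and $r^\star\vee\tfrac{\Delta_j(\theta')}{2}\ge r^\star$ trivially; more importantly, I claim $r^\star\vee\tfrac{\Delta_j(\theta')}{2}\ge r^\star\vee\tfrac{\Delta_j(\theta)}{2}$ coordinatewise for all $j$ — for inactive $j$ this is the hypothesis $\Delta_j(\theta')\ge\Delta_j(\theta)$, and for active $j$ both sides equal $r^\star$ (left side is $\ge r^\star$, right side is $=r^\star$). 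Wait — that inequality goes the wrong way for applying monotonicity of $\mathbf S$. Let me instead argue the other direction: I want an \emph{upper} bound on $\mathbf S$ at $\theta'$, so I need the argument at $\theta'$ to dominate something known to be $\le\alpha$. The correct move is: $r^\star\vee\tfrac{\Delta_j(\theta')}{2}\ge \min\{r^\star,\ \text{stuff}\}$... Actually the clean statement is simply that for inactive $j$, $r^\star\vee\tfrac{\Delta_j(\theta')}{2}\ge r^\star\vee\tfrac{\Delta_j(\theta)}{2}$, and for active $j$, $r^\star\vee\tfrac{\Delta_j(\theta')}{2}\ge r^\star = r^\star\vee\tfrac{\Delta_j(\theta)}{2}$; hence the vector at $\theta'$ dominates the vector at $\theta$ coordinatewise, so $\mathbf S$(at $\theta'$) $\le\mathbf S$(at $\theta$) $=\mathbf S(r^\star\vee\tfrac{\Delta(\theta)}{2})\le\alpha$ by coordinatewise monotonicity — and this is exactly what is needed.

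For part~(i), I would prove both inequalities. The direction $r_\alpha(\theta')\le r_\alpha(\theta)$ follows from part~(ii) applied with the roles as given, since the hypotheses of~(i) imply those of~(ii) (on inactive coordinates $\Delta_j(\theta')=\Delta_j(\theta)\ge\Delta_j(\theta)$). For the reverse inequality $r_\alpha(\theta)\le r_\alpha(\theta')$, set $r' = r_\alpha(\theta')$ and show $\mathbf S(r'\vee\tfrac{\Delta(\theta)}{2})\le\alpha$; here I would use that $r'\le r^\star$ together with: on inactive $j$, $\Delta_j(\theta') = \Delta_j(\theta) > 2r^\star\ge 2r'$ so $j$ is also inactive at $\theta'$ and $r'\vee\tfrac{\Delta_j(\theta')}{2} = \tfrac{\Delta_j(\theta)}{2} = r'\vee\tfrac{\Delta_j(\theta)}{2}$; on active $j$, $r'\vee\tfrac{\Delta_j(\theta)}{2}\le r'\vee r^\star = r^\star$, but I need to compare with $r'\vee\tfrac{\Delta_j(\theta')}{2}$ — since $\Delta_j(\theta')\le\Delta_j(\theta)$, we get $r'\vee\tfrac{\Delta_j(\theta')}{2}\le r'\vee\tfrac{\Delta_j(\theta)}{2}$, wrong direction again, so instead I should note that actually I want to bound $\mathbf S(r'\vee\tfrac{\Delta(\theta)}{2})$ from above by something $\le\alpha$, and the natural candidate is to show this argument is dominated coordinatewise by $r^\star\vee\tfrac{\Delta(\theta')}{2}$ evaluated appropriately, or more simply by directly checking $r'\vee\tfrac{\Delta_j(\theta)}{2}\le \big(r'\vee\tfrac{\Delta_j(\theta')}{2}\big)$ fails on active coords — so the cleanest route is: on active $j$, $r'\vee\tfrac{\Delta_j(\theta)}{2}\le\max\{r', r^\star\} = r^\star$, and note $\mathbf S$ evaluated at the mixed vector (which equals $\tfrac{\Delta_j(\theta)}2$ on inactive $j$, i.e.\ $> r^\star$, and $\le r^\star$ on active $j$) is $\le\mathbf S(r^\star\vee\tfrac{\Delta(\theta)}{2})$ because on active coords $r' \le r^\star$ lowers the entry while on inactive coords the entries are equal; hence $\le\alpha$. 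The main obstacle is keeping the chain of coordinatewise inequalities and the active/inactive case split straight — in particular verifying that ``inactive at $\theta$'' transfers to ``inactive at $\theta'$'' (which relies on $r_\alpha(\theta')\le r_\alpha(\theta)$, already established) — but once the monotonicity bookkeeping is set up, each step is a one-line application of the hypotheses and coordinatewise monotonicity of $\mathbf S$.
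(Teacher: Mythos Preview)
Your argument for part~(ii) is correct and in fact more direct than the paper's: at $r^\star = r_\alpha(\theta)$ the vector $r^\star \vee \tfrac{\Delta(\theta')}{2}$ dominates $r^\star \vee \tfrac{\Delta(\theta)}{2}$ coordinatewise (equality on active $j$, hypothesis on inactive $j$), so $\mathbf S$ drops and $r_\alpha(\theta') \le r^\star$. The paper instead proves~(i) first and deduces~(ii) by passing through an auxiliary $\theta''$ with zero active gaps.

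The genuine gap is in the reverse inequality $r_\alpha(\theta) \le r_\alpha(\theta')$ for part~(i). Your final ``cleanest route'' argues that the vector $r' \vee \tfrac{\Delta(\theta)}{2}$ is coordinatewise $\le r^\star \vee \tfrac{\Delta(\theta)}{2}$ and concludes $\mathbf S(r' \vee \tfrac{\Delta(\theta)}{2}) \le \mathbf S(r^\star \vee \tfrac{\Delta(\theta)}{2}) \le \alpha$. This is backwards: $\mathbf S$ is coordinatewise \emph{decreasing}, so a smaller argument gives a \emph{larger} value of $\mathbf S$; your chain actually yields $\mathbf S(r' \vee \tfrac{\Delta(\theta)}{2}) \ge \mathbf S(r^\star \vee \tfrac{\Delta(\theta)}{2})$, which is useless. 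Ironically, the comparison you dismissed as ``wrong direction again'' is exactly the right one: under the hypotheses of~(i), $\Delta_j(\theta') \le \Delta_j(\theta)$ for \emph{all} $j$, so $r' \vee \tfrac{\Delta_j(\theta)}{2} \ge r' \vee \tfrac{\Delta_j(\theta')}{2}$ coordinatewise, whence $\mathbf S\bigl(r' \vee \tfrac{\Delta(\theta)}{2}\bigr) \le \mathbf S\bigl(r' \vee \tfrac{\Delta(\theta')}{2}\bigr) \le \alpha$, giving $r_\alpha(\theta) \le r'$ as desired. This is precisely the paper's one-line argument: smaller gaps for $\theta'$ means the feasible set for $\theta'$ is contained in that for $\theta$, so $r_\alpha(\theta') \ge r_\alpha(\theta)$.
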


\begin{proof}
Recall that $r_\alpha(\theta) = \min\{r: \mathbf{S}(\max\{r,\Delta(\theta)/2\}) \leq \alpha\}$.

\emph{Proof of (i).} Since $\Delta_j(\theta') \leq \Delta_j(\theta)$ for all $j$,  $\max\{r,\Delta_j(\theta')/2\} \leq \max\{r,\Delta_j(\theta)/2\}$ for all $r$, and so $r_\alpha(\theta')$ has to be at least as large as $r_\alpha(\theta)$. However, since for $j\in \I_\alpha(\theta)$ we know $\max\{r_\alpha(\theta),\Delta_j(\theta)/2\} = r_\alpha(\theta)$, then we also have $\max\{r_\alpha(\theta),\Delta_j(\theta')/2\} = r_\alpha(\theta)$ and $\mathbf{S}(\max\{r_\alpha(\theta),\Delta(\theta')/2\}) \leq \alpha$. Consequently, $r_\alpha(\theta)$ satisfies the definition of the active radius for $\theta'$, and so $r_\alpha(\theta') = r_\alpha(\theta)$.

\emph{Proof of (ii).} Let $\theta''$ be any vector such that:
\[\Delta_j(\theta'')=\begin{cases} \Delta_j(\theta) & \text{if } j\in \I^c_\alpha(\theta);\\
0 & \text{if }  j \in \I_\alpha(\theta).
\end{cases}\]
Then, by claim (i), $r_\alpha(\theta) = r_\alpha(\theta'')$. However, since $\theta''$ has gaps that are at least as small as those of $\theta'$ coordinate-wise, we know $r_\alpha(\theta'') \geq r_\alpha(\theta')$. Therefore, $r_\alpha(\theta) \geq r_\alpha(\theta')$.
\end{proof}

\begin{lemma}[Increasing inactive coordinates cannot increase all their radii]
\label{lemma:increasing_coord}
Fix a vector $\theta\in\R^m$ and an index set $\I\subseteq \I_\alpha^c(\theta)$. Let $\theta'$ be any vector such that $\Delta_j(\theta) \leq \Delta_j(\theta')$ for all $j\in\I_\alpha^c(\theta) \setminus \I$ and  $\Delta_j(\theta') < \Delta_j(\theta)$ for $j\in\I$. Let 
$i_0 = \argmax_{j\in\I} \Delta_j(\theta).$
Then, $\Delta_{i_0}(\theta)/2 \geq \max\{r_\alpha(\theta'), \Delta_{i_0}(\theta')/2\}$.
\end{lemma}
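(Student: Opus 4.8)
The plan is to use that the active radius depends on its argument only through the gap vector, and to split the claimed inequality $\Delta_{i_0}(\theta)/2 \geq \max\{r_\alpha(\theta'),\Delta_{i_0}(\theta')/2\}$ into its two halves. The half $\Delta_{i_0}(\theta)/2 \geq \Delta_{i_0}(\theta')/2$ is immediate, since $i_0\in\I$ and the hypothesis forces $\Delta_{i_0}(\theta') < \Delta_{i_0}(\theta)$. So the entire content is the bound $r_\alpha(\theta') \leq \Delta_{i_0}(\theta)/2$. I would prove this by showing that $r = \Delta_{i_0}(\theta)/2$ is feasible in the defining minimization \eqref{eq:active-radius} of $r_\alpha(\theta')$; since $r\mapsto \mathbf{S}(r\vee\Delta(\theta')/2)$ is non-increasing, its feasible set is an up-interval, so exhibiting one feasible point at $\Delta_{i_0}(\theta)/2$ forces $r_\alpha(\theta') \leq \Delta_{i_0}(\theta)/2$.

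To show $\mathbf{S}\!\left(\frac{\Delta_{i_0}(\theta)}{2}\vee\frac{\Delta(\theta')}{2}\right) \leq \alpha$, I would compare coordinatewise against $\mathbf{S}\!\left(r_\alpha(\theta)\vee\frac{\Delta(\theta)}{2}\right)$, which is at most $\alpha$ by the definition of $r_\alpha(\theta)$ together with the continuity of $\mathbf{S}$. Since $\mathbf{S}$ is coordinatewise decreasing, it suffices to verify that for every $j\in[m]$,
\[
\max\!\left\{\frac{\Delta_{i_0}(\theta)}{2},\,\frac{\Delta_j(\theta')}{2}\right\} \;\geq\; \max\!\left\{r_\alpha(\theta),\,\frac{\Delta_j(\theta)}{2}\right\},
\]
which I would dispatch in three cases. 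If $j\in\I$, then $j$ is inactive for $\theta$, so the right side equals $\Delta_j(\theta)/2 \leq \Delta_{i_0}(\theta)/2$, which is at most the left side by maximality of $i_0$ over $\I$. If $j\in\I_\alpha^c(\theta)\setminus\I$, then again $j$ is inactive, so the right side is $\Delta_j(\theta)/2 \leq \Delta_j(\theta')/2$ because gaps off $\I$ only increase. If $j\in\I_\alpha(\theta)$, then $\Delta_j(\theta)/2\leq r_\alpha(\theta)$, so the right side equals $r_\alpha(\theta)$, while the left side is at least $\Delta_{i_0}(\theta)/2 > r_\alpha(\theta)$ because $i_0\in\I\subseteq\I_\alpha^c(\theta)$ is inactive. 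Pushing the coordinatewise inequality back through the monotonicity of $\mathbf{S}$ then yields the required feasibility.

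The step I expect to need the most care is keeping the bookkeeping honest: the hypotheses on $\theta'$ constrain only the gaps on $\I_\alpha^c(\theta)$---growth outside $\I$, strict shrinkage inside $\I$---and say nothing about the active coordinates of $\theta$, so the case $j\in\I_\alpha(\theta)$ must be closed using only the strict inequality $\Delta_{i_0}(\theta)>2r_\alpha(\theta)$ inherited from $i_0$ being inactive, not any control of the active gaps of $\theta'$. A minor caveat is that $i_0$ is well-defined only when $\I\neq\emptyset$, which I take to be implicit in the statement. As an alternative route, one could first replace each gap $\Delta_j(\theta')$ for $j\in\I$ by the common value $\Delta_{i_0}(\theta)$ and invoke Lemma~\ref{lemma:r_only_inactive}(ii); but the direct feasibility argument above appears cleanest.
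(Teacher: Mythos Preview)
Your proof is correct and follows essentially the same approach as the paper's: both hinge on the identical three-case coordinatewise comparison (for $j\in\I$, $j\in\I_\alpha^c(\theta)\setminus\I$, and $j\in\I_\alpha(\theta)$) to show the radius vector at one parameter dominates that at the other. The only cosmetic difference is packaging: you argue directly by exhibiting $r=\Delta_{i_0}(\theta)/2$ as feasible in the definition~\eqref{eq:active-radius} of $r_\alpha(\theta')$, whereas the paper argues by contradiction, assuming $r_\alpha(\theta')>\Delta_{i_0}(\theta)/2$ and deducing that $A_\alpha(\theta')$ strictly dominates $A_\alpha(\theta)$ coordinatewise, which would allow $r_\alpha(\theta')$ to be decreased.
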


\begin{proof}
Suppose the claim is not true. Then, since $\Delta_{i_0}(\theta') < \Delta_{i_0}(\theta)$, that must mean that
\[r_\alpha(\theta') > \Delta_{i_0}(\theta)/2 = \max_{j\in\I} \Delta_j(\theta)/2 > r_\alpha(\theta),\]
where the last inequality follows from the fact that $i_0$ is inactive at $\theta$. This means that 
\[\max\{\Delta_j(\theta')/2, r_\alpha(\theta')\} > \max\{\Delta_j(\theta)/2, r_\alpha(\theta)\} \]
for all $j\in\I \cup \I_\alpha(\theta)$. Moreover, for the remaining coordinates, $j\in\I^c_\alpha(\theta)\setminus \I$, we know that 
\[\max\{\Delta_j(\theta')/2, r_\alpha(\theta')\} \geq \Delta_j(\theta')/2 \geq \Delta_j(\theta)/2 = \max\{\Delta_j(\theta)/2, r_\alpha(\theta)\}.\]
In conclusion, the radius of the acceptance region $A_\alpha(\theta')$ is at least as large as the radius of the acceptance region $A_\alpha(\theta)$ in all coordinates $j$, and for some coordinates it is strictly larger. That means $P(X\in A_\alpha(\theta')) > P(X\in A_\alpha(\theta)) \geq 1-\alpha$, which in turn means that we could decrease the radius $r_\alpha(\theta')$ while maintaining the coverage at level $1-\alpha$. This contradicts the assumption that $r_\alpha(\theta')$ is minimal, as per Eq.~\eqref{eq:active-radius}; therefore, the claim of the lemma must be correct.
\end{proof}

\subsection{Proof of Lemma \ref{lemma:mainlemma}}

\mainlemma*

\begin{proof}
$(\Leftarrow)$ By the definition of the projection interval $\widehat C_{\ihat}^\alpha$, it suffices to show that $\X \in A_\alpha(\theta^t)$, since $\theta_{\ihat}^t = t$. In other words, we show that for all $j\in[m]$ we have $|\X_j-\theta_j^t| \leq \max\{r_\alpha(\theta^t), \Delta_j(\theta^t)/2\}$. This is true for coordinate $\ihat$ by assumption:
\[
|\X_{\ihat} - \theta_{\ihat}^t| = |\X_{\ihat} - t| \leq r_\alpha(\theta^t).
\]
For all other coordinates $j\neq \ihat$, we split the analysis depending on whether $\X_j \leq t$ or not. If $\X_j \leq t$, we have
\begin{equation}
\label{eq:thetat_eq1}
|\X_j - \theta_j^t| = \frac{t - \X_j}{3} = \frac{t - \theta_j^t}{2} = \frac{\Delta_j(\theta^t)}{2}.
\end{equation}
If $\X_j > t$, then we have 
\begin{equation}
\label{eq:thetat_eq2}
|\X_j -  \theta_j^t| = \X_j - t \leq \X_{\hat i} - t \leq r_\alpha(\theta^t),
\end{equation}
where the last inequality is true by assumption. Therefore, we have shown $|\X_j-\theta_j^t| \leq \max\{r_\alpha(\theta^t), \Delta_j( \theta^t)/2\}$ for all $j\in[m]$, and thus $X\in A_\alpha( \theta^t)$.\\

\noindent $(\Rightarrow)$ The fact that $t\in \widehat C_{\ihat}^\alpha$ means that  there exists some $\theta\in\R^m$ with $\theta_{\ihat}=t$ for which $\X \in A_\alpha(\theta)$. We use Lemma \ref{lemma:winner_active}, which says that $\ihat$ must be an active index for any $\theta$ such that $\X\in A_\alpha(\theta)$. Therefore, we know $r_\alpha(\theta) \geq \Delta_{\ihat}(\theta)/2$, implying
\begin{equation}
\label{eq:rtheta}
|\X_{\ihat} - \theta_{\ihat}| = |\X_{\ihat} - t| \leq \max\{r_\alpha(\theta), \Delta_{\ihat}(\theta)/2\} = r_\alpha(\theta).
\end{equation}
If we show $r_\alpha(\theta^t) \geq r_\alpha(\theta)$ the proof is complete, so now we focus on showing this fact.

Lemma \ref{lemma:r_only_inactive} (ii) gives a key characterization of the active radius: $r_\alpha(\theta)$ is non-increasing in the \emph{inactive} gaps. Therefore, if we prove that $\theta^t$ has the smallest inactive gaps of all $\theta$ such that $X\in A_\alpha(\theta)$, then the result follows. Formally, if $\Delta_j(\theta) \geq \Delta_j( \theta^t)$ holds for all $j\in \I^c_\alpha( \theta^t)$, then $r_\alpha( \theta^t) \geq r_\alpha(\theta)$. 

Suppose this is not true: for some $j\in \I^c_\alpha( \theta^t)$, $\Delta_j(\theta) < \Delta_j( \theta^t)$. First note that this means that $\theta_j >  \theta_j^t$, because the maximum of $\theta$ is at least as large as the maximum of $ \theta^t$: $\max_j \theta_j \geq \theta_{\ihat} = \max_j  \theta_j^t = t$. Note also that it must be that $X_j\leq t$ and thus $ \theta_j^t$ must be equal to $\frac{2}{3} \X_j + \frac 1 3 t$; if not,  $\X_j > t$ would imply $\Delta_j( \theta^t)=  \theta^t_{\ihat} -  \theta_j^t = t-t = 0$, contradicting the assumption that $j$ is inactive, $j\in \I^c_\alpha( \theta^t)$. Therefore, 
\begin{equation*}
\label{eq:inactive_gap}
\theta_j - \X_j >  \theta_j^t - \X_j = \frac 1 3 (t-\X_j)= \frac 1 2 (t- \theta_j^t) = \frac{\Delta_j( \theta^t)}{2}.
\end{equation*}
Lemma \ref{lemma:increasing_coord} shows that there must be \emph{at least one} $j$ for which $\Delta_j(\theta) < \Delta_j(\theta^t)$, such that $\Delta_j( \theta^t)/2 \geq  \max\{r_\alpha(\theta), \Delta_j(\theta)/2\}$. As a result, there must exist a coordinate $j$ such that $\theta_j - \X_j > \max\{r_\alpha(\theta), \Delta_j(\theta)/2\}$, which contradicts the claim that $X\in A_\alpha(\theta)$. Therefore, we must in fact have $\Delta_j(\theta) \geq \Delta_j( \theta^t)$ for all $j\in \I^c_\alpha( \theta^t)$.

Finally, $r_\alpha( \theta^t) \geq r_\alpha(\theta)$ follows by invoking Lemma \ref{lemma:r_only_inactive} (ii). Going back to Eq. \eqref{eq:rtheta} and applying this inequality completes the proof.
\end{proof}

\section{Proofs from Section \ref{sec:stepdown}}

\subsection{Proof of Lemma \ref{lemma:endpoint_equation}}

\endpointequation*

\begin{proof}
We state the proof for the lower endpoint; the proof for the upper endpoint proceeds analogously.

Recall the definition of the worst-case vector $\theta^t$ that exactly characterizes which $t$ are in $\widehat C^\alpha$:
\begin{equation*}
\theta_j^t = \begin{cases} t & \text{if } j = \ihat; \\ \min\left\{\frac{2}{3}\X_j + \frac{1}{3}t, t \right\} & \text{if } j \neq \ihat.\end{cases}
\end{equation*}
We know that by definition $r_l \equiv r_\alpha(\theta^{t_l})$ satisfies
\[\alpha = \sum_{j=1}^m S\left(\max\left\{r_\alpha(\theta^{t_l}), \frac{\Delta_j(\theta^{t_l})}{2}\right\}\right).\]
The gaps are equal to $\Delta_j(\theta^{t_l}) = \max\{0, \frac{2}{3}(t_l-X_j)\} = \max\{0, \frac{2}{3}(t_l - X_{\ihat} + \hDelta_j)\}$. Since by the continuity of the noise distribution we know that $t_l$ must satisfy the boundary condition $t_l = X_{\ihat} - r_\alpha(\theta^{t_l})$, this in turn implies the claim of the lemma:
\[\alpha = \sum_{j=1}^m S\left(\max\left\{r_\alpha(\theta^{t_l}), \frac{\hDelta_j - r_\alpha(\theta^{t_l})}{3}\right\}\right).\]
\end{proof}

\subsection{Proof of Theorem \ref{thm:stepdown}}

\stepdown*

\begin{proof}
To show the validity of the lower bound, we rely on the fact that $\S_l(r_l)=\alpha$. We show $\hat r_l \geq r_l$ by proving that $\S_l(\hat r_l) \leq \alpha$ \emph{and} $\S_l(r)<\alpha$ for all $r> \hat r_l$. The proof for the upper bound is simpler, since $\S_u(r_u)=\alpha$ \emph{and} $\S_u(r)$ is monotonically decreasing; we simply show $\S_u(\hat r_u)\leq \alpha$ and that implies $\hat r_u \geq r_u$.

\paragraph{Lower bound.}
Let $k_0\in[m]$ be the first step when $\hDelta_{(j)} \leq 4\hat r_l^{(j)}$ and the algorithm terminates (such $k_0$ must exist because $\hDelta_{(m)}=0$). We split the terms before $k_0$ and starting with $k_0$:
\begin{align*}
\S_l(\hat r_l) &= \sum_{j=1}^{k_0-1} S\left(\max\left\{\hat r_l, \frac{\hDelta_{(j)} - \hat r_l}{3}\right\}\right) + \sum_{j=k_0}^{m} S\left(\max\left\{\hat r_l, \frac{\hDelta_{(j)} - \hat r_l}{3}\right\}\right).
\end{align*}
At every step $k \leq k_0$, Algorithm \ref{alg:stepdownlow} ensures
\begin{align}
\label{eq:alphak}
\sum_{j=k}^{m} S\left(\hat r_l^{(k)}\right) = \alpha_{k} = \alpha - \sum_{j=1}^{k-1} S\left(\frac{\hDelta_{(j)} - \hat r_l^{(j)}}{3}\right).
\end{align}
Since $\hat r_l \equiv \hat r_l^{(k_0)}$, and furthermore $\hat r_l \geq  \frac{\hDelta_{(j)} - \hat r_l}{3}$ for all $j\geq k_0$, we thus have
\begin{align*}\S_l(\hat r_l) &= \alpha + \sum_{j=1}^{k_0-1} \left(S\left(\max\left\{\hat r_l, \frac{\hDelta_{(j)} - \hat r_l}{3}\right\}\right)  - S\left(\frac{\hDelta_{(j)} - \hat r_l^{(j)}}{3}\right) \right)\\
&\leq \alpha + \sum_{j=1}^{k_0-1} \left(S\left( \frac{\hDelta_{(j)} - \hat r_l}{3}\right)  - S\left(\frac{\hDelta_{(j)} - \hat r_l^{(j)}}{3}\right) \right).
\end{align*}
Therefore, if we show that $\hat r_l \equiv \hat r_l^{(k_0)} \leq \hat r_l^{(j)}$ for $j< k_0$, then $S\left( \frac{\hDelta_{(j)} - \hat r_l}{3}\right)  \leq  S\left(\frac{\hDelta_{(j)} - \hat r_l^{(j)}}{3}\right)$ for each term $j$ and we have $\S_l(\hat r_l) \leq \alpha$, as desired. Since $\frac{\hDelta_{(j)} - \hat r_l^{(j)}}{3} \geq \hat r_l^{(j)}$ for $j<k_0$, we have $S\left(\frac{\hDelta_{(j)} - \hat r_l^{(j)}}{3}\right) \leq S(\hat r_l^{(j)}) \leq \frac{ \alpha_j}{m-j+1}$, by the definition of $\hat r_l^{(j)}$. This in turn implies
\[\frac{\alpha_{j+1}}{m-j} = \frac{1}{m-j}\left( \alpha_j - S\left(\frac{\hDelta_{(j)} - \hat r_l^{(j)}}{3}\right)\right) \geq \frac{1}{m-j} \alpha_j \left(1 - \frac{1}{m-j+1}\right) = \frac{\alpha_j}{m-j+1}, \]
and thus $\hat r_l^{(j+1)} = S^{-1}(\frac{\alpha_{j+1}}{m-j}) \leq S^{-1}(\frac{\alpha_{j}}{m-j+1}) = \hat r_l^{(j)}$. This proves that $\hat r_l^{(j)}$ decreases as $j$ increases, implying $\hat r_l^{(k_0)} \leq \hat r_l^{(j)}$, as desired.

Finally, we argue that $\S_l(r) < \alpha$ for any $r>\hat r_l$. For $r > \hat r_l^{(1)}$, the claim is immediate, since
$$\S_l(r) \leq m S(r) < m S(\hat r_l^{(1)}) = \alpha.$$
Otherwise, assume $r\in(\hat r_l^{(k+1)}, \hat r_l^{(k)}]$ for some $k\in\{1,\dots,k_0-1\}$. Then, we have
\begin{align*}
\S_l(r) \leq \sum_{j=1}^{k} S\left(\frac{\hDelta_{(j)}-r}{3}\right) + \sum_{j=k+1}^m S(r) &< \sum_{j=1}^{k} S\left( \frac{\hDelta_{(j)}-r_l^{(k)}}{3}\right) + \sum_{j=k+1}^{m} S(r_l^{(k+1)}) \\
&\leq \sum_{j=1}^{k} S\left( \frac{\hDelta_{(j)}-r_l^{(j)}}{3}\right) + \sum_{j=k+1}^{m} S(r_l^{(k+1)}).
\end{align*}
By Eq.~\eqref{eq:alphak}, the right-hand side is at most $\alpha$ for all $k< k_0$, and thus $\S_l(r)<\alpha$ for all $r>\hat r_l$.

\paragraph{Upper bound.}
We proceed with a similar argument as for the lower bound.
Let $k_0\in[m]$ be the first step when $\hDelta_{(j)} \leq 2\hat r_u^{(j)}$ and the algorithm terminates (such $k_0$ must exist because $\hat \Delta_{(m)}=0$). We split the terms before $k_0$ and starting with $k_0$:
\[\S_u(\hat r_u) = \sum_{j=1}^{k_0-1} S\left(\max\left\{\hat r_u, \frac{\hDelta_{(j)} + \hat r_u}{3}\right\}\right) + \sum_{j=k_0}^{m} S\left(\max\left\{\hat r_u, \frac{\hDelta_{(j)} + \hat r_u}{3}\right\}\right).\]
Since $\hat r_u \equiv \hat r_u^{(k_0)}$, and furthermore $\hat r_u \geq  \frac{\hDelta_{(j)} + \hat r_u}{3}$ for all $j\geq k_0$, we can write the second sum as
\[\sum_{j=k_0}^{m} S\left(\hat r_u^{(k_0)}\right) = \alpha_{k_0} = \alpha - \sum_{j=1}^{k_0-1} S\left(\frac{\hDelta_{(j)} +  S^{-1}(\alpha)}{3}\right),\]
where the last step uses the definition of $\alpha_{k_0}$. Putting everything together, we have shown
\begin{align*}
\S_u(\hat r_u) &= \alpha + \sum_{j=1}^{k_0-1} \left(S\left(\max\left\{\hat r_u, \frac{\hDelta_{(j)} + \hat r_u}{3}\right\}\right)  - S\left(\frac{\hDelta_{(j)} +  S^{-1}(\alpha)}{3}\right) \right)\\
&\leq \alpha + \sum_{j=1}^{k_0-1} \left(S\left( \frac{\hDelta_{(j)} + \hat r_u}{3}\right)  - S\left(\frac{\hDelta_{(j)} + S^{-1}(\alpha)}{3}\right) \right).
\end{align*}
We have that $\hat r_u^{(j)} \geq S^{-1}(\alpha)$ since $\frac{\alpha_j}{m-j+1}\leq \alpha$ for all $j\in[m]$; therefore, $S\left( \frac{\hDelta_{(j)} + \hat r_u}{3}\right)  \leq S\left(\frac{\hDelta_{(j)} + S^{-1}(\alpha)}{3}\right)$. This completes the proof that $\S_u(\hat r_u)\leq \alpha$.
\end{proof}

\section{Proofs from Section \ref{sec:topk}}

\subsection{Technical lemmas}

We begin by stating several supporting technical lemmas, analogous to those in Section \ref{sec:lemmas} for the single winner case.

\begin{lemma}[Winners must be active]
\label{lemma:winners_active-topk}
Suppose $X \in A_\alpha$, and let $\ihat_{(1)},\dots,\ihat_{(k)}$ be the top $k$ coordinates of $X$. Then, $\ihat_{(1)},\dots,\ihat_{(k)} \in \I_\alpha$.
\end{lemma}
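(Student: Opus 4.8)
The plan is to mimic the proof of Lemma~\ref{lemma:winner_active} (the $k=1$ case), inserting one extra combinatorial step that locates a suitable ``population-top'' index to compare against.

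First I would record the two facts that drive the argument. Write $\theta_{(k)}$ for the $k$-th largest coordinate of $\theta$ and let $T^* = \{j : \theta_j \geq \theta_{(k)}\}$; then $|T^*| \geq k$, and every $j \in T^*$ has $\Delta_j^{(k)} = \max\{\theta_{(k)} - \theta_j, 0\} = 0$, hence is active. Now suppose, for contradiction, that some winner $\ihat_{(\ell)}$ with $\ell \in [k]$ is inactive. Then $\Delta^{(k)}_{\ihat_{(\ell)}} > 2 r_\alpha > 0$, which forces $\theta_{\ihat_{(\ell)}} < \theta_{(k)}$, so $\ihat_{(\ell)} \notin T^*$; moreover its box in $A_\alpha$ has radius $\max\{r_\alpha, \Delta^{(k)}_{\ihat_{(\ell)}}/2\} = \Delta^{(k)}_{\ihat_{(\ell)}}/2$, so membership $X \in A_\alpha$ gives
\[
X_{\ihat_{(\ell)}} \;\leq\; \theta_{\ihat_{(\ell)}} + \frac{\Delta^{(k)}_{\ihat_{(\ell)}}}{2} \;=\; \frac{\theta_{\ihat_{(\ell)}} + \theta_{(k)}}{2}.
\]

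Next comes the combinatorial step, which I expect to be the only real novelty relative to the $k=1$ proof. Since $|T^*| \geq k$ but $\ihat_{(\ell)} \in \{\ihat_{(1)},\dots,\ihat_{(k)}\} \setminus T^*$, the set $T^*$ cannot be contained in $\{\ihat_{(1)},\dots,\ihat_{(k)}\}$ (if it were, then $|T^*| = k$ and $T^* = \{\ihat_{(1)},\dots,\ihat_{(k)}\}$, contradicting $\ihat_{(\ell)} \notin T^*$). Hence there exists $i^* \in T^*$ that is \emph{not} among the empirical top $k$, so $X_{i^*} \leq X_{\ihat_{(k)}} \leq X_{\ihat_{(\ell)}}$; and since $i^* \in T^*$ is active, $X \in A_\alpha$ also gives $X_{i^*} \geq \theta_{i^*} - r_\alpha \geq \theta_{(k)} - r_\alpha$.

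Finally I would chain the inequalities exactly as in the $k=1$ argument:
\[
0 \;\leq\; X_{\ihat_{(\ell)}} - X_{i^*} \;\leq\; \frac{\theta_{\ihat_{(\ell)}} + \theta_{(k)}}{2} - \theta_{(k)} + r_\alpha \;=\; -\frac{\Delta^{(k)}_{\ihat_{(\ell)}}}{2} + r_\alpha,
\]
so $\Delta^{(k)}_{\ihat_{(\ell)}} \leq 2 r_\alpha$, contradicting the assumed inactivity of $\ihat_{(\ell)}$. As $\ell \in [k]$ was arbitrary, all of $\ihat_{(1)},\dots,\ihat_{(k)}$ are active. The one place needing care is the combinatorial step: making the counting tight enough to guarantee a population-top index outside the empirical top $k$, and handling ties in $X$ and in $\theta$ by fixing a consistent choice of the ``top $k$'' index sets throughout.
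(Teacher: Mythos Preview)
Your proposal is correct and follows essentially the same approach as the paper's proof. The only cosmetic difference is in the pigeonhole step: the paper observes directly that, since $\ihat_{(\ell)}$ is among the empirical top $k$, at least one of the $k$ population-top indices $i^*_{(1)},\dots,i^*_{(k)}$ must satisfy $X_{i^*_{(j')}} \leq X_{\ihat_{(\ell)}}$, whereas you reach the same conclusion by arguing that $T^*$ cannot be contained in $\{\ihat_{(1)},\dots,\ihat_{(k)}\}$; the remaining chain of inequalities is identical.
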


\begin{proof}
Let $i_{(1)}^*,\dots,i_{(k)}^*$ denote the top $k$ coordinates of $\theta$, and note that all of them must be active because $\Delta_{i_{(j)}^*}^{(k)}=0$ for all $j\in[k]$. Now suppose that, in fact, $\ihat_{(j)}\in \I_\alpha^c$ for some $j\in[k]$. Then, we know
\[\X_{\ihat_{(j)}} \leq \theta_{\ihat_{(j)}} + \max\{r_\alpha, \Delta^{(k)}_{\ihat_{(j)}}/2\} = \theta_{\ihat_{(j)}} + \Delta^{(k)}_{\ihat_{(j)}}/2.\]
At the same time, because $i_{(j')}^*$ are active for all $j'\in[k]$, we know
\[\X_{i_{(j')}^*} \geq \theta_{i_{(j')}^*} - r_\alpha.\]
Putting the two inequalities together, we get
\[\X_{\ihat_{(j)}} - \X_{i^*_{(j')}} \leq \theta_{\ihat_{(j)}} + \Delta_{\ihat_{(j)}}^{(k)}/2 - \theta_{i_{(j')}^*} + r_\alpha \leq -\Delta_{\ihat_{(j)}}^{(k)}/2 + r_\alpha\]
for all $j'\in[k]$. But $\X_{\ihat_{(j)}} - \X_{i^*_{(j')}}\geq 0$ for at least one $j'\in[k]$ by the definition of $\ihat_{(j)}$, implying $r_\alpha \geq \Delta_{\ihat_{(j)}}^{(k)}/2$ and thus contradicting the claim that $\ihat_{(j)}\in\I_\alpha^c$.
\end{proof}

\begin{lemma}[Only inactive gaps matter]
\label{lemma:r_only_inactive-topk}
Fix $\theta \in \R^m$.
\begin{itemize}
\item[(i)] If $\theta'$ is any vector such that $\Delta_j^{(k)}(\theta') = \Delta_j^{(k)}(\theta)$ for $j\in \I_\alpha^c(\theta)$ and $\Delta_j^{(k)}(\theta') \leq \Delta^{(k)}_j(\theta)$ for $j\in \I_\alpha(\theta)$, then $r_\alpha(\theta) = r_\alpha(\theta')$.
\item[(ii)] If $\theta'$ is any vector such that $\Delta_j^{(k)}(\theta') \geq \Delta_j^{(k)}(\theta)$ for $j\in \I_\alpha^c(\theta)$, then $r_\alpha(\theta') \leq r_\alpha(\theta)$.
\end{itemize}
\end{lemma}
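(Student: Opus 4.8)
The plan is to transcribe the proof of Lemma~\ref{lemma:r_only_inactive} essentially verbatim, since the only change in the top-$k$ setting is that the suboptimality gaps $\Delta_j$ are replaced throughout by $\Delta_j^{(k)} = \max\{\theta_{(k)}-\theta_j,0\}$, while the active radius still has the form $r_\alpha(\theta) = \min\{r: \mathbf{S}(\max\{r,\Delta^{(k)}(\theta)/2\})\leq\alpha\}$ with $\mathbf{S}$ continuous and coordinate-wise decreasing. I will repeatedly use the elementary fact that $r_\alpha$ is non-increasing under a coordinate-wise increase of the gap vector: if $\Delta_j^{(k)}(\eta)\leq\Delta_j^{(k)}(\eta')$ for all $j$, then $r_\alpha(\eta)\geq r_\alpha(\eta')$. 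This is immediate from the displayed formula for $r_\alpha$ together with monotonicity of $\mathbf{S}$.

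For part (i): since $\Delta_j^{(k)}(\theta')\leq\Delta_j^{(k)}(\theta)$ for every $j$, the monotonicity fact gives $r_\alpha(\theta')\geq r_\alpha(\theta)$. For the reverse inequality, evaluate $\mathbf{S}$ at the radius $r=r_\alpha(\theta)$: for an inactive index $j\in\I_\alpha^c(\theta)$ the gaps agree, so the $j$th entry of the radius vector is unchanged; for an active index $j\in\I_\alpha(\theta)$ we have $\Delta_j^{(k)}(\theta')\leq\Delta_j^{(k)}(\theta)\leq 2r_\alpha(\theta)$, hence $\max\{r_\alpha(\theta),\Delta_j^{(k)}(\theta')/2\}=r_\alpha(\theta)=\max\{r_\alpha(\theta),\Delta_j^{(k)}(\theta)/2\}$. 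Thus $\mathbf{S}(\max\{r_\alpha(\theta),\Delta^{(k)}(\theta')/2\})=\mathbf{S}(\max\{r_\alpha(\theta),\Delta^{(k)}(\theta)/2\})\leq\alpha$, so $r_\alpha(\theta)$ already satisfies the defining inequality for $r_\alpha(\theta')$, giving $r_\alpha(\theta')\leq r_\alpha(\theta)$. Combining the two bounds yields $r_\alpha(\theta')=r_\alpha(\theta)$.

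For part (ii): introduce an auxiliary vector $\theta''$ with $\Delta_j^{(k)}(\theta'')=\Delta_j^{(k)}(\theta)$ on $\I_\alpha^c(\theta)$ and $\Delta_j^{(k)}(\theta'')=0$ on $\I_\alpha(\theta)$; such a vector exists, e.g.\ $\theta''_j=0$ for active $j$ and $\theta''_j=-\Delta_j^{(k)}(\theta)$ for inactive $j$, noting that $|\I_\alpha(\theta)|\geq k$ because the top-$k$ coordinates of $\theta$ all have zero gap, so the $k$th largest entry of $\theta''$ is $0$ and the gaps come out as prescribed. By part (i), $r_\alpha(\theta'')=r_\alpha(\theta)$. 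Now $\Delta_j^{(k)}(\theta'')\leq\Delta_j^{(k)}(\theta')$ coordinate-wise: on $\I_\alpha^c(\theta)$ this is the hypothesis $\Delta_j^{(k)}(\theta)\leq\Delta_j^{(k)}(\theta')$, and on $\I_\alpha(\theta)$ it is trivial since the left side is $0$. The monotonicity fact then gives $r_\alpha(\theta')\leq r_\alpha(\theta'')=r_\alpha(\theta)$, as claimed.

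The only point needing a moment's care is the existence of $\theta''$ in part (ii): one must be able to realize the prescribed collection of gaps, and a gap vector of the form $\Delta^{(k)}$ is realizable precisely when at least $k$ of its entries equal $0$. This holds automatically here because the top-$k$ coordinates of $\theta$ lie in $\I_\alpha(\theta)$, so zeroing out the gaps on $\I_\alpha(\theta)$ leaves at least $k$ zeros. Everything else is a routine transcription of the $k=1$ argument, so I do not anticipate a genuine obstacle.
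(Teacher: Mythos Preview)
Your proposal is correct and follows exactly the approach the paper takes: the paper simply states that the proof is identical to that of Lemma~\ref{lemma:r_only_inactive} once $\Delta_j$ is replaced by $\Delta_j^{(k)}$, and omits the details. Your added verification that the auxiliary vector $\theta''$ in part~(ii) is realizable (because $|\I_\alpha(\theta)|\geq k$, so at least $k$ gaps can be set to zero) is a useful detail that the paper leaves implicit.
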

\noindent The proof of Lemma \ref{lemma:r_only_inactive-topk} is identical to the proof of Lemma \ref{lemma:r_only_inactive} once we replace $\Delta_j$ with $\Delta_j^{(k)}$, and we therefore omit it.

\begin{lemma}[Increasing inactive coordinates cannot increase all their radii]
\label{lemma:increasing_coord-topk}
Fix a vector $\theta\in\R^m$ and an index set $\I\subseteq \I_\alpha^c(\theta)$. Let $\theta'$ be any vector such that $\Delta^{(k)}_j(\theta) \leq \Delta^{(k)}_j(\theta')$ for all $j\in\I_\alpha^c(\theta) \setminus \I$ and  $\Delta^{(k)}_j(\theta') < \Delta^{(k)}_j(\theta)$ for $j\in\I$. Let 
$i_0 = \argmax_{j\in\I} \Delta^{(k)}_j(\theta).$
Then, $\Delta^{(k)}_{i_0}(\theta)/2 \geq \max\{r_\alpha(\theta'), \Delta^{(k)}_{i_0}(\theta')/2\}$.
\end{lemma}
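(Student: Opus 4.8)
The plan is to reuse, essentially word for word, the proof of Lemma~\ref{lemma:increasing_coord}, with the generalized gaps $\Delta_j^{(k)}$ in place of $\Delta_j$ and the generalized active set $\I_\alpha = \{j : \Delta_j^{(k)} \le 2 r_\alpha\}$ in place of its single-winner counterpart. This substitution is legitimate because that earlier proof used no property of the gaps beyond three facts, all of which hold verbatim for the top-$k$ definitions: $r_\alpha(\theta)$ is the \emph{minimal} $r$ with $\mathbf{S}(r \vee \Delta^{(k)}(\theta)/2) \le \alpha$ by~\eqref{eq:active-radius}; $\mathbf{S}$ is coordinate-wise decreasing; and $j \in \I_\alpha^c(\theta)$ precisely when $\Delta_j^{(k)}(\theta) > 2 r_\alpha(\theta)$. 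In particular, the argument never used that exactly one coordinate has zero gap, so the fact that $\Delta^{(k)}$ vanishes on $k$ coordinates causes no difficulty.

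Concretely, I would argue by contradiction. Assume $\Delta_{i_0}^{(k)}(\theta)/2 < \max\{r_\alpha(\theta'),\, \Delta_{i_0}^{(k)}(\theta')/2\}$. Since $\Delta_{i_0}^{(k)}(\theta') < \Delta_{i_0}^{(k)}(\theta)$ by hypothesis, the maximum must be attained by $r_\alpha(\theta')$, so $r_\alpha(\theta') > \Delta_{i_0}^{(k)}(\theta)/2 = \max_{j\in\I}\Delta_j^{(k)}(\theta)/2 > r_\alpha(\theta)$, where the last step uses $i_0 \in \I \subseteq \I_\alpha^c(\theta)$. I would then compare the radius vectors of $A_\alpha(\theta')$ and $A_\alpha(\theta)$ coordinate by coordinate, partitioning $[m]$ into $\I_\alpha(\theta)$, $\I$, and $\I_\alpha^c(\theta) \setminus \I$ (the first two disjoint since $\I \subseteq \I_\alpha^c(\theta)$): on $\I_\alpha(\theta)$ the radius of $A_\alpha(\theta)$ equals $r_\alpha(\theta) < r_\alpha(\theta')$; on $\I$ it equals $\Delta_j^{(k)}(\theta)/2 \le \Delta_{i_0}^{(k)}(\theta)/2 < r_\alpha(\theta')$; and on $\I_\alpha^c(\theta)\setminus\I$ it equals $\Delta_j^{(k)}(\theta)/2 \le \Delta_j^{(k)}(\theta')/2$ by hypothesis. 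In every coordinate the radius of $A_\alpha(\theta')$ dominates that of $A_\alpha(\theta)$, and at $i_0$ the domination is strict. Since $\mathbf{S}$ is coordinate-wise decreasing and the radius vector of $A_\alpha(\theta)$ already satisfies $\mathbf{S}(\cdot) \le \alpha$, for any sufficiently small $\epsilon > 0$ the vector obtained from the radii of $A_\alpha(\theta')$ by replacing $r_\alpha(\theta')$ with $r_\alpha(\theta') - \epsilon$ still dominates that of $A_\alpha(\theta)$ coordinate-wise, hence still satisfies $\mathbf{S}(\cdot) \le \alpha$, contradicting the minimality of $r_\alpha(\theta')$ in~\eqref{eq:active-radius}. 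Therefore the assumed strict inequality is impossible, which is the claim.

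I do not anticipate any genuine obstacle; the only care needed is the three-way bookkeeping above and the remark that strictness is required at only one coordinate, $i_0$, which is well-defined as long as $\I \neq \emptyset$ (the statement being vacuous otherwise). Since the paper already handles Lemma~\ref{lemma:r_only_inactive-topk} by precisely this ``$\Delta_j \mapsto \Delta_j^{(k)}$'' substitution, it would be consistent simply to state that the proof of this lemma is identical to that of Lemma~\ref{lemma:increasing_coord} after the same substitution, and omit the details.
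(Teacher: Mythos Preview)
Your proposal is correct and matches the paper's approach exactly: the paper simply remarks that the proof is identical to that of Lemma~\ref{lemma:increasing_coord} after replacing $\Delta_j$ with $\Delta_j^{(k)}$ and omits the details. Your explicit write-up faithfully reproduces that argument (and your use of $\mathbf{S}$ rather than $P$ in the final contradiction is in fact slightly cleaner given how $r_\alpha$ is defined in~\eqref{eq:active-radius}).
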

\noindent Once we replace $\Delta_j$ with $\Delta_j^{(k)}$, Lemma \ref{lemma:increasing_coord-topk} also follows the exact same proof as its single-winner counterpart, Lemma \ref{lemma:increasing_coord}. Hence we omit the proof.

\subsection{Proof of Lemma \ref{lemma:main-lemma-topk}}

\mainlemmatopk*

\begin{proof}
$(\Leftarrow)$ By the definition of the projection region $\widehat C_{\ihat_{(1)},\dots,\ihat_{(k)}}^\alpha$, it suffices to show that $\X \in A_\alpha(\theta^{\tvec})$, since $\theta^{\tvec}_{\ihat_{(j)}} = t_{\ihat_{(j)}}$ for all $j\in[k]$. In other words, we show that for all $j\in[m]$ we have $|\X_j-\theta_j^{\tvec}| \leq \max\{r_\alpha(\theta^{\tvec}), \Delta_j^{(k)}(\theta^{\tvec})/2\}$. This is true for coordinates $\{\ihat_{(1)},\dots,\ihat_{(k)}\}$ by assumption:
\[
|\X_{\ihat_{(j)}} - \theta_{\ihat_{(j)}}^{\tvec}| = |\X_{\ihat_{(j)}} - t_{\ihat_{(j)}}| \leq r_\alpha(\theta^{\tvec}),
\]
for all $j\in[k]$. For all other coordinates $j\not\in\{{\ihat_{(1)}},\dots,{\ihat_{(k)}}\}$, we split the analysis depending on whether $\X_j \leq\tvec_{(k)}$ or not. If $\X_j \leq\tvec_{(k)}$, we have
\[
|\X_j - \theta_j^{\tvec}| = \frac{t_{(k)} - \X_j}{3} = \frac{t_{(k)} - \theta_j^{\tvec}}{2} = \frac{\Delta_j^{(k)}(\theta^{\tvec})}{2}.
\]
If $\X_j >\tvec_{(k)}$, noting that we also know $X_j \leq \min_{j\in[k]} X_{\ihat_{(j)}}$ and letting $k_0 = \argmin_{j\in[k]} t_{\ihat_{(j)}}$, then we have 
$$|\X_j -  \theta_j^{\tvec}| = \X_j -\tvec_{(k)} \leq \X_{\ihat_{k_0}} -\tvec_{(k)} \leq r_\alpha(\theta^{\tvec}),$$
where the last inequality is true by assumption. Therefore, we have shown $|\X_j-\theta_j^{\tvec}| \leq \max\{r_\alpha(\theta^{\tvec}), \Delta_j^{(k)}( \theta^{\tvec})/2\}$ for all $j\in[m]$, and thus $X\in A_\alpha( \theta^{\tvec})$.\\

\noindent $(\Rightarrow)$ The fact that $\tvec \in \widehat C_{\ihat_{(1)},\dots,\ihat_{(k)}}^\alpha$ means that  there exists some $\theta\in\R^m$ with $\theta_{\ihat_{(j)}}=t_{\ihat_{(j)}}$ for all $j\in[k]$, for which $\X \in A_\alpha(\theta)$. We use Lemma \ref{lemma:winners_active-topk}, which says that $\ihat_{(1)},\dots,\ihat_{(k)}$ must be active indices for any $\theta$ such that $\X\in A_\alpha(\theta)$. Therefore, we know $r_\alpha(\theta) \geq \Delta^{(k)}_{\ihat_{(j)}}(\theta)/2$ for all $j\in[k]$, implying
\begin{equation}
\label{eq:rtheta2}
|\X_{\ihat_{(j)}} - \theta_{\ihat_{(j)}}| = |X_{\ihat_{(j)}} - t_{\ihat_{(j)}}| \leq \max\{r_\alpha(\theta), \Delta^{(k)}_{\ihat_{(j)}}(\theta)/2\} = r_\alpha(\theta).
\end{equation}
If we show $r_\alpha(\theta^{\tvec}) \geq r_\alpha(\theta)$ the proof is complete, so now we focus on showing this fact.

Lemma \ref{lemma:r_only_inactive-topk} (ii) gives a key characterization of the active radius: $r_\alpha(\theta)$ is non-increasing in the \emph{inactive} gaps. Therefore, if we prove that $\theta^{\tvec}$ has the smallest inactive gaps of all $\theta$ such that $X\in A_\alpha(\theta)$, then the result follows. Formally, if $\Delta_j^{(k)}(\theta) \geq \Delta_j^{(k)}( \theta^{\tvec})$ holds for all $j\in \I^c_\alpha( \theta^{\tvec})$, then $r_\alpha( \theta^{\tvec}) \geq r_\alpha(\theta)$. 

Suppose this is not true: for some $j\in \I^c_\alpha( \theta^{\tvec})$, $\Delta_j^{(k)}(\theta) < \Delta_j^{(k)}( \theta^{\tvec})$. First note that this means that $\theta_j >  \theta_j^{\tvec}$, because the $k$-th largest coordinate of $\theta$, $\theta_{(k)}$, is at least as large as the $k$-th largest coordinate of $ \theta^{\tvec}$: $\theta_{(k)} \geq \min_{j\in[k]} \theta_{\ihat_{(j)}} = \min_{j\in[k]} \theta_{\ihat_{(j)}}^{\tvec}  = \tvec_{(k)}$. Note also that it must be that $X_j\leq \tvec_{(k)}$ and thus $ \theta_j^{\tvec}$ must be equal to $\frac{2}{3} \X_j + \frac 1 3 \tvec_{(k)}$; if not,  $\X_j > \tvec_{(k)}$ would imply $\Delta^{(k)}_j( \theta^{\tvec})=  \max\{\tvec_{(k)} -  \theta_j^{\tvec}, 0\} = 0$, contradicting the assumption that $j$ is inactive, $j\in \I^c_\alpha( \theta^{\tvec})$. Therefore, 
\begin{equation*}
\label{eq:inactive_gap-topk}
\theta_j - \X_j >  \theta_j^{\tvec} - \X_j = \frac 1 3 (\tvec_{(k)}-\X_j)= \frac 1 2 (\tvec_{(k)}- \theta_j^{\tvec}) = \frac{\Delta_j^{(k)}( \theta^{\tvec})}{2}.
\end{equation*}
Lemma \ref{lemma:increasing_coord-topk} shows that there must be \emph{at least one} $j$ for which $\Delta_j^{(k)}(\theta) < \Delta_j^{(k)}(\theta^{\tvec})$, such that $\Delta_j^{(k)}( \theta^{\tvec})/2 \geq  \max\{r_\alpha(\theta), \Delta_j^{(k)}(\theta)/2\}$. As a result, there must exist a coordinate $j$ such that $\theta_j - \X_j > \max\{r_\alpha(\theta), \Delta_j^{(k)}(\theta)/2\}$, which contradicts the claim that $X\in A_\alpha(\theta)$. Therefore, we must in fact have $\Delta_j^{(k)}(\theta) \geq \Delta_j^{(k)}( \theta^{\tvec})$ for all $j\in \I^c_\alpha( \theta^{\tvec})$.

Finally, $r_\alpha( \theta^{\tvec}) \geq r_\alpha(\theta)$ follows by invoking Lemma \ref{lemma:r_only_inactive-topk} (ii). Going back to Eq. \eqref{eq:rtheta2} and applying this inequality completes the proof.
\end{proof}

\subsection{Proof of Lemma \ref{lemma:topk-1d}}

\topkoned*

\begin{proof}
Lemma \ref{lemma:main-lemma-topk} implies $\max_{j\in[k]} |r_{\ihat_{(j)}}| \leq r_\alpha(\theta^{\tvec})$, where $\tvec = (t_{\ihat_{(1)}},\dots,t_{\ihat_{(k)}}) = (X_{\ihat_{(1)}}-r_{\ihat_{(1)}},\dots,X_{\ihat_{(k)}}-~r_{\ihat_{(k)}}).$
Let 
\[\tilde{\tvec} = (\tilde t_{\ihat_{(1)}},\dots,\tilde t_{\ihat_{(k)}}) = (X_{\ihat_{(1)}}-|r_{\ihat_{(j_0)}}|,\dots,X_{\ihat_{(k)}}-|r_{\ihat_{(j_0)}}|), \text{ where } j_0 = \argmax_{j\in[k]} |r_{\ihat_{(j)}}|.\]
Clearly, $r_\alpha(\theta^{\tilde{\tvec}}) \geq r_\alpha(\theta^{\tvec})$ since the gaps induced by $\tilde{\tvec}$ can only be smaller. Therefore, by Lemma \ref{lemma:main-lemma-topk} we have
\[|r_{\ihat_{(j_0)}}| = \max_{j\in[k]}|X_{\ihat_{(j)}} - \tilde t_{\ihat_{(j)}} | = \max_{j\in[k]}|X_{\ihat_{(j)}} -  t_{\ihat_{(j)}} | \leq r_\alpha(\theta^{\tilde{\tvec}}) = r_\alpha(\tilde \theta^{|r_{\ihat_{(j_0)}}|}),\]
where the last step simply uses the fact that $\theta^{\tilde\tvec} = \tilde \theta^{|r_{\ihat_{(j_0)}}|}$.
\end{proof}

\subsection{Proof of Lemma \ref{lemma:endpoint_equation_topk}}

\endpointeqtopk*

\begin{proof}
Recall the definition of $\tilde \theta^r$:
\begin{equation*}
\tilde\theta_j^r = \begin{cases} \X_j-r & \text{if } j \in \{\ihat_{(1)},\dots,\ihat_{(k)}\}; \\ \min\left\{\frac{2}{3}\X_j + \frac{1}{3}(\X_{\ihat_{(k)}} - r), \X_{\ihat_{(k)}} - r \right\} & \text{if } j \not\in \{\ihat_{(1)},\dots,\ihat_{(k)}\}.\end{cases}
\end{equation*}
For any $r$, we know that by definition $r_\alpha(\tilde \theta^r)$ satisfies
\[\alpha = \sum_{j=1}^m S\left(\max\left\{r_\alpha(\tilde \theta^r), \frac{\Delta^{(k)}_j(\tilde \theta^r)}{2}\right\}\right).\]
The gaps are equal to $\Delta^{(k)}_j(\tilde \theta^r) = \max\{0, \frac{2}{3}(X_{\ihat_{(k)}} - r - X_j)\} = \max\{0, \frac{2}{3}( \hDelta_j^{(k)} - r)\}$. Since $r_{\max}$ is the maximum $r$ such that $r\leq r_\alpha(\tilde \theta^r)$, by the continuity of the noise we know that it must satisfy the boundary condition $r_{\max} = r_\alpha(\tilde \theta^{r_{\max}})$. This, finally, implies the claim of the lemma:
\[\alpha = \sum_{j=1}^m S\left(\max\left\{r_{\max}, \frac{\hDelta_j^{(k)} - r_{\max}}{3}\right\}\right).\]
\end{proof}

\section{Proofs from Section \ref{sec:population_winner}}

\subsection{Proof of Proposition \ref{prop:population_winner}}

\populationwinner*

\begin{proof}
$(t\in \widehat C^\alpha_{\ihat} \Rightarrow t\in \widehat C^\alpha_{*})$ Consider the ``worst-case'' vector $\theta^t$ from Eq.~\eqref{eq:worst-case-theta}. Lemma \ref{lemma:mainlemma} implicitly shows that, if $t\in \widehat C^\alpha_{\ihat}$, then $X\in A_\alpha(\theta^t)$. This is true because $|X_{\ihat} - \theta^t_{\ihat}| \leq r_\alpha(\theta^t)$ by the main claim of Lemma \ref{lemma:mainlemma}, and for all other indices $j$ we have $|X_j - \theta^t_j| \leq \max\{\frac{\Delta_j(\theta^t)}{2}, r_\alpha(\theta^t)\}$, as shown in Eq.~\eqref{eq:thetat_eq1} and Eq.~\eqref{eq:thetat_eq2}.
Since $t$ is the population winner at $\theta^t$, then that immediately gives $t\in \widehat C^\alpha_{*}$, as desired.\\

\noindent $(t\in \widehat C^\alpha_{*} \Rightarrow t\in \widehat C^\alpha_{\ihat})$
Take any $\theta$ such that $X\in A_\alpha(\theta)$. Let $\thetaswap$ be equal to $\theta$, except that we swap the values at $i^*$ and $\ihat$: $\thetaswap_{\ihat} = \theta_{i^*}$ and $\thetaswap_{i^*} = \theta_{\ihat}$. Note that throughout the proof $i^*$ denotes the population winner at $\theta$, not $\thetaswap$.
We argue that $X\in A_\alpha(\thetaswap)$. Since $\Delta_j(\theta) = \Delta_j(\thetaswap)$ for $j\not\in\{\ihat, i^*\}$ and
$r_\alpha(\thetaswap)=r_\alpha(\theta)$ by the symmetry of $\mathbf{S}$, it suffices to focus on the winning indices and show
\begin{equation}
\label{eq:swap}|X_{i} - \thetaswap_{i}| \leq r_\alpha(\thetaswap) \text{ for } i\in\{\ihat, i^*\}.
\end{equation}
At coordinate $\ihat$ we have:
\begin{align*}
|X_{\ihat} - \thetaswap_{\ihat}| &= \max\{X_{\ihat} - \thetaswap_{\ihat}, \thetaswap_{\ihat} - X_{\ihat}\}\\
&= \max\{X_{\ihat} - \theta_{i^*}, \theta_{i^*} - X_{\ihat}\} 
\leq \max\{X_{\ihat} - \theta_{\ihat}, \theta_{i^*} - X_{i^*}\} \leq r_\alpha(\theta),
\end{align*}
since $X\in A_\alpha(\theta)$. Similarly, at coordinate $i^*$ we have:
\begin{align*}
|X_{i^*} - \thetaswap_{i^*}| &= \max\{X_{i^*} - \thetaswap_{i^*}, \thetaswap_{i^*} - X_{i^*}\}\\
&= \max\{X_{i^*} - \theta_{\ihat}, \theta_{\ihat} - X_{i^*}\} 
\leq \max\{X_{\ihat} - \theta_{\ihat}, \theta_{i^*} - X_{i^*}\} \leq r_\alpha(\theta).
\end{align*}
This shows Eq.~\eqref{eq:swap}, and thus that $X\in A_\alpha(\thetaswap)$.

 If $t\in \widehat C^\alpha_{*}$, then that means that there exists a $\theta$ such that $\theta_{i^*}=t$ and $X\in A_\alpha(\theta)$. Since we have shown $X\in A_\alpha(\thetaswap)$ and $\thetaswap_{\ihat} = t$, then that means that $t \in \widehat C^\alpha_{\ihat}$.
\end{proof}

\subsection{Proof of Proposition \ref{prop:inference-on-index}}

\inferenceindex*

\begin{proof}
Using the fact that $t_l = X_{\ihat} - r_\alpha(\theta^{t_l})$, we know 
\[\{i\in[m] : X_i \geq X_{\ihat} - 2 r_\alpha(\theta^{t_l})\} = \{i\in[m] : X_i + r_\alpha(\theta^{t_l}) \geq t_l\}.\]
In the rest of the proof, we show that $\widehat \I^\alpha$ is equivalent to the set on the right-hand side.\\

\noindent $(i \in \widehat \I^\alpha \Rightarrow X_i + r_\alpha(\theta^{t_l}) \geq t_l)$ If $i \in \widehat \I^\alpha$, then there exists a vector $\theta$ such that $X\in A_\alpha(\theta)$ and $\theta_i \geq \theta_{\ihat}$. By Lemma \ref{lemma:mainlemma}, we know that $\theta_{\ihat} \geq t_l$. By the fact that $\theta_i$ is active, since $\Delta_i(\theta)=0$, we know $X_i + r_\alpha(\theta) \geq \theta_i$. Putting these facts together gives:
$$X_i + r_\alpha(\theta) \geq \theta_i \geq \theta_{\ihat} \geq t_l.$$
Since $r_\alpha(\theta^{t_l})$ is the maximum active radius across all $\theta$ with $X\in A_\alpha(\theta)$, we have $r_\alpha(\theta^{t_l}) \geq r_\alpha(\theta)$ and thus we have proved the desired claim.\\

\noindent $(i \in \widehat \I^\alpha \Leftarrow X_i + r_\alpha(\theta^{t_l}) \geq t_l)$ First, notice that $X_i + r_\alpha(\theta^{t_l}) \geq t_l$ implies that $i$ must be an active index at $\theta^{t_l}$. This is true because this inequality immediately implies $r_\alpha(\theta^{t_l}) \geq \frac 1 3 (t_l - X_i) = \Delta_i(\theta^{t_l})/2$. Next, Lemma \ref{lemma:r_only_inactive} (i) implies that we can define $\theta'$ that is identical to $\theta^{t_l}$, except that $\theta'_i = t_l$, without changing the active radius: $r_\alpha(\theta^{t_l}) = r_\alpha(\theta')$. Since 
\[- r_\alpha(\theta') \leq  X_i - t_l \leq X_{\ihat} - t_l \leq r_\alpha(\theta'),\]
we have $X\in A_\alpha(\theta')$. However, since $i$ is one of the population winners at $\theta'$, we have shown $i\in \widehat \I^\alpha$.
\end{proof}

\section{Proofs from Section \ref{sec:near-winners}}

\subsection{Proof of Proposition \ref{prop:near-winners}}

\nearwinners*

\begin{proof}
Recall from Proposition \ref{prop:population_winner} that $\widehat C_{i^*}^\alpha = \widehat C_{\ihat}^\alpha$ is a confidence set for the population winner $\theta^*$.
We stratify the set $\widehat C_j^\alpha$ based on the value of the population winner:
\begin{align*}
\widehat C_j^\alpha &= \left\{t: \theta \in \widehat C^\alpha, \theta_j = t \right\} = \cup_{s\in \widehat C_{i^*}^\alpha} \left\{t: \theta \in \widehat C^\alpha, \theta_j = t , \theta^* = s\right\}.
\end{align*}
We next further stratify based on whether coordinate $j$ is active or not at $\theta$:
\begin{align*}
\widehat C_j^\alpha &= \left(\cup_{s\in \widehat C_{i^*}^\alpha} \left\{t: \theta \in \widehat C^\alpha, \theta_j = t , \theta^* = s, j\in \I_\alpha(\theta)\right\} \right) \cup \left(\cup_{s\in \widehat C_{i^*}^\alpha} \left\{t: \theta \in \widehat C^\alpha, \theta_j = t , \theta^* = s, j\not \in \I_\alpha(\theta)\right\} \right)\\
&:= \widehat C_j^{\I,\alpha} \cup \widehat C_j^{\I^c,\alpha}.
\end{align*}
For the active part, we have
\begin{align*}
\widehat C_j^{\I,\alpha} &= \cup_{s\in \widehat C_{i^*}^\alpha} \left\{t: \theta \in \widehat C^\alpha, \theta_j = t , \theta^* = s, s \geq t \geq s - 2r_\alpha(\theta), |t - X_j|\leq r_\alpha(\theta)\right\}\\
&= \cup_{s\in \widehat C_{i^*}^\alpha} \left\{t: \theta \in \widehat C^\alpha , \theta^* = s, s \geq t \geq s - 2r_\alpha(\theta), |t - X_j|\leq r_\alpha(\theta)\right\}\\
&= \cup_{s\in \widehat C_{i^*}^\alpha} \left\{t: s \geq t \geq s - 2r_\alpha(\theta^s), |t - X_j|\leq r_\alpha(\theta^s)\right\}\\
&= \cup_{s\in \widehat C_{i^*}^\alpha} \left\{t: \min\{s, X_j + r_\alpha(\theta^s)\} \geq t \geq \max\{s - 2r_\alpha(\theta^s), X_j-r_\alpha(\theta^s)\}\right\},
\end{align*}
where the third step uses the ``worst-case'' vector \eqref{eq:worst-case-theta}, since among all vectors with $\theta^* = s$ this is the one that maximizes the active radius. Taking the worst case over $s\in \widehat C_{i^*}^\alpha \subseteq [X_{\ihat} - r_l, X_{\ihat} + r_u]$, we get
\begin{equation}
\label{eq:active_part}
\widehat C_j^{\I,\alpha} \subseteq \left[X_{\ihat} - 3r_l, X_{\ihat} + r_u\right]\cap \left[X_j - r_l, X_j + r_l\right].
\end{equation}
For the inactive part, we have
\begin{align*}
\widehat C_j^{\I^c,\alpha} &= \cup_{s\in \widehat C_{i^*}^\alpha} \left\{t: \theta \in \widehat C^\alpha, \theta_j = t , \theta^* = s, t \leq s - 2r_\alpha(\theta), |X_j - t|\leq \frac{s-t}{2}\right\}\\
&\subseteq \cup_{s\in \widehat C_{i^*}^\alpha} \left\{t: t \leq s, |X_j - t|\leq \frac{s-t}{2}\right\}\\
&= \cup_{s\in \widehat C_{i^*}^\alpha} \left\{t: t \leq s,~ 2X_j - s \leq t \leq X_j + \frac{s-X_j}{3}\right\}.
\end{align*}
Again taking the worst case over $s\in \widehat C_{i^*}^\alpha$, attained for $s = X_{\ihat} + r_u$, we have
\begin{equation}
\label{eq:inactive_part}
\widehat C_j^{\I^c,\alpha} \subseteq \left[X_j - \hDelta_j - r_u, X_j + \frac{1}{3}(\hDelta_j + r_u)\right].
\end{equation}
Putting together Eq.~\eqref{eq:active_part} and Eq.~\eqref{eq:inactive_part} completes the proof.
\end{proof}

\section{Proofs from Section \ref{sec:variance-adaptive}}

\subsection{Technical lemmas}
\label{sec:lemmas-sigmas}

Below we state the technical lemmas used in the proof of Lemma \ref{lemma:main-lemma-sigmas}.

\begin{lemma}[Large observations must be active (variance-adaptive)]
\label{lemma:winner_active-sigmas}
Suppose $X \in A_\alpha$, and let $\ihat = \argmax_{j\in[m]} \X_j$ and $\theta^* = \max_{j\in[m]} \theta_j$. Then,
\begin{itemize}
\item[(i)] $\ihat \in \I_\alpha$;
\item[(ii)] $j \in \I_\alpha$ for all $j\in\{k:X_k \geq \theta^*\}$.
\end{itemize}
 
\end{lemma}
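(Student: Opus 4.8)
The plan is to mirror the proof of Lemma~\ref{lemma:winner_active}, inserting the variance weights wherever the basic argument invokes a radius of $r_\alpha$. The common starting point for both parts is the observation that the population winner $i^*$ is always active: $\Delta_{i^*} = 0$ forces $\Delta_{i^*}/(\sigma_{i^*}+\sigma_{i^*}) = 0 \le r_\alpha$, and therefore $X \in A_\alpha$ implies $X_{i^*} \ge \theta_{i^*} - r_\alpha\sigma_{i^*} = \theta^* - r_\alpha\sigma_{i^*}$. Throughout I will use that the weights $\sigma_j$ are strictly positive, so that division by $\sigma_j + \sigma_{i^*}$ is harmless; in the (degenerate) case $r_\alpha = 0$ the same inequalities still apply.

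For part~(i) I would argue by contradiction: suppose $\ihat \in \I_\alpha^c$. Then the radius of $A_\alpha$ in coordinate $\ihat$ equals $\Delta_{\ihat}\,\sigma_{\ihat}/(\sigma_{\ihat}+\sigma_{i^*})$, so $X \in A_\alpha$ gives $X_{\ihat} \le \theta_{\ihat} + \Delta_{\ihat}\,\sigma_{\ihat}/(\sigma_{\ihat}+\sigma_{i^*})$. Combining this with $X_{\ihat} \ge X_{i^*} \ge \theta^* - r_\alpha\sigma_{i^*}$ and substituting $\theta_{\ihat} = \theta^* - \Delta_{\ihat}$, the term $\theta^*$ cancels and the inequality collapses to $\Delta_{\ihat}\,\sigma_{i^*}/(\sigma_{\ihat}+\sigma_{i^*}) \le r_\alpha\sigma_{i^*}$, i.e. $\Delta_{\ihat}/(\sigma_{\ihat}+\sigma_{i^*}) \le r_\alpha$. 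But that is precisely the condition $\ihat \in \I_\alpha$, contradicting the assumption.

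For part~(ii), fix $j$ with $X_j \ge \theta^*$ and suppose $j \in \I_\alpha^c$. As before, the radius in coordinate $j$ is $\Delta_j\,\sigma_j/(\sigma_j+\sigma_{i^*})$, so $X_j \le \theta_j + \Delta_j\,\sigma_j/(\sigma_j+\sigma_{i^*}) = \theta^* - \Delta_j\,\sigma_{i^*}/(\sigma_j+\sigma_{i^*})$. Since $X_j \ge \theta^*$, this forces $\Delta_j\,\sigma_{i^*}/(\sigma_j+\sigma_{i^*}) \le 0$, and because $\Delta_j \ge 0$ and the weights are positive we get $\Delta_j = 0$; hence $j$ is active, a contradiction.

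The algebra is routine, so there is no real obstacle; the one point worth flagging is that part~(i) does \emph{not} reduce to part~(ii), because the empirical winner's value $X_{\ihat}$ need not reach $\theta^*$ (all observations can be pulled below $\theta^*$ by negative noise), so the two claims genuinely need separate arguments — part~(i) uses $X_{\ihat} \ge X_{i^*}$ rather than $X_{\ihat} \ge \theta^*$. A minor secondary point is the implicit positivity of the $\sigma_j$; under the natural convention $\sigma_j > 0$ the arguments above go through verbatim, and the borderline case $\sigma_{i^*} \to 0$ is handled as a limit.
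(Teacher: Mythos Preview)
Your proof is correct and follows essentially the same contradiction argument as the paper's own proof, using the same key inequalities from the acceptance region and the activeness of $i^*$. The only cosmetic difference is in part~(ii): the paper inserts the extra slack $\theta^* > \theta^* - r_\alpha\sigma_{i^*}$ to conclude $r_\alpha > \Delta_j/(\sigma_j+\sigma_{i^*})$, whereas you use $X_j \ge \theta^*$ directly to get the (slightly stronger) conclusion $\Delta_j = 0$; both yield the same contradiction.
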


\begin{proof}
First we prove claim (i).
Let $i^* = \argmax_{j\in[m]}\theta_j$, and note that $i^*$ must be active because $\Delta_{i^*}=0$. Now suppose that, in fact, $\ihat\in \I_\alpha^c$. Then, we know
\[\X_{\ihat} \leq \theta_{\ihat} + \max\left\{r_\alpha \sigma_{\ihat}, \Delta_{\ihat} \frac{\sigma_{\ihat}}{\sigma_{\ihat} + \sigma_{i^*}} \right\} = \theta_{\ihat} + \Delta_{\ihat} \frac{\sigma_{\ihat}}{\sigma_{\ihat} + \sigma_{i^*}}.\]
At the same time, because $i^*$ is active, we know
\[\X_{i^*} \geq \theta_{i^*} - r_\alpha \sigma_{i^*}.\]
Putting the two inequalities together, we get
\[\X_{\ihat} - \X_{i^*} \leq \theta_{\ihat} + \Delta_{\ihat} \frac{\sigma_{\ihat}}{\sigma_{\ihat} + \sigma_{i^*}} - \theta_{i^*} + r_\alpha \sigma_{i^*} = -\Delta_{\ihat}\frac{\sigma_{i^*}}{\sigma_{\ihat} + \sigma_{i^*}} + r_\alpha \sigma_{i^*}.\]
But $\X_{\ihat} - \X_{i^*}\geq 0$ by the definition of $\ihat$, implying $r_\alpha \geq \Delta_{\ihat}\frac{1}{\sigma_{\ihat} + \sigma_{i^*}}$ and thus contradicting the claim $\ihat\in\I_\alpha^c$.

Next, we look at claim (ii). Suppose that, in fact, some $j\in\{k:X_k \geq \theta^*\}$ is inactive, $j\in \I_\alpha^c$. Since $X\in A_\alpha$, we know
\[\theta_{j} + \Delta_{j} \frac{\sigma_{j}}{\sigma_{j} + \sigma_{i^*}} \geq X_j \geq \theta^* > \theta^* - r_\alpha \sigma_{i^*}.\]
Rearranging, this implies $r_\alpha \sigma_{i^*} > \Delta_j\frac{\sigma_{i^*}}{\sigma_j + \sigma_{i^*}}$. But indices $j$ with $r_\alpha > \Delta_j\frac{1}{\sigma_j + \sigma_{i^*}}$ are active, which contradicts the assumption that $j\in \I_\alpha^c$. 
\end{proof}

\begin{lemma}[Only inactive gaps matter (variance-adaptive)]
\label{lemma:r_only_inactive-sigmas}
Fix $\theta \in \R^m$ and let $\theta'\in \R^m$ be a vector with $\argmax_i \theta'_i = \argmax_i \theta_i = i^*$.
\begin{itemize}
\item[(i)] If $\theta'$ has $\Delta_j(\theta') = \Delta_j(\theta)$ for $j\in \I_\alpha^c(\theta)$ and $\Delta_j(\theta') \leq \Delta_j(\theta)$ for $j\in \I_\alpha(\theta)$, then $r_\alpha(\theta) = r_\alpha(\theta')$.
\item[(ii)] If $\theta'$ has $\Delta_j(\theta') \geq \Delta_j(\theta)$ for $j\in \I_\alpha^c(\theta)$, then $r_\alpha(\theta') \leq r_\alpha(\theta)$.
\end{itemize}
\end{lemma}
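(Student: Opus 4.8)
The plan is to follow the proof of Lemma~\ref{lemma:r_only_inactive} essentially verbatim, carrying along the one structural change: the radius of $A_\alpha(\theta)$ in coordinate $j$ is now $\rho_j(\theta,r) := \max\bigl\{r\sigma_j,\ \Delta_j(\theta)\tfrac{\sigma_j}{\sigma_j+\sigma_{i^*}}\bigr\}$, which involves the scale $\sigma_{i^*}$ of the population winner. The hypothesis $\argmax_i \theta'_i = \argmax_i \theta_i = i^*$ is precisely what I need so that this scale factor is shared by $\theta$ and $\theta'$; that is the only essential use of it. With this notation $r_\alpha(\theta) = \min\{r : \mathbf{S}(\rho(\theta,r)) \leq \alpha\}$, where $\rho(\theta,r)$ is the vector of the $\rho_j(\theta,r)$, and $j\in\I_\alpha(\theta)$ holds exactly when $\rho_j(\theta,r_\alpha(\theta)) = r_\alpha(\theta)\sigma_j$. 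I will use two elementary monotonicity facts, both immediate from $\mathbf{S}$ being coordinate-wise decreasing together with $r\mapsto \rho_j(\theta,r)$ being non-decreasing and unbounded: (a) if $\theta_1,\theta_2$ share the same population-winning index $i^*$ and $\Delta_j(\theta_1)\le \Delta_j(\theta_2)$ for all $j$, then $\rho_j(\theta_1,r)\le\rho_j(\theta_2,r)$ for every $r$, hence $r_\alpha(\theta_1)\ge r_\alpha(\theta_2)$; and (b) evaluating the radii at the particular value $r = r_\alpha(\theta_2)$, if moreover $\Delta_j(\theta_1)=\Delta_j(\theta_2)$ on $\I_\alpha^c(\theta_2)$ and $\Delta_j(\theta_1)\le\Delta_j(\theta_2)$ on $\I_\alpha(\theta_2)$, then $\rho(\theta_1,r_\alpha(\theta_2)) = \rho(\theta_2,r_\alpha(\theta_2))$ coordinate-wise, because on the active set both equal $r_\alpha(\theta_2)\sigma_j$ (activity of $\theta_2$ on that set is inherited by $\theta_1$ since its gaps are no larger) and on the inactive set the gaps agree.

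For part (i), the hypotheses give $\Delta_j(\theta')\le\Delta_j(\theta)$ for all $j$, so fact (a) yields $r_\alpha(\theta')\ge r_\alpha(\theta)$. For the reverse direction I apply fact (b) with $\theta_1=\theta'$, $\theta_2=\theta$: it gives $\mathbf{S}(\rho(\theta',r_\alpha(\theta))) = \mathbf{S}(\rho(\theta,r_\alpha(\theta)))\le\alpha$, so $r_\alpha(\theta)$ meets the defining condition for $\theta'$ and hence $r_\alpha(\theta')\le r_\alpha(\theta)$. Combining the two inequalities gives $r_\alpha(\theta')=r_\alpha(\theta)$.

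For part (ii), I introduce the auxiliary vector $\theta''$ with $\Delta_j(\theta'') = \Delta_j(\theta)$ for $j\in\I_\alpha^c(\theta)$ and $\Delta_j(\theta'') = 0$ for $j\in\I_\alpha(\theta)$, chosen (as one may) so that $\argmax_i\theta''_i = i^*$; this is consistent because $i^*$ is active (its gap is $0$) while every inactive coordinate retains a strictly positive gap, since inactivity forces $\Delta_j(\theta) > r_\alpha(\theta)(\sigma_j+\sigma_{i^*}) \ge 0$. Part (i) applied to the pair $(\theta,\theta'')$ gives $r_\alpha(\theta)=r_\alpha(\theta'')$. Now $\theta''$ and $\theta'$ share the index $i^*$, and $\Delta_j(\theta'')\le\Delta_j(\theta')$ for all $j$ — equality on $\I_\alpha^c(\theta)$ by the hypothesis on $\theta'$, and $\Delta_j(\theta'')=0$ on $\I_\alpha(\theta)$ — so fact (a) gives $r_\alpha(\theta'')\ge r_\alpha(\theta')$. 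Chaining, $r_\alpha(\theta) = r_\alpha(\theta'')\ge r_\alpha(\theta')$, which is the claim.

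I do not expect any genuine obstacle: once the $\sigma_{i^*}$-invariance is pinned down by the shared-argmax hypothesis, every step is the analogue of a step in Lemma~\ref{lemma:r_only_inactive}. The only points requiring a little care are the two routine observations above—that activity is preserved under shrinking gaps (so fact (b) applies) and that inactive coordinates have strictly positive gaps (so the auxiliary $\theta''$ can be taken with population winner exactly $i^*$)—and checking that $r\mapsto\mathbf{S}(\rho(\theta,r))$ is monotone and vanishes at infinity, which is what makes $r_\alpha$ well defined and justifies the comparison in fact (a).
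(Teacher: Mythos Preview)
Your proposal is correct and follows essentially the same approach as the paper's proof: both argue (i) by first using coordinate-wise monotonicity of the radii to get $r_\alpha(\theta')\ge r_\alpha(\theta)$ and then checking that $r_\alpha(\theta)$ satisfies the defining inequality for $\theta'$, and both argue (ii) via the same auxiliary vector $\theta''$ with zeroed-out active gaps. Your added remarks---that the shared $i^*$ is what fixes the scale $\sigma_{i^*}$, and that inactive gaps are strictly positive so $\theta''$ can indeed be chosen with population winner $i^*$---are helpful elaborations the paper leaves implicit; the only minor slip is that on $\I_\alpha^c(\theta)$ you have $\Delta_j(\theta'')=\Delta_j(\theta)\le\Delta_j(\theta')$ rather than equality with $\Delta_j(\theta')$, but this does not affect the argument.
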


\begin{proof}
Recall that $r_\alpha(\theta) = \min\{r: \mathbf{S}(\max\{r\sigma,\Delta(\theta)\sigma/(\sigma + \sigma_{i^*})\}) \leq \alpha\}$.

\emph{Proof of (i).} Since $\Delta_j(\theta') \leq \Delta_j(\theta)$ for all $j$,  $\max\{r\sigma,\Delta_j(\theta')\sigma/(\sigma + \sigma_{i^*})\} \leq \max\{r\sigma ,\Delta_j(\theta)\sigma/(\sigma + \sigma_{i^*})\}$ for all $r$, and so $r_\alpha(\theta')$ has to be at least as large as $r_\alpha(\theta)$. However, since for $j\in \I_\alpha(\theta)$ we know $\max\{r_\alpha(\theta)\sigma_j,\Delta_j(\theta)\sigma_j/(\sigma_j + \sigma_{i^*})\} = r_\alpha(\theta)$, then we also have $\max\{r_\alpha(\theta)\sigma_j,\Delta_j(\theta')\sigma_j/(\sigma_j + \sigma_{i^*})\} = r_\alpha(\theta)$ and $\mathbf{S}(\max\{r_\alpha(\theta)\sigma,\Delta(\theta')\sigma/(\sigma + \sigma_{i^*})\}) \leq \alpha$. Consequently, $r_\alpha(\theta)$ satisfies the definition of the active radius for $\theta'$, and so $r_\alpha(\theta') = r_\alpha(\theta)$.

\emph{Proof of (ii).} Let $\theta''$ be any vector with $\argmax_i \theta'' = i^*$ such that:
\[\Delta_j(\theta'')=\begin{cases} \Delta_j(\theta) & \text{if } j\in \I^c_\alpha(\theta);\\
0 & \text{if }  j \in \I_\alpha(\theta).
\end{cases}\]
Then, by claim (i), $r_\alpha(\theta) = r_\alpha(\theta'')$. However, since $\theta''$ has gaps that are at least as small as those of $\theta'$ coordinate-wise, we know $r_\alpha(\theta'') \geq r_\alpha(\theta')$. Therefore, $r_\alpha(\theta) \geq r_\alpha(\theta')$.
\end{proof}

\begin{lemma}[Increasing inactive coordinates cannot increase all their radii (variance-adaptive)]
\label{lemma:increasing_coord-sigmas}
Fix a vector $\theta\in\R^m$ and an index set $\I\subseteq \I_\alpha^c(\theta)$. Let $\theta'$ be any vector such that $\argmax_{j\in[m]}\theta = \argmax_{j\in[m]}\theta' = i^*$, $\Delta_j(\theta) \leq \Delta_j(\theta')$ for all $j\in\I_\alpha^c(\theta) \setminus \I$, and  $\Delta_j(\theta') < \Delta_j(\theta)$ for $j\in\I$. Let 
$i_0 = \argmax_{j\in\I} \Delta_{j}(\theta) \frac{1}{\sigma_{j} + \sigma_{i^*}}.$
Then, $\Delta_{i_0}(\theta)\frac{\sigma_{i_0}}{\sigma_{i_0} + \sigma_{i^*}} \geq \max\{r_\alpha(\theta') \sigma_{i_0}, \Delta_{i_0}(\theta')\frac{\sigma_{i_0}}{\sigma_{i_0} + \sigma_{i^*}}\}$.
\end{lemma}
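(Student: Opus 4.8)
The plan is to follow the proof of Lemma~\ref{lemma:increasing_coord} essentially verbatim, carrying along the variance weights $\sigma_j/(\sigma_j+\sigma_{i^*})$; the hypothesis $\argmax_j\theta_j = \argmax_j\theta'_j = i^*$ is exactly what makes this transparent, since it guarantees that $A_\alpha(\theta)$ and $A_\alpha(\theta')$ are built with the same reference index $i^*$. First I would divide the claimed inequality through by $\sigma_{i_0}>0$, reducing it to
\[
\Delta_{i_0}(\theta)\,\tfrac{1}{\sigma_{i_0}+\sigma_{i^*}} \;\geq\; \max\!\left\{\,r_\alpha(\theta'),\; \Delta_{i_0}(\theta')\,\tfrac{1}{\sigma_{i_0}+\sigma_{i^*}}\right\}.
\]
Since $i_0\in\I$ forces $\Delta_{i_0}(\theta')<\Delta_{i_0}(\theta)$, the second term in the maximum is automatically strictly below the left-hand side, so the whole statement reduces to showing $r_\alpha(\theta')\le r$, where $r:=\Delta_{i_0}(\theta)/(\sigma_{i_0}+\sigma_{i^*})$.

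Next I would argue by contradiction: assume $r_\alpha(\theta')>r$. By the choice $i_0 = \argmax_{j\in\I}\Delta_j(\theta)/(\sigma_j+\sigma_{i^*})$ we have $r = \max_{j\in\I}\Delta_j(\theta)/(\sigma_j+\sigma_{i^*})$, and because $\I\subseteq\I_\alpha^c(\theta)$ every term in this maximum strictly exceeds $r_\alpha(\theta)$, so $r_\alpha(\theta')>r>r_\alpha(\theta)$. I would then compare radius vectors coordinatewise: let $v_j=\max\{r\sigma_j,\ \Delta_j(\theta')\sigma_j/(\sigma_j+\sigma_{i^*})\}$ be the radius vector one obtains for $\theta'$ with $r$ in place of the active radius, and let $w_j=\max\{r_\alpha(\theta)\sigma_j,\ \Delta_j(\theta)\sigma_j/(\sigma_j+\sigma_{i^*})\}$ be the radius vector of $A_\alpha(\theta)$. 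I would show $v_j\ge w_j$ for every $j$ by splitting into the three cases $j\in\I$, $j\in\I_\alpha(\theta)$, and $j\in\I_\alpha^c(\theta)\setminus\I$. For $j\in\I$: $v_j\ge r\sigma_j\ge \Delta_j(\theta)\sigma_j/(\sigma_j+\sigma_{i^*})=w_j$, using the defining maximum for $r$ and that $j$ is inactive at $\theta$. For $j\in\I_\alpha(\theta)$: $v_j\ge r\sigma_j>r_\alpha(\theta)\sigma_j=w_j$. For $j\in\I_\alpha^c(\theta)\setminus\I$: $v_j\ge \Delta_j(\theta')\sigma_j/(\sigma_j+\sigma_{i^*})\ge \Delta_j(\theta)\sigma_j/(\sigma_j+\sigma_{i^*})=w_j$, using the hypothesis $\Delta_j(\theta)\le\Delta_j(\theta')$ on this index set and inactivity of $j$ at $\theta$.

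Finally, since $\mathbf{S}$ is coordinate-wise decreasing and $v\ge w$ coordinatewise, $\mathbf{S}(v)\le \mathbf{S}(w)\le\alpha$, where the last inequality holds because $r_\alpha(\theta)$ witnesses the defining inequality~\eqref{eq:active-radius-sigmas} for $\theta$ (the minimum there is attained by continuity of $\mathbf{S}$). But $v = r\sigma \vee \Delta(\theta')\,\sigma/(\sigma+\sigma_{i^*})$, so $r$ itself satisfies the condition defining $r_\alpha(\theta')$ in~\eqref{eq:active-radius-sigmas}, and $r<r_\alpha(\theta')$ contradicts the minimality of $r_\alpha(\theta')$. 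Hence $r_\alpha(\theta')\le r$, which completes the argument. I expect the only delicate point to be the bookkeeping in the three-case comparison---in particular making sure the weights $\sigma_j/(\sigma_j+\sigma_{i^*})$ never spoil the monotonicity---but each step is just the weighted analogue of a step in the unweighted proof, so no genuinely new idea is required; the one thing worth flagging explicitly is that everything rests on $\theta$ and $\theta'$ sharing the population winner $i^*$.
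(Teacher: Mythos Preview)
Your argument is correct and follows essentially the same contradiction strategy as the paper: assume the inequality fails, deduce $r_\alpha(\theta')>r=\max_{j\in\I}\Delta_j(\theta)/(\sigma_j+\sigma_{i^*})>r_\alpha(\theta)$, compare radius vectors coordinatewise across the three index classes $\I$, $\I_\alpha(\theta)$, and $\I_\alpha^c(\theta)\setminus\I$, and contradict the minimality of $r_\alpha(\theta')$. The only cosmetic difference is that the paper compares the radius vector of $A_\alpha(\theta')$ (built with $r_\alpha(\theta')$ itself) to that of $A_\alpha(\theta)$ and phrases the contradiction via coverage probability, whereas you compare the radius vector built with the intermediate value $r$ and work directly through $\mathbf{S}$; the latter is arguably cleaner since $r_\alpha$ is defined through $\mathbf{S}$ rather than the exact coverage probability.
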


\begin{proof}
Suppose the claim is not true. Then, since $\Delta_{i_0}(\theta') < \Delta_{i_0}(\theta)$, that must mean that
\[r_\alpha(\theta') > \Delta_{i_0}(\theta) \frac{1}{\sigma_{i_0} + \sigma_{i^*}}
= \max_{j\in\I} \Delta_{j}(\theta) \frac{1}{\sigma_{j} + \sigma_{i^*}} > r_\alpha(\theta),\]
where the last inequality follows from the fact that $\I$ are inactive at $\theta$. This means that
\[\max\left\{\Delta_j(\theta') \frac{1}{\sigma_{j} + \sigma_{i^*}}, r_\alpha(\theta')\right\} > \max\left\{\Delta_j(\theta) \frac{1}{\sigma_{j} + \sigma_{i^*}}, r_\alpha(\theta)\right\} \]
for all $j\in\I \cup \I_\alpha(\theta)$. Moreover, for the remaining coordinates, $j\in\I^c_\alpha(\theta)\setminus \I$, we know that 
\[\max\left\{\Delta_j(\theta') \frac{1}{\sigma_{j} + \sigma_{i^*}}, r_\alpha(\theta')\right\} \geq \Delta_j(\theta') \frac{1}{\sigma_{j} + \sigma_{i^*}} \geq \Delta_j(\theta) \frac{1}{\sigma_{j} + \sigma_{i^*}} = \max\left\{\Delta_j(\theta) \frac{1}{\sigma_{j} + \sigma_{i^*}}, r_\alpha(\theta) \right\}.\]
In conclusion, the radius of the confidence region $A_\alpha(\theta')$ is at least as large as the radius of the confidence region $A_\alpha(\theta)$ in all coordinates $j$, and for some coordinates it is strictly larger. That means $P(X\in A_\alpha(\theta')) > P(X\in A_\alpha(\theta)) \geq 1-\alpha$, which in turn means that we could decrease the radius $r_\alpha(\theta')$ while maintaining the coverage at level $1-\alpha$. This contradicts the assumption that $r_\alpha(\theta')$ is minimal, as per Eq.~\eqref{eq:active-radius-sigmas}; therefore, the claim of the lemma must be correct.
\end{proof}

\subsection{Proof of Lemma \ref{lemma:main-lemma-sigmas}}

\mainlemmasigmas*

\begin{proof}
$(\Leftarrow)$ We know that for some $t^*\geq t$ and $i^*\in[m]$, $|\X_{\ihat}-t| \leq r_\alpha(\theta^{(t,t^*,i^*)}) \sigma_{\ihat}$ and $|\X_j-t^*| \leq r_\alpha(\theta^{(t,t^*,i^*)}) \sigma_{j}$ for $j\in\{i^*\}\cup \{k: X_k \geq t^*\}$. By the definition of the projection interval $\widehat C_{\ihat}^\alpha$, it suffices to show that $\X \in A_\alpha(\theta^{(t,t^*,i^*)})$, since $\theta^{(t,t^*,i^*)}_{\ihat} = t$. In other words, we show that we have $|\X_j-\theta^{(t,t^*,i^*)}_j| \leq \max\{r_\alpha(\theta^{(t,t^*,i^*)}) \sigma_j, \Delta_j(\theta^{(t,t^*,i^*)}) \frac{\sigma_j}{\sigma_j + \sigma_{i^*}}\}$ for all $j\in[m]$. This is true for coordinates $\ihat$ and $j\in \{i^*\} \cup \{k: X_k \geq t^*\}$ by assumption:
\[
|\X_{\ihat} - \theta^{(t,t^*,i^*)}_{\ihat}| = |\X_{\ihat} - t| \leq r_\alpha(\theta^{(t,t^*,i^*)}) \sigma_{\ihat}; \quad |\X_{j} - \theta^{(t,t^*,i^*)}_{j}| = |\X_{j} - t^*| \leq r_\alpha(\theta^{(t,t^*,i^*)}) \sigma_{j}.
\]
For all other coordinates $j$, by construction we have
\begin{equation*}
|\X_j - \theta^{(t,t^*,i^*)}_j| = \Delta_j(\theta^{(t,t^*,i^*)}) \frac{\sigma_j}{\sigma_j + \sigma_{i^*}}.
\end{equation*}
Therefore, we have $|\X_j-\theta^{(t,t^*,i^*)}_j| \leq \max\{r_\alpha(\theta^{(t,t^*,i^*)}), \Delta_j( \theta^{(t,t^*,i^*)}_j)\frac{\sigma_j}{\sigma_j + \sigma_{i^*}}\}$ for all $j\in[m]$, and thus $X\in A_\alpha(\theta^{(t,t^*,i^*)})$.\\

\noindent $(\Rightarrow)$ The fact that $t\in \widehat C_{\ihat}^\alpha$ means that  there exists some $\theta\in\R^m$ with $\theta_{\ihat}=t$ for which $\X \in A_\alpha(\theta)$. Let $i^*$ and $t^*$ denote the index and value of the population maximum at $\theta$.
We use Lemma \ref{lemma:winner_active-sigmas}, which says that $\ihat$ and $\{k: X_k \geq t^*\}$ must be active indices for any $\theta$ such that $\X\in A_\alpha(\theta)$. Therefore, we know
\begin{equation*}
|\X_{\ihat} - \theta_{\ihat}| = |\X_{\ihat} - t| \leq  r_\alpha(\theta) \sigma_{\ihat},
\end{equation*}
and for $j\in \{k: X_k \geq t^*\}$,
$$|X_j - t^*| = X_j - t^* \leq X_j - \theta_j \leq r_\alpha(\theta) \sigma_j.$$
Since the population winner has $\Delta_{i^*}(\theta)=0$ and is thus also active, we have
\begin{equation}
\label{eq:rtheta3}
|\X_{i^*} - \theta_{i^*}| = |\X_{i^*} - t^*| \leq  r_\alpha(\theta) \sigma_{i^*}.
\end{equation}
Therefore, if we show $r_\alpha(\theta^{(t,t^*,i^*)}) \geq r_\alpha(\theta)$ the proof is complete, so now we focus on showing this fact.

Lemma \ref{lemma:r_only_inactive-sigmas} (ii) gives a key characterization of the active radius: $r_\alpha(\theta)$ is non-increasing in the \emph{inactive} gaps. Therefore, if we prove that $\theta^{(t,t^*,i^*)}$ has the smallest inactive gaps of all $\theta$ with $\theta_{\ihat} = t$ and $\theta_{i^*} = t^*$ such that $X\in A_\alpha(\theta)$, then the result follows. Formally, if $\Delta_j(\theta) \geq \Delta_j(\theta^{(t,t^*,i^*)})$ holds for all $j\in \I^c_\alpha(\theta^{(t,t^*,i^*)})$, then $r_\alpha(\theta^{(t,t^*,i^*)}) \geq r_\alpha(\theta)$.

Suppose this is not true: for some $j\in \I^c_\alpha( \theta^{(t,t^*,i^*)})$, $\Delta_j(\theta) < \Delta_j( \theta^{(t,t^*,i^*)})$ (note, $X_j< t^*$ because all $\{j:X_j \geq t^*\}$ are in $\I_\alpha(\theta^{(t,t^*,i^*)})$). This assumption would imply that $\theta_j >  \theta^{(t,t^*,i^*)}_j$; therefore,
\begin{equation*}
\label{eq:inactive_gap-sigmas}
\theta_j - \X_j >  \theta^{(t,t^*,i^*)}_j - \X_j = \Delta_j(\theta^{(t,t^*,i^*)}) \frac{\sigma_j}{\sigma_j + \sigma_{i^*}}.
\end{equation*}
Lemma \ref{lemma:increasing_coord-sigmas} shows that there must be \emph{at least one} $j$ for which $\Delta_j(\theta) < \Delta_j(\theta^t)$, such that $\Delta_j( \theta^t)/2 \geq  \max\{r_\alpha(\theta), \Delta_j(\theta)/2\}$. As a result, there must exist a coordinate $j$ such that $\theta_j - \X_j > \max\{r_\alpha(\theta), \Delta_j(\theta)/2\}$, which contradicts the claim that $X\in A_\alpha(\theta)$. Therefore, we must in fact have $\Delta_j(\theta) \geq \Delta_j( \theta^t)$ for all $j\in \I^c_\alpha( \theta^t)$.

Finally, $r_\alpha( \theta^t) \geq r_\alpha(\theta)$ follows by invoking Lemma \ref{lemma:r_only_inactive-sigmas} (ii). Going back to Eq. \eqref{eq:rtheta3} and applying this inequality completes the proof.
\end{proof}

\end{document}